\DeclareMathOperator*{\argmin}{arg\,min}
\theoremstyle{plain}
\newtheorem{theorem}{Theorem}[section]
\newtheorem{lemma}[theorem]{Lemma}
\newtheorem{corollary}[theorem]{Corollary}
\theoremstyle{definition}
\newtheorem{definition}[theorem]{Definition}
\theoremstyle{remark}
\newtheorem{remark}[theorem]{Remark}
\icmltitlerunning{\texttt{AAggFF}: Adaptive Aggregation for Fair Federated Learning}
\begin{document}
\twocolumn[
\icmltitle{Pursuing Overall Welfare in Federated Learning \\ through Sequential Decision Making}

\icmlsetsymbol{equal}{*}
\begin{icmlauthorlist}
\icmlauthor{Seok-Ju Hahn}{unistie}
\icmlauthor{Gi-Soo Kim}{unistie,unistgsai}
\icmlauthor{Junghye Lee}{snutemep,snuprac,snuier}
\end{icmlauthorlist}

\icmlaffiliation{unistie}{Department of Industrial Enginnering, Ulsan National Institute of Science and Technology (UNIST), Ulsan, South Korea}
\icmlaffiliation{unistgsai}{Artificial Intelligence Graduate School, Ulsan National Institute of Science and Technology (UNIST), Ulsan, South Korea}
\icmlaffiliation{snutemep}{Technology Management, Economics and Policy Program, Seoul National University (SNU), Seoul, South Korea}
\icmlaffiliation{snuprac}{Graduate School of Engineering Practice, Seoul National University (SNU), Seoul, South Korea}
\icmlaffiliation{snuier}{Institute of Engineering Research, Seoul National University (SNU), Seoul, South Korea}

\icmlcorrespondingauthor{Junghye Lee}{junghye@snu.ac.kr}
\icmlcorrespondingauthor{Gi-Soo Kim}{gisookim@unist.ac.kr}

\icmlkeywords{Machine Learning, ICML}
\vskip 0.3in
]



\printAffiliationsAndNotice{}  

\begin{abstract}
    In traditional federated learning, a single global model cannot perform equally well for all clients. 
    Therefore, the need to achieve the \textit{client-level fairness} in federated system has been emphasized, which can be realized by modifying the static aggregation scheme for updating the global model to an adaptive one, in response to the local signals of the participating clients.
    Our work reveals that existing fairness-aware aggregation strategies can be unified into an online convex optimization framework, in other words, a central server's \textit{sequential decision making} process.
    To enhance the decision making capability, we propose simple and intuitive improvements for suboptimal designs within existing methods, presenting \texttt{AAggFF}. 
    Considering practical requirements, we further subdivide our method tailored for the \textit{cross-device} and the \textit{cross-silo} settings, respectively. 
    Theoretical analyses guarantee sublinear regret upper bounds for both settings: $\mathcal{O}(\sqrt{T \log{K}})$ for the cross-device setting, and $\mathcal{O}(K \log{T})$ for the cross-silo setting, with $K$ clients and $T$ federation rounds. 
    Extensive experiments demonstrate that the federated system equipped with \texttt{AAggFF} achieves better degree of client-level fairness than existing methods in both practical settings. 
    Code is available at \href{https://github.com/vaseline555/AAggFF}{https://github.com/vaseline555/AAggFF}.
\end{abstract}

\section{Introduction}
\label{sec:intro}
    Federated Learning (FL) has been posed as an effective strategy to acquire a global model without centralizing data, therefore with no compromise in privacy \cite{fedavg}.
    It is commonly assumed that the central server coordinates the whole FL procedure by repeatedly \textit{aggregating} local updates from participating $K$ clients during $T$ rounds.
    
    Since each client updates the copy of a global model with its own data, variability across clients' data distributions causes many problems \cite{prob_fl, challenge_fl}.
    The \textit{client-level fairness} \cite{clientlevelfairness} is one of the main problems affected by such a statistical heterogeneity \cite{prob_fl, challenge_fl, afl, qffl}. 
    Although the performance of a global model is high in average, some clients may be more benefited than others, resulting in violation of the client-level fairness. 
    In this situation, there inevitably exists a group of clients who cannot utilize the trained global model due to its poor performance.
    This is a critical problem in practice since the underperformed groups may lose motivation to participate in the federated system.
    To remedy this problem, previous works \cite{afl, qffl, term, fedmgda, propfair} proposed to modify the static aggregation scheme into an \textit{adaptive aggregation} strategy, according to given local signals (e.g., losses or gradients). 
    In detail, the server \textit{re-weights local updates} by assigning larger \textit{mixing coefficients} to higher local losses.
    
    When updating the mixing coefficients, however, \textit{only a few bits are provided to the server}, compared to the update of a model parameter.
    For example, suppose that there exist $K$ clients in the federated system, each of which has $N$ local samples. 
    When all clients participate in each round, $KN$ samples are used effectively for updating a new global model $\boldsymbol{\theta}$.
    On the contrary, only $K$ bits (e.g., local losses: $F_1(\boldsymbol{\theta}), ..., F_K(\boldsymbol{\theta})$) are provided to the server for an update of mixing coefficients.
    This is aggravated in cases where $K$ is too large, thus client sampling is inevitably required. 
    In this case, the server is provided with far less than $K$ signals, which hinders faithful update of the mixing coefficients.
    
    For sequentially updating a status in this \textit{sample-deficient} situation, the Online Convex Optimization (OCO) framework is undoubtedly the best solution.
    Interestingly, we discovered that most existing adaptive aggregation strategies can be readily unified into the OCO framework. 
    Starting from this unification result, we propose an improved design for a fair FL algorithm in the view of sequential decision making.
    Since there exist OCO algorithms specialized for the setting where the decision space is a simplex (i.e., same as the domain of the mixing coefficient), these may be adopted to FL setting with some modifications for practical constraints. 
    
    In practice, FL is subdivided into two settings: \textit{cross-silo} setting and \textit{cross-device} setting \cite{prob_fl}.
    For $K$ clients and $T$ training rounds, each setting requires a different dependency on $K$ and $T$.
    In the cross-silo setting, the number of clients (e.g., institutions) is small and usually less than the number of rounds (i.e., $K < T$).
    e.g., $K=20$ institutions with $T=200$ rounds~\cite{real_silo}.
    On the other hand, in the cross-device setting, the number of clients (e.g., mobile devices) is larger than the number of rounds ($K > T$). 
    e.g., $K=1.5\times10^6$ with $T=3,000$ rounds~\cite{real_device}.
    In designing an FL algorithm, these conditions should be reflected for the sake of practicality.
    
    \paragraph{Contributions} 
    We propose \texttt{AAggFF}, a sequential decision making framework for the central server in FL 
    tailored for inducing client-level fairness in the system.
    The contributions of our work are summarized as follows.
    \begin{enumerate}
      \item[$\bullet$] We unify existing fairness-aware adaptive aggregation methods into an OCO framework 
      and propose better online decision making designs for pursuing client-level fairness by the central server. 
      (Section \ref{sec:backgrounds})  
      \item[$\bullet$] We propose \texttt{AAggFF}, which is designed to enhance the client-level fairness, 
      and further specialize our method into two practical settings:
      \texttt{AAggFF-S} for cross-silo FL and \texttt{AAggFF-D} for cross-device FL. 
      (Section \ref{sec:proposed})
      \item[$\bullet$] We provide regret analyses on the behavior of two algorithms, 
      \texttt{AAggFF-S} and \texttt{AAggFF-D}, presenting sublinear regrets. 
      (Section \ref{sec:analyses}) 
      \item[$\bullet$] We evaluate \texttt{AAggFF} on extensive benchmark datasets for realistic FL scenarios with other baselines.
      \texttt{AAggFF} not only boosts the worst-performing clients but also maintains overall performance. 
      (Section \ref{sec:experiments})
    \end{enumerate}

\section{Related Works}
\label{sec:rel_works}
\paragraph{Client-Level Fairness in Federated Learning} 
    The statistical heterogeneity across clients often causes non-uniform performances of a single global model on participating clients, which is also known as the violation of client-level fairness.
    Fairness-aware FL algorithms aim to eliminate such inequality to achieve uniform performance across all clients.
    There are mainly two approaches to address the problem \cite{clientlevelfairness}: 
    a single model approach, and a personalization approach\nocite{superfed}.
    This paper mainly focuses on the former, which is usually realized by modifying the FL objective,
such as a minimax fairness objective~\cite{afl} (which is also solved by multi-objective optimization \cite{fedmgda} and is also modified to save a communication cost \cite{drfa}), 
    alpha-fairness \cite{alphafair} objective \cite{qffl}, 
    suppressing outliers (i.e., clients having high losses) by tilted objective \cite{term},
    and adopting the concept of proportional fairness to reach Nash equilibrium in performance distributions \cite{propfair}.
    While the objective can be directly aligned with existing notions of fairness, 
    it is not always a standard for the design of a fair FL algorithm.
    Notably, most of works share a common underlying principle: \textit{assigning more weights to a local update having larger losses}. 

    \begin{definition} (Client-Level Fairness; Definition 1 of~\cite{qffl}, Section 4.2 of~\cite{clientlevelfairness})
    We informally define the notion of client-level fairness in FL as the status where a trained global model yields uniformly good performance across all participating clients.
    Note that uniformity can be measured by the spread of performances.
    \end{definition}

\paragraph{Online Decision Making}
    The OCO framework is designed for making \textit{sequential decisions} with the best utilities, having solid theoretical backgrounds.
    It aims to minimize the cumulative mistakes of a decision maker (e.g., central sever), 
    given a response of the environment (e.g., losses from clients) for finite rounds $t\in[T]$.
    The cumulative mistakes of the learner are usually denoted as the cumulative regret (see (\ref{eq:regret})), 
    and the learner can achieve sublinear regret in finite rounds using well-designed OCO algorithms \cite{oco1, oco2, oco3}.
    In designing an OCO algorithm, two main frameworks are mainly considered: 
    Online Mirror Descent (OMD) \cite{md, omd, bregmanmd} 
    and Follow-The-Regularized-Leader (FTRL) \cite{ftrl1, ftrl2, ftrl3, ftrl4}.
    One of popular instantiations of both frameworks is the Online Portfolio Selection (OPS) algorithm, of which decision space is restricted to a probability simplex.
    The universal portfolio algorithm is the first that yields an optimal theoretical regret, $\mathcal{O}(K \log{T})$) despite its heavy computation ($\mathcal{O}(K^4 T^{14})$) \cite{up}, 
    the Online Gradient Descent \cite{ogd} and the Exponentiated Gradient (EG) \cite{eg} show slightly worse regrets (both are $\mathcal{O}(\sqrt{T})$), but can be executed in linear runtime in practice ($\mathcal{O}(K)$).
    Plus, the Online Newton Step (ONS) \cite{ons1, ons2} presents logarithmic regret with quadratic runtime in $K$.
    Since these OPS algorithms are proven to perform well when the decision is a probability vector, we adopt them for finding adaptive mixing coefficients to achieve performance fairness in FL.
    Apart from OPS, we refer interested readers to \textit{fair resource allocation}, which allocates limited resources to users in a network system~\cite{kelly1998rate,altman2008generalized,joe2013multiresource} especially in dynamic manner~\cite{si2022enabling,banerjee2023online,talebi2018learning}.
    To the best of our knowledge, we are the first to consider fair FL algorithms under the OCO framework.

\section{Backgrounds}
\label{sec:backgrounds}
\subsection{Mixing Coefficients in Federated Learning}
\label{subsec:mix_coefs}
    \begin{table*}[!h]
\centering
\caption{Summary of Unification Results of Existing Fair FL Methods into an OCO Framework (\ref{eq:omd_objective}), viz. \textit{Remark}~\ref{remark:unification}}
\label{tab:unification}
\resizebox{\textwidth}{!}{%
\begin{tabular}{l|l|llll}
\toprule
\textbf{Method} & 
\textbf{Original Objective} (w.r.t. $\boldsymbol{\theta}$) & 
\multicolumn{1}{l}{\begin{tabular}[l]{@{}l@{}}\textbf{Response}\\ $({r}_i^{(t)})$ \end{tabular}} &
\multicolumn{1}{l}{\begin{tabular}[l]{@{}l@{}}\textbf{Last Decision}\\ $(p_i^{(t)})$ \end{tabular}} &
\multicolumn{1}{l}{\begin{tabular}[l]{@{}l@{}}\textbf{Step Size}\\ $\left(\eta\right)$ \end{tabular}} &
\multicolumn{1}{l}{\begin{tabular}[l]{@{}l@{}}\textbf{New Decision}\\ $(p_i^{(t+1)})$ \end{tabular}} \\ \midrule
\begin{tabular}[l]{@{}l@{}}\texttt{FedAvg} \cite{fedavg}\end{tabular} & 
    $\min_{\boldsymbol{\theta}\in\mathbb{R}^d}
    \sum_{i=1}^K \frac{n_i}{n} F_i\left({\boldsymbol{\theta}}\right)$ & 
    $0$ &
    $n_i/n$ & 
    $1$ & 
    $\propto n_i$ \\
\begin{tabular}[l]{@{}l@{}}\texttt{q-FedAvg} \cite{qffl}\\ (\texttt{AFL} \cite{afl}) \end{tabular} & 
    \begin{tabular}[l]{@{}l@{}} $\min_{\boldsymbol{\theta}\in\mathbb{R}^d}  \sum_{i=1}^K \frac{n_i/n}{q+1} F_i^{q+1}\left(\boldsymbol{\theta}\right)$\\(\texttt{AFL} if $q\rightarrow \infty$) \end{tabular} & 
    $q\log {F}_i(\boldsymbol{\theta}^{(t)})$ & 
    $n_i/n$ & 
    $1$ & 
    $\propto n_i {F}_i^q\left(\boldsymbol{\theta}^{(t)}\right)$ \\
\begin{tabular}[l]{@{}l@{}}\texttt{TERM} \cite{term}\end{tabular} & 
    $\min_{\boldsymbol{\theta}\in\mathbb{R}^d}
    \frac{1}{\lambda} \log (\sum_{i=1}^K \frac{n_i}{n} \exp({\lambda F_i({\boldsymbol{\theta}})}))$ & 
    $F_i({\boldsymbol{\theta}}^{(t)})$ & 
    $n_i/n$ & 
    $\frac{1}{\lambda}$ & 
    $\propto n_i \exp\left(\lambda {F}_i\left(\boldsymbol{\theta}^{(t)}\right)\right)$ \\
\begin{tabular}[l]{@{}l@{}}\texttt{PropFair} \cite{propfair}\end{tabular} & 
    $\min_{\boldsymbol{\theta}\in\mathbb{R}^d} -\sum_{i=1}^K \frac{n_i}{n} \log(M-F_i(\boldsymbol{\theta}))$ & 
    $-\log (M - F_i(\boldsymbol{\theta}^{(t)}))$ & 
    $n_i/n$ & 
    $1$ & 
    $\propto \frac{n_i}{M - F_i\left(\boldsymbol{\theta}^{(t)}\right)}$ \\ \bottomrule
\end{tabular}%
}
\end{table*}

    The canonical objective of FL is given as follows:
    \begin{equation}
    \begin{gathered}
    \label{eq:fl_obj}
        \min_{\boldsymbol{\theta}\in\mathbb{R}^d}
        F\left(\boldsymbol{\theta}\right)
        =
        \sum\nolimits_{i=1}^K p_i F_i\left({\boldsymbol{\theta}}\right).
    \end{gathered}    
    \end{equation}
    For $K$ clients, the FL objective aims to minimize the composite objectives, where client $i$'s local objective is $F_i\left({\boldsymbol{\theta}}\right)$, 
    weighted by a corresponding \textit{mixing coefficient} $p_i \geq 0$ ($\sum_{i=1}^K p_i=1$),
    which is usually set to be a \textit{static} value proportional to the sample size $n_i$: 
    e.g., $p_i= \frac{n_i}{n}, n=\sum_{j=1}^K n_j$.
    Each local objective, $F_i\left({\boldsymbol{\theta}}\right)= \frac{1}{n_i} \sum_{k=1}^{n_i} \mathcal{L}\left(\xi_k;\boldsymbol{\theta}\right)$, is defined as the average of per-sample training loss $\mathcal{L}\left(\cdot;\boldsymbol{\theta}\right)$ calculated from the local dataset, 
    $\mathcal{D}_i=\{\xi_k\}_{k=1}^{n_i}$.
    Denote $\Vert\cdot\Vert_p$ as an $L_p$-norm
    and $\Delta_{K-1}$ as a probability simplex where $\Delta_{K-1}=\left\{\boldsymbol{q} \in \mathbb{R}^K: q_i \geq 0,\Vert \boldsymbol{q} \Vert_1 = 1\right\}$. 
    Note that the mixing coefficient is a member of $\Delta_{K-1}$.
    
    In vanilla FL, the role of the server to solve (\ref{eq:fl_obj}) is to naively add up local updates into a new global model by weighting each update with the \textit{static} mixing coefficient proportional to $n_i$.
    As the fixed scheme often violates the client-level fairness, 
    the server should use \textit{adaptive} mixing coefficients to pursue overall welfare across clients.
    This can be modeled as an optimization w.r.t. $\boldsymbol{p}\in\Delta_{K-1}$, apart from (\ref{eq:fl_obj}).

\subsection{Online Convex Optimization as a Unified Language}
\label{subsec:oco_lang}
    To mitigate the performance inequalities across clients, 
    adaptive mixing coefficients can be estimated in response to local signals 
    (e.g., local losses of a global model).
    Intriguingly, the adaptive aggregation strategies in existing fair FL methods \cite{fedavg, afl, qffl, term, propfair} can be readily \textit{unified into one framework}, borrowing the language of OCO.
    \begin{remark}
    \label{remark:unification}  
    Suppose we want to solve a minimization problem defined in (\ref{eq:omd_objective}).
    For all $t\in[T]$, it aims to minimize a \textit{decision loss} $\ell^{(t)}\left(\boldsymbol{p}\right) = -\left\langle \boldsymbol{p}, \boldsymbol{r}^{(t)} \right\rangle$ 
    (where $\left\langle \cdot, \cdot \right\rangle$ is an inner product)
    defined by a \textit{response} $\boldsymbol{r}^{(t)}\in\mathbb{R}^K$ 
    and a \textit{decision}  $\boldsymbol{p}\in\Delta_{K-1}$, 
    with a \textit{regularizer} $R\left(\boldsymbol{p}\right)$ having a constant \textit{step size} $\eta\in\mathbb{R}_{\geq0}$.
    \begin{equation}
    \begin{aligned}
    \label{eq:omd_objective}
        \boldsymbol{p}^{(t+1)}
        =   
        \argmin_{\boldsymbol{p} \in \Delta_{K-1}} \ell^{(t)}\left(\boldsymbol{p}\right)
        + \eta R\left(\boldsymbol{p}\right)
    \end{aligned}    
    \end{equation} 
    \end{remark}

    As long as the regularizer $R\left(\boldsymbol{p}\right)$ in the Remark~\ref{remark:unification} is fixed as the negative entropy, i.e., $R\left(\boldsymbol{p}\right)=\sum_{i=1}^K p_i \log p_i$, this subsumes aggregation strategies proposed in \texttt{FedAvg} \cite{fedavg}, \texttt{AFL} \cite{afl}, \texttt{q-FFL} \cite{qffl}, \texttt{TERM} \cite{term}, and \texttt{PropFair} \cite{propfair}.
    It has an update as follows.
    \begin{equation}
    \begin{aligned}
    \label{eq:eg_update}
        p^{(t+1)}_i \propto {p^{(t)}_i \exp\left( r^{(t)}_i / \eta\right)} 
    \end{aligned}    
    \end{equation}
    This is widely known as EG~\cite{eg}, a special realization of OMD~\cite{md, omd, bregmanmd}.
    We summarize how existing methods can be unified under this OCO framework in Table~\ref{tab:unification}.
    The detailed derivations of mixing coefficients from each method are provided in Appendix~\ref{app:unification}.
    
    To sum up, we can interpret the aggregation mechanism in FL is secretly a result of \textit{the server’s sequential decision making} behind the scene.
    Since the sequential learning scheme is \textit{well-behaved in a sample-deficient setting}, 
    adopting OCO is surely a suitable tactic for the server in that \textit{only a few bits are provided to update the mixing coefficients} in each FL round, e.g, the number of local responses collected from the clients is at most $K$.
    However, existing methods have not been devised with sequential decision making in mind.
    Therefore, one can easily find suboptimal designs inherent in existing methods from an OCO perspective.

\subsection{Sequential Probability Assignment}
\label{subsec:prob_assign}
    To address the client-level fairness, the server should make an adaptive mixing coefficient vector, 
    $\boldsymbol{p}^{(t)}\in\Delta_{K-1}$, for each round $t\in[T]$.
    In other words, the server needs to assign appropriate probabilities sequentially to local updates in every FL communication round.
    
    Notably, this fairly resembles OPS, which seeks to maximize an investor's cumulative profits on a set of $K$ assets during $T$ periods, 
    by assigning his/her wealth $\boldsymbol{p}\in\Delta_{K-1}$ to each asset every time.
    In the OPS, the investor observes a price of all assets, $\boldsymbol{r}^{(t)}\in\mathbb{R}^K$ for each time $t\in[T]$ 
    and accumulates corresponding wealth according to the portfolio $\boldsymbol{p}^{(t)}\in\Delta_{K-1}$. 
    After $T$ periods, achieved cumulative profits is represented as $\prod_{t=1}^T \left(1 + \left\langle\boldsymbol{p}^{(t)},\boldsymbol{r}^{(t)}\right\rangle\right)$, 
    or in the form of logarithmic growth ratio, 
    $\sum_{t=1}^T \log\left(1+\left\langle\boldsymbol{p}^{(t)},\boldsymbol{r}^{(t)}\right\rangle\right)$.
    In other words, one can view that OPS algorithms adopt negative logarithmic growth as a decision loss.
    
    \begin{definition} (Negative Logarithmic Growth as a Decision Loss) 
    For all $t\in[T]$, define a decision loss $\ell^{(t)}:\Delta_{K-1} \times \mathbb{R}^K \rightarrow \mathbb{R}$ as follows.
    \begin{equation}
    \begin{gathered}
    \label{eq:decision_loss}
        \ell^{(t)}(\boldsymbol{p}) = -\log(1+\langle\boldsymbol{p}, \boldsymbol{r}^{(t)}\rangle),
    \end{gathered}
    \end{equation}    
    where $\boldsymbol{p}$ is a decision vector in $\Delta_{K-1}$ and $\boldsymbol{r}^{(t)}$ is a response vector given at time $t$.
    \end{definition}
    
    Again, the OPS algorithm can serve as a metaphor for the central server's fairness-aware online decision making in FL. 
    For example, one can regard a response (i.e., local losses) of $K$ clients at a specific round $t$ the same as returns of assets on the day $t$. 
    Similarly, by considering cumulative losses (i.e., cumulative wealth) achieved until $t$, 
    the server can determine the next mixing coefficients (i.e., portfolio ratios) in $\Delta_{K-1}$. 
    In the same context, the negative logarithmic growth can also be adopted as the decision loss.
    Accordingly, we can adopt well-established OPS strategies for achieving client-level fairness in FL.
    
    It is also intuitive that the negative logarithmic growth is a soft approximation of maximum operator (i.e., log-sum-exp trick, or soft-max).
    Thus, the minimization of (\ref{eq:decision_loss}) w.r.t. $\boldsymbol{p}$ is equal to the maximization of the soft-max approximation of $\sum_{i=1}^Kp_i r_i^{(t)}\equiv\sum_{i=1}^Kp_i F_i(\boldsymbol{\theta}^{(t)})$ w.r.t. $\boldsymbol{p}$, when ignoring the constant and the response transformation is monotonically non-decreasing.
    In turn, along with the minimization objective in terms of parameter as in (\ref{eq:fl_obj}), we can see the overall objective of the federated system can be represented as \textit{the approximation of max-min optimization} problem.
    
    Including OPS, a de facto standard objective for OCO is to minimize the regret defined in (\ref{eq:regret}), 
    with regard to the best decision in hindsight, $\boldsymbol{p}^\star\triangleq\argmin_{\boldsymbol{p}\in\Delta_{K-1}}\sum_{t=1}^T\ell^{(t)}(\boldsymbol{p})$,
    given all decisions $\{ \boldsymbol{p}^{(1)}, ..., \boldsymbol{p}^{(T)} \}$. \cite{oco1, oco2, oco3}
    \begin{equation}
    \begin{gathered}
    \label{eq:regret}
        \text{Regret}^{(T)}(\boldsymbol{p}^{\star}) 
        = 
        \sum\nolimits_{t=1}^T 
        \ell^{(t)}(\boldsymbol{p}^{(t)}) 
        - 
        \sum\nolimits_{t=1}^T \ell^{(t)}\left(\boldsymbol{p}^{\star}\right)
    \end{gathered}
    \end{equation}    
    In finite time $T$, an online decision making strategy should guarantee that the regret grows sublinearly.
    Therefore, when OPS strategies are modified for the fair FL, 
    we should check if the strategy can guarantee vanishing regret upper bound in $T$.
    Besides, we should also consider \textit{the dependency on $K$} due to practical constraints of the federated system.

\section{Proposed Methods}
\label{sec:proposed}

\subsection{Improved Design for Better Decision Making}
\label{subsec:better_design}
    From the Remark~\ref{remark:unification} and Table~\ref{tab:unification}, one can easily notice suboptimal designs of existing methods in terms of OCO, as follows.
    \begin{enumerate}[a)] 
    \label{list:existing_problems}
        \item \label{item:problem1} Existing methods are \textit{stateless} in making a new decision, $p^{(t+1)}_i$. 
        The previous decision is ignored as a fixed value ($p^{(t)}_i=n_i/n$) in the subsequent decision making.
        This naive reliance on static coefficients still runs the risk of violating client-level fairness.  
        \item \label{item:problem2} The decision maker sticks to a \textit{fixed} and \textit{arbitrary} step size $\eta$, or a \textit{fixed} regularizer $R(\boldsymbol{p})$ across $t\in[T]$, which can significantly affect the performance of OCO algorithms and should be manually selected. 
        \item \label{item:problem3} The decision loss is neither \textit{Lipschitz continuous} nor \textit{strictly convex}, which is related to achieving a sublinear regret.
    \end{enumerate}

    As a remedy for handling \ref{item:problem1} and \ref{item:problem2}, 
    the OMD objective for the server (i.e., (\ref{eq:omd_objective})) can be replaced as follows. 
    \begin{equation}
    \begin{aligned}
    \label{eq:ftrl_objective}
        \boldsymbol{p}^{(t+1)} 
        = \argmin_{\boldsymbol{p} \in \Delta_{K-1}} \sum\nolimits_{\tau=1}^{t} \ell^{(\tau)}\left(\boldsymbol{p}\right) 
        + R^{(t+1)}\left(\boldsymbol{p}\right) \\
    \end{aligned}
    \end{equation}
    This is also known as FTRL objective \cite{ftrl1, ftrl2, ftrl3, ftrl4}, 
    which is inherently a \textit{stateful} sequential decision making algorithm 
    that adapts to histories of decision losses, $\sum_{\tau=1}^{t} \ell^{(\tau)}\left(\boldsymbol{p}\right)$,
    where $\ell^{(t)}:\Delta_{K-1} \times \mathbb{R}^K \rightarrow \mathbb{R}$, 
    with the \textit{time-varying} regularizer $R^{(t)}:\Delta_{K-1} \rightarrow \mathbb{R}$. 
    Note that the time-varying regularizer is sometimes represented as, $\eta^{(t+1)}R(\boldsymbol{p})$, 
    a fixed regularizer $R\left(\boldsymbol{p}\right)$ multiplied by a \textit{time-varying} step size, $\eta^{(t+1)}\in\mathbb{R}_{\geq0}$, which can later be automatically determined from the regret analysis
    (see e.g., Remark~\ref{remark:closed_form}).
    
    Additionally, when equipped with the negative logarithmic growth as a decision loss (i.e., eq.~\eqref{eq:decision_loss}), 
    the problem \ref{item:problem3} can be addressed due to its strict convexity and Lipscthiz continuity. 
    (See Lemma \ref{lemma:lipschitz})
    Note that when the loss function is convex, we can run the FTRL with a linearized loss 
    (i.e., $\tilde\ell^{(t)}\left(\boldsymbol{p}\right) 
    =\left\langle \boldsymbol{p}, \boldsymbol{g}^{(t)} \right\rangle$ 
    where $\boldsymbol{g}^{(t)}=\nabla\ell^{(t)}\left(\boldsymbol{p}^{(t)}\right)$). 
    This is useful in that a closed-form update can be obtained thanks to the properties of the Fenchel conjugate (see Remark \ref{remark:closed_form}).

\subsection{\texttt{AAggFF}: Adaptive Aggregation for Fair Federated Learning}
\label{subsec:aaggff}
    Based on the improved objective design derived from the FTRL, we now introduce our methods, \texttt{AAggFF},
    an acronym of \textbf{\underline{A}}daptive \textbf{\underline{Agg}}regation for \textbf{\underline{F}}air \textbf{\underline{F}}ederated Learning).
    Mirroring the practical requirements of FL, we further subdivide into two algorithms: 
    \texttt{AAggFF-S} for the cross-silo setting and \texttt{AAggFF-D} for the cross-device setting.

\subsubsection{\texttt{AAggFF-S}: Algorithm for the Cross-Silo Federated Learning}
\label{subsubsec:aaggff_s}
    In the cross-silo setting, it is typically assumed that \textit{all} $K$ clients participate in $T$ rounds,
    since there are a moderately small number of clients in the federated system.
    Therefore, the server's stateful decision making is beneficial for enhancing overall welfare across federation rounds.
    This is also favorable since existing OPS algorithms can be readily adopted.

\paragraph{Online Newton Step \cite{ons1, ons2}} 
    The ONS algorithm updates a new decision as follows ($\alpha$ and $\beta$ are constants to be determined).
    \begin{equation}
    \begin{gathered}
    \label{eq:ons}
        \boldsymbol{p}^{(t+1)}
        =
        \argmin_{\boldsymbol{p} \in \Delta_{K-1}} \sum\nolimits_{\tau=1}^{t} 
        \tilde\ell^{(\tau)}\left(\boldsymbol{p}\right)
        +
        \frac{\alpha}{2} \Vert \boldsymbol{p} \Vert_2^2 \\
        +
        \frac{\beta}{2} \sum\nolimits_{\tau=1}^{t} 
        (\langle \boldsymbol{g}^{(\tau)}, \boldsymbol{p} - \boldsymbol{p}^{(\tau)}\rangle)^2
    \end{gathered}
    \end{equation}
    The ONS can be reduced to the FTRL objective introduced in (\ref{eq:ftrl_objective}).
    It can be retrieved when we use a linearized loss, 
    $\tilde\ell^{(t)}\left(\boldsymbol{p}\right)
    =
    \left\langle \boldsymbol{p}, \boldsymbol{g}^{(t)} \right\rangle$,
    and the time-varying proximal regularizer, defined as 
    $R^{(t+1)}\left(\boldsymbol{p}\right)=\frac{\alpha}{2}\Vert\boldsymbol{p}\Vert^2_2 + 
    \frac{\beta}{2}\sum_{\tau=1}^{t} \left( \left\langle \boldsymbol{g}^{(\tau)}, \boldsymbol{p} - \boldsymbol{p}^{(\tau)} \right\rangle \right)^2$.
    
    We choose ONS in that its regret is optimal in $T$, which is also a dominating constant for the cross-silo FL setting:
    $\mathcal{O}(L_\infty K\log{T})$ regret upper bound, 
    where $L_\infty$ is the Lipschitz constant of decision loss w.r.t. $\Vert\cdot\Vert_\infty$.
    That is, $L_\infty$ should be \textit{finite} for a vanishing regret (see Theorem~\ref{thm:crosssilo}).

\paragraph{Necessity of Bounded Response}
\label{sec:resp_trans}
    Note that the Lipchitz continuity of the negative logarithmic growth as a decision loss is determined as follows.
    \begin{lemma}
    \label{lemma:lipschitz}
    For all $t\in[T]$, suppose each entry of a response vector $\boldsymbol{r}^{(t)}\in\mathbb{R}^K$ is bounded 
    as $r_i^{(t)}\in[C_1,C_2]$ for some constants $C_1$ and $C_2$ satisfying $0<C_1<C_2$.
    Then, the decision loss $\ell^{(t)}$ defined in (\ref{eq:decision_loss}) is $\frac{C_2}{1+C_1}$-Lipschitz continuous 
    in $\Delta_{K-1}$ w.r.t. $\Vert \cdot \Vert_\infty$.
    \end{lemma}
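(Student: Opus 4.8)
The plan is to reduce the claim to a uniform upper bound on $\Vert\nabla\ell^{(t)}(\boldsymbol{p})\Vert_\infty$ over $\Delta_{K-1}$, which is exactly the constant $L_\infty$ that the regret analysis in Theorem~\ref{thm:crosssilo} needs. The first and only substantive step is to control the denominator inside the logarithm in \eqref{eq:decision_loss}. For any $\boldsymbol{p}\in\Delta_{K-1}$ the quantity $\langle\boldsymbol{p},\boldsymbol{r}^{(t)}\rangle=\sum_{i=1}^K p_i r_i^{(t)}$ is a convex combination of the entries $r_i^{(t)}$, each of which lies in $[C_1,C_2]$; hence $\langle\boldsymbol{p},\boldsymbol{r}^{(t)}\rangle\in[C_1,C_2]$, and because $C_1>0$ we obtain $1+\langle\boldsymbol{p},\boldsymbol{r}^{(t)}\rangle\ge 1+C_1>0$. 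In particular the logarithm is well-defined and $\ell^{(t)}$ is differentiable on the convex set $\Delta_{K-1}$ (it is also convex there, being $s\mapsto-\log(1+s)$ precomposed with an affine map). The hypothesis $0<C_1$ is used precisely at this point: without a strictly positive floor on the responses the denominator could approach $0$ and no finite Lipschitz constant would exist, which is also why the paper applies a response transformation to force $r_i^{(t)}\in[C_1,C_2]$.

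Next I would differentiate and estimate. A direct computation gives $\nabla\ell^{(t)}(\boldsymbol{p})=-\boldsymbol{r}^{(t)}/\big(1+\langle\boldsymbol{p},\boldsymbol{r}^{(t)}\rangle\big)$, so using $\Vert\boldsymbol{r}^{(t)}\Vert_\infty=\max_i|r_i^{(t)}|=\max_i r_i^{(t)}\le C_2$ (the entries are positive) in the numerator and the lower bound from the previous paragraph in the denominator,
\[
\Vert\nabla\ell^{(t)}(\boldsymbol{p})\Vert_\infty
=\frac{\Vert\boldsymbol{r}^{(t)}\Vert_\infty}{1+\langle\boldsymbol{p},\boldsymbol{r}^{(t)}\rangle}
\le\frac{C_2}{1+C_1}
\qquad\text{for all }\boldsymbol{p}\in\Delta_{K-1},\ t\in[T].
\]
Since this bound is uniform in $\boldsymbol{p}$ and $t$, applying the mean value inequality along the segment joining any two points of the convex set $\Delta_{K-1}$ upgrades it to the Lipschitz estimate with constant $L_\infty=\frac{C_2}{1+C_1}$ — the Lipschitz constant being read off, as throughout the ONS analysis, from the $\ell_\infty$-norm of the gradient, which is the convention under which $L_\infty$ enters the bound $\mathcal{O}(L_\infty K\log T)$. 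This would complete the argument.

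I do not anticipate a genuine obstacle: modulo the positivity observation above, this is essentially a one-line calculus estimate. The two points that deserve care are (i) extracting the sharp denominator constant $1+C_1$ rather than the crude lower bound $1$, since that is what yields the stated constant and keeps the downstream regret tight; and (ii) using the sign information $r_i^{(t)}\ge C_1>0$ when writing $\Vert\boldsymbol{r}^{(t)}\Vert_\infty\le C_2$, so that the bound carries no hidden dependence on the dimension $K$. I would also state once, as a standing convention, that the Lipschitz constant of the decision loss is measured through the $\ell_\infty$-norm of its gradient, so that there is no ambiguity when $L_\infty$ is invoked later in Section~\ref{sec:analyses}.
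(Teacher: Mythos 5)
Your argument is, in substance, the paper's own proof: both come down to bounding the numerator by $C_2$ and the denominator $1+\langle\boldsymbol{p},\boldsymbol{r}^{(t)}\rangle$ below by $1+C_1$, yielding the constant $\frac{C_2}{1+C_1}$. The paper reaches it via the convexity inequality at $\boldsymbol{p}$ together with $\vert a-b\vert\le\max(a,b)$ for nonnegative $a,b$, whereas you bound $\Vert\nabla\ell^{(t)}(\boldsymbol{p})\Vert_\infty=\Vert\boldsymbol{r}^{(t)}\Vert_\infty/(1+\langle\boldsymbol{p},\boldsymbol{r}^{(t)}\rangle)$ directly and invoke the mean value inequality; this is a marginally cleaner packaging of the same estimate, and your identification of where $C_1>0$ is needed is exactly right.

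The one step to be careful about is the final ``upgrade.'' A uniform bound $\Vert\nabla\ell^{(t)}\Vert_\infty\le L$ gives, by H\"{o}lder, $\vert\ell^{(t)}(\boldsymbol{p})-\ell^{(t)}(\boldsymbol{q})\vert\le L\,\Vert\boldsymbol{p}-\boldsymbol{q}\Vert_1$, i.e.\ Lipschitzness w.r.t.\ $\Vert\cdot\Vert_1$, not w.r.t.\ $\Vert\cdot\Vert_\infty$ as the lemma literally states; a $K$-independent $\Vert\cdot\Vert_\infty$-Lipschitz constant is in fact impossible (let half the entries of $\boldsymbol{r}^{(t)}$ equal $C_2$ and half equal $C_1$, and let $\boldsymbol{p},\boldsymbol{q}$ be uniform on the respective halves: the loss gap is $\log\frac{1+C_2}{1+C_1}$ while $\Vert\boldsymbol{p}-\boldsymbol{q}\Vert_\infty=2/K$). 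This is not a defect you introduced --- the paper's proof commits the same conflation when it asserts $\Vert\boldsymbol{p}-\boldsymbol{q}\Vert_\infty=1$ --- and the quantity actually consumed downstream is precisely the gradient sup-norm you bound: the proofs of Theorem~\ref{thm:crosssilo} and Theorem~\ref{thm:crossdevice_full} always pair $\Vert\boldsymbol{g}^{(t)}\Vert_\infty\le L_\infty$ with $\Vert\boldsymbol{p}-\boldsymbol{p}^{(t)}\Vert_1\le 2$ or with strong convexity w.r.t.\ $\Vert\cdot\Vert_1$. So your explicit convention of reading $L_\infty$ off $\Vert\nabla\ell^{(t)}\Vert_\infty$ is the right one; simply state the conclusion as $\Vert\nabla\ell^{(t)}(\boldsymbol{p})\Vert_\infty\le\frac{C_2}{1+C_1}$ (equivalently, $\frac{C_2}{1+C_1}$-Lipschitzness w.r.t.\ $\Vert\cdot\Vert_1$) rather than w.r.t.\ $\Vert\cdot\Vert_\infty$.
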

    
    From now on, all proofs are deferred to Appendix~\ref{app:proofs}. 
    According to the Lemma~\ref{lemma:lipschitz}, the Lipschitz constant of the decision loss, $L_\infty$,
    is dependent upon \textit{the range of a response vector's element}.
    While from the unification result in Table~\ref{tab:unification}, 
    one can easily notice that the response is constructed from local losses collected in round $t$, 
    $F_i(\boldsymbol{\theta}^{(t)})\in\mathbb{R}_{\geq0}, i\in[K]$.
    
    This is a scalar value calculated from a local training set of each client, 
    using the current model $\boldsymbol{\theta}^{(t)}$ \textit{before its local update}. 
    Since the local loss function is typically unbounded above (e.g., cross-entropy), 
    it should be transformed into bounded values to satisfy the Lipschitz continuity. 
    In existing fair FL methods, however, all responses are not bounded above, 
    thus we cannot guarantee the Lipschitz continuity. 
    
    To ensure a bounded response, we propose to use a transformation denoted as $\rho^{(t)}(\cdot)$, 
    inspired by the cumulative distribution function (CDF) as follows.
    \begin{definition} (CDF-driven Response Transformation)
    \label{def:resp_cdf}
        We define $r_i^{(t)} \equiv \rho^{(t)} \left( {F_i(\boldsymbol{\theta}^{(t)})} \right)$,
        each element of the response vector is defined from the corresponding entry of a local loss
        by an element-wise mapping $\rho^{(t)}:\mathbb{R}_{\geq0}\rightarrow[C_1,C_2]$,
        given a pre-defined \texttt{CDF} as: 
        \begin{equation}
        \begin{aligned}
        \label{eq:resp_vec}
            \rho^{(t)} \left( {F_i\left(\boldsymbol{\theta}^{(t)}\right)} \right)
            \triangleq C_1 + (C_2 - C_1)\texttt{CDF} \left( \frac{F_i\left(\boldsymbol{\theta}^{(t)}\right)}{\bar{\mathrm{F}}^{(t)}} \right),
        \end{aligned}
        \end{equation}
        where $\bar{\mathrm{F}}^{(t)} = \frac{1}{\lvert S^{(t)} \rvert}\sum_{i\in S^{(t)}} F_i\left(\boldsymbol{\theta}^{(t)}\right)$, 
        and $S^{(t)}$ is an index set of available clients in $t$.
    \end{definition}
    
    Note again that the larger mixing coefficient should be assigned for the larger local loss.
    In such a perspective, using the CDF for transforming a loss value is an acceptable approach 
    in that the CDF value is a good indicator for estimating \enquote{\textit{how large a specific local loss is}}, 
    relative to local losses from other clients.
    To instill the comparative nature, local losses are divided by the average of observed losses in time $t$ 
    before applying the transformation.
    As a result, all local losses are centered on 1 in expectation.
    See Appendix~\ref{app:cdfs} for detailed discussions.
    
    In summary, the whole procedure of \texttt{AAggFF-S} is illustrated as a pseudocode in Algorithm~\ref{alg:aaggff-s}.

\subsubsection{\texttt{AAggFF-D}: Algorithm for the Cross-Device Federated Learning}
\label{subsubsec:aaggff_d}
    Unlike the cross-silo setting, we cannot be na\"{i}vely edopt existing OCO algorithms for finding adaptive mixing coefficients in the cross-device setting.
    It is attributed to \textit{the large number of participating clients} in this special setting.
    Since the number of participating clients ($K$) is massive (e.g., Android users are over 3 billion \cite{android}), 
    the dependence on $K$ in terms of regret bound and algorithm runtime is as significant as a total communication round $T$. 

\paragraph{Linear Runtime OCO Algorithm}
    The ONS has regret proportional to $K$ and runs in $\mathcal{O}\left(K^2+K^3\right)$\footnote{A generalized projection required for the Online Newton Step can be solved in $\tilde{\mathcal{O}}\left(K^3\right)$ \cite{ons1, ons2}.} per round,
    which is \textit{nearly impossible} to be adopted for the cross-device FL setting due to large $K$, 
    even though the logarithmic regret is guaranteed in $T$.
    Instead, we can exploit the variant of EG adapted to FTRL \cite{oco3}, 
    which can be run in $\mathcal{O}\left(K\right)$ time per round. 
    \begin{equation}
    \begin{gathered}
    \label{eq:lin_ftrl}
        \boldsymbol{p}^{(t+1)}
        =
        \argmin_{\boldsymbol{p} \in \Delta_{K-1}} \sum\nolimits_{\tau=1}^{t} 
        \tilde\ell^{(\tau)}\left(\boldsymbol{p}\right)
        +
        \eta^{(t+1)}R(\boldsymbol{p}),
    \end{gathered}
    \end{equation}
    where $\eta^{(t)}$ is non-decreasing step size across $t\in[T]$, 
    and $R(\boldsymbol{p})=\sum\nolimits_{i=1}^K p_i \log p_i$ is a negative entropy regularizer.
    Still, the regret bound gets worse than that of ONS, as $\mathcal{O}\left(L_\infty \sqrt{T\log{K}}\right)$
    (see Theorem \ref{thm:crossdevice_full}).

\paragraph{Partially Observed Response} 
    The large number of clients coerces the federated system to introduce the \textit{client sampling scheme} in each round.
    Therefore, the decision maker (i.e., the central server) cannot always observe all entries of a response vector per round.
    This is problematic in terms of OCO, since OCO algorithms assume that they can acquire intact response vector for every round $t\in[T]$.
    Instead, when the client sampling is introduced, the learner can only observe entries of sampled client indices in the round $t$, denoted as $S^{(t)}$.
    
    To make a new decision using a \textit{partially observed} response vector, the effect of unobserved entries should be appropriately estimated.
    We solve this problem by adopting a doubly robust (DR) estimator \cite{doublyrobust2, doublyrobust3} for the expectation of the response vector.
    The rationale behind the adoption of the DR estimator is the fact that the unobserved entries are \textit{missing data}.
    
    For handling the missingness problem, the DR estimator combines inverse probability weighting (IPW \cite{exp3}) estimator and imputation mechanism,
    where the former is to adjust the weight of observed entries by the inverse of its observation probability (i.e., client sampling probability), 
    and the latter is to fill unobserved entries with appropriate values specific to a given task.
    
    Similar to the IPW estimator, the DR estimator is an unbiased estimator when the true observation probability is known. 
    Since we sample clients uniformly at random without replacement, \textit{the observation probability is known} 
    (i.e., $C\in(0,1)$) to the algorithm.
    
    \begin{lemma}
    \label{lemma:unbiased_resp}
    Denote $C=P\left(i \in S^{(t)}\right)$ as a client sampling probability in a cross-device FL setting for every round $t\in[T]$. 
    The DR estimator $\breve{\boldsymbol{r}}^{(t)}$, of which element is defined in (\ref{eq:dr_response}) is an unbiased estimator of given partially observed response vector $\boldsymbol{r}^{(t)}$.
    i.e., $\mathbb{E}\left[\breve{\boldsymbol{r}}^{(t)}\right] = \boldsymbol{r}^{(t)}$.
    \begin{equation}
    \begin{gathered}
    \label{eq:dr_response}
        \breve{r}^{(t)}_i = \left(1 - \frac{\mathbb{I}(i \in S^{(t)})}{C}\right)\mathrm{\bar{r}}^{(t)} + \frac{\mathbb{I}\left(i \in S^{(t)}\right)}{C}{r}^{(t)}_i,
    \end{gathered}
    \end{equation}
    where $\bar{\mathrm{r}}^{(t)}=\frac{1}{\left\vert S^{(t)}\right\vert} \sum_{i \in S^{(t)}} r_i^{(t)}$.
    \end{lemma}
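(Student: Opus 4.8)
The plan is to verify unbiasedness coordinate-by-coordinate and then stack the $K$ entries. First I would rewrite the DR estimator \eqref{eq:dr_response} in its canonical ``imputation plus inverse-propensity correction'' form,
\[
\breve{r}^{(t)}_i = \bar{\mathrm{r}}^{(t)} + \frac{\mathbb{I}\left(i \in S^{(t)}\right)}{C}\bigl(r^{(t)}_i - \bar{\mathrm{r}}^{(t)}\bigr),
\]
which is just an algebraic regrouping of the two summands in \eqref{eq:dr_response}. The IPW piece is there to correct the (generally biased) imputation $\bar{\mathrm{r}}^{(t)}$, and the content of the lemma is that this correction is exact precisely because the inclusion probability $C$ is known to the server.

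Next I would take expectation over the client sampling and use linearity, treating the imputed value $\bar{\mathrm{r}}^{(t)}$ as a conditionally-fixed plug-in (the standard convention when invoking the doubly-robust property with a known propensity; note the cancellation below uses nothing about its actual value). The only probabilistic input needed is the marginal inclusion probability: since $S^{(t)}$ is drawn uniformly at random without replacement, every client index satisfies $P\left(i \in S^{(t)}\right) = C$, hence $\mathbb{E}\left[\mathbb{I}\left(i \in S^{(t)}\right)\right] = C$. Substituting,
\[
\mathbb{E}\bigl[\breve{r}^{(t)}_i\bigr] = \bar{\mathrm{r}}^{(t)} + \frac{C}{C}\bigl(r^{(t)}_i - \bar{\mathrm{r}}^{(t)}\bigr) = r^{(t)}_i .
\]
Since this holds for each coordinate $i \in [K]$, stacking the coordinates yields $\mathbb{E}\bigl[\breve{\boldsymbol{r}}^{(t)}\bigr] = \boldsymbol{r}^{(t)}$, which is the claim.

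The step I would be most careful about is the status of $\bar{\mathrm{r}}^{(t)}$: it is itself computed from the random set $S^{(t)}$, so $\mathbb{I}\left(i \in S^{(t)}\right)$ and $\bar{\mathrm{r}}^{(t)}$ are not independent, and a fully literal expansion of $\mathbb{E}\bigl[\mathbb{I}(i\in S^{(t)})\,\bar{\mathrm{r}}^{(t)}\bigr]$ does not factor. The clean resolution I would adopt is the usual doubly-robust convention of regarding the imputation mechanism as given (equivalently, conditioning on it or fitting it on an independent draw), after which the correction term annihilates the imputation error regardless of how the imputation was produced --- this is exactly the robustness feature being exploited, and it collapses the whole argument to the single identity $\mathbb{E}\left[\mathbb{I}\left(i \in S^{(t)}\right)\right] = C$ together with linearity of expectation.
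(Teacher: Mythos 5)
Your proposal is correct and follows essentially the same route as the paper's proof: linearity of expectation together with the single identity $\mathbb{E}\left[\mathbb{I}\left(i \in S^{(t)}\right)\right] = P\left(i \in S^{(t)}\right) = C$, with $\bar{\mathrm{r}}^{(t)}$ treated as a fixed plug-in. In fact the paper pulls $\bar{\mathrm{r}}^{(t)}$ outside the expectation without comment, so your explicit remark that the indicator and $\bar{\mathrm{r}}^{(t)}$ are not independent, and that one must condition on (or fix) the imputation in the usual doubly-robust convention, is if anything a more careful statement of the same argument.
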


    Still, it is required to guarantee that the gradient vector from the DR estimator is also an unbiased estimator of a true gradient of a decision loss.
    Unfortunately, the gradient of a decision loss is \textit{not linear} in the response vector due to its fractional form: 
    ${\boldsymbol{g}}^{(t)}
    =\nabla \ell^{(t)}\left(\boldsymbol{p}^{(t)}\right)
    =-\frac{\boldsymbol{r}^{(t)}}{1 +\left\langle \boldsymbol{p}^{(t)}, \boldsymbol{r}^{(t)} \right\rangle}$.
    
    Therefore, we instead use linearly approximated gradient \textit{w.r.t. a response vector} as follows.
    \begin{lemma}
    \label{lemma:linearized_grad}
        Denote the gradient of a decision loss in terms of a response vector as $\boldsymbol{g}\equiv\mathrm{\mathbf{h}}(\boldsymbol{r}) = \left[h_1(\boldsymbol{r}),...,h_K(\boldsymbol{r})\right]^\top = -\frac{\boldsymbol{r}}{1 +\left\langle \boldsymbol{p}, \boldsymbol{r} \right\rangle}$. 
        It can be linearized for the response vector into ${\tilde{\boldsymbol{g}}}\equiv\tilde{\mathrm{\mathbf{h}}}(\boldsymbol{r})$, 
        given a reference $\boldsymbol{r}_0$ as follows. 
        (Note that the superscript ${(t)}$ is omitted for a brevity of notation)
        \begin{equation}
        \begin{gathered}
        \label{eq:linearized_gradient}
            {\boldsymbol{g}}
            \approx
            {\tilde{\boldsymbol{g}}}
            \equiv
            \tilde{\mathrm{\mathbf{h}}}(\boldsymbol{r})
            =
            -\frac{\boldsymbol{r}}{1 +\left\langle
            \boldsymbol{p},
            \boldsymbol{r}_0
            \right\rangle}
            +
            \frac{\boldsymbol{r}_0 \boldsymbol{p}^\top (\boldsymbol{r} - \boldsymbol{r}_0)}{(1 +\left\langle
            \boldsymbol{p},
            \boldsymbol{r}_0
            \right\rangle)^2}
        \end{gathered}
        \end{equation}
        Further denote $\breve{\boldsymbol{g}}$ as a gradient estimate from (\ref{eq:linearized_gradient}) 
        using the DR estimator of a response vector according to (\ref{eq:dr_response}),
        at an arbitrary reference $\boldsymbol{r}_0$. 
        Then, $\breve{\boldsymbol{g}}$ is an unbiased estimator of the linearized gradient of a decision loss at $\boldsymbol{r}_0$, 
        which is close to the true gradient: $\mathbb{E}\left[\breve{\boldsymbol{g}}\right] = \tilde{\boldsymbol{g}}\approx\boldsymbol{g}$.
    \end{lemma}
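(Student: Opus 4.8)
The plan is to prove the two assertions separately: first that the displayed formula (\ref{eq:linearized_gradient}) is exactly the first-order Taylor expansion of $\mathbf{h}(\boldsymbol{r}) = -\boldsymbol{r}/(1+\langle\boldsymbol{p},\boldsymbol{r}\rangle)$ about the reference $\boldsymbol{r}_0$, and second that this surrogate is \emph{affine} in $\boldsymbol{r}$, so that unbiasedness of the DR response estimator (Lemma~\ref{lemma:unbiased_resp}) passes through to the gradient estimator.

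For the linearization, I would differentiate $\mathbf{h}$ entrywise. Writing $h_i(\boldsymbol{r}) = -r_i/(1+\sum_j p_j r_j)$, the quotient rule gives $\partial h_i/\partial r_k = -\delta_{ik}/(1+\langle\boldsymbol{p},\boldsymbol{r}\rangle) + r_i p_k/(1+\langle\boldsymbol{p},\boldsymbol{r}\rangle)^2$, i.e. the Jacobian is $J_{\mathbf{h}}(\boldsymbol{r}) = -(1+\langle\boldsymbol{p},\boldsymbol{r}\rangle)^{-1}\mathbf{I} + (1+\langle\boldsymbol{p},\boldsymbol{r}\rangle)^{-2}\boldsymbol{r}\boldsymbol{p}^{\top}$. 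Substituting $\mathbf{h}(\boldsymbol{r}_0) = -\boldsymbol{r}_0/(1+\langle\boldsymbol{p},\boldsymbol{r}_0\rangle)$ and $J_{\mathbf{h}}(\boldsymbol{r}_0)$ into $\tilde{\mathbf{h}}(\boldsymbol{r}) = \mathbf{h}(\boldsymbol{r}_0) + J_{\mathbf{h}}(\boldsymbol{r}_0)(\boldsymbol{r}-\boldsymbol{r}_0)$, and collecting $-\boldsymbol{r}_0/(1+\langle\boldsymbol{p},\boldsymbol{r}_0\rangle)$ (from $\mathbf{h}(\boldsymbol{r}_0)$) together with $-(\boldsymbol{r}-\boldsymbol{r}_0)/(1+\langle\boldsymbol{p},\boldsymbol{r}_0\rangle)$ (the identity part of the Jacobian term) into the single term $-\boldsymbol{r}/(1+\langle\boldsymbol{p},\boldsymbol{r}_0\rangle)$, recovers (\ref{eq:linearized_gradient}) exactly. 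This is a routine calculation, and the remainder is left implicit since the lemma only claims the informal approximation $\tilde{\boldsymbol{g}}\approx\boldsymbol{g}$.

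For the unbiasedness, the key observation is that, with $\boldsymbol{r}_0$ and the current decision $\boldsymbol{p}=\boldsymbol{p}^{(t)}$ held fixed, the right-hand side of (\ref{eq:linearized_gradient}) is an affine map $\tilde{\mathbf{h}}(\boldsymbol{r}) = \mathbf{A}\boldsymbol{r} + \boldsymbol{b}$ with $\mathbf{A} = -(1+\langle\boldsymbol{p},\boldsymbol{r}_0\rangle)^{-1}\mathbf{I} + (1+\langle\boldsymbol{p},\boldsymbol{r}_0\rangle)^{-2}\boldsymbol{r}_0\boldsymbol{p}^{\top}$ and $\boldsymbol{b} = -\langle\boldsymbol{p},\boldsymbol{r}_0\rangle(1+\langle\boldsymbol{p},\boldsymbol{r}_0\rangle)^{-2}\boldsymbol{r}_0$, obtained by expanding the numerator $\boldsymbol{r}_0\boldsymbol{p}^{\top}(\boldsymbol{r}-\boldsymbol{r}_0)$. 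Since $\boldsymbol{p}^{(t)}$ is produced by the FTRL update from $\ell^{(1)},\dots,\ell^{(t-1)}$ and $\boldsymbol{r}_0$ is chosen before the round-$t$ sample $S^{(t)}$ is drawn, both $\mathbf{A}$ and $\boldsymbol{b}$ are deterministic given the history, hence independent of the client-sampling randomness that enters $\breve{\boldsymbol{r}}^{(t)}$. Then by linearity of expectation and $\mathbb{E}[\breve{\boldsymbol{r}}^{(t)}] = \boldsymbol{r}^{(t)}$,
\begin{equation*}
\mathbb{E}\big[\breve{\boldsymbol{g}}\big] = \mathbb{E}\big[\tilde{\mathbf{h}}(\breve{\boldsymbol{r}}^{(t)})\big] = \mathbf{A}\,\mathbb{E}\big[\breve{\boldsymbol{r}}^{(t)}\big] + \boldsymbol{b} = \mathbf{A}\boldsymbol{r}^{(t)} + \boldsymbol{b} = \tilde{\mathbf{h}}(\boldsymbol{r}^{(t)}) = \tilde{\boldsymbol{g}},
\end{equation*}
which is the claim.

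I do not expect a real obstacle. The only substantive point — and the whole reason for introducing $\tilde{\mathbf{h}}$ — is that $\mathbf{h}$ is a nonlinear, fractional function of $\boldsymbol{r}$, so $\mathbb{E}[\mathbf{h}(\breve{\boldsymbol{r}}^{(t)})]\neq\mathbf{h}(\boldsymbol{r}^{(t)})$ in general even though $\breve{\boldsymbol{r}}^{(t)}$ is unbiased; it is precisely the affine structure of the linear surrogate that makes unbiasedness survive. Beyond that, the one thing to handle carefully is the measurability bookkeeping in the conditioning step above, but this is immediate from the order of operations in \texttt{AAggFF-D} (fix $\boldsymbol{p}^{(t)}$ and $\boldsymbol{r}_0$, then draw $S^{(t)}$, then form $\breve{\boldsymbol{r}}^{(t)}$).
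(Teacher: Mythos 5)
Your proposal is correct and follows essentially the same route as the paper's proof: compute the entrywise Jacobian $\mathbf{J}_{\mathbf{h}}(\boldsymbol{r}) = -(1+\langle\boldsymbol{p},\boldsymbol{r}\rangle)^{-1}\boldsymbol{I}_K + (1+\langle\boldsymbol{p},\boldsymbol{r}\rangle)^{-2}\boldsymbol{r}\boldsymbol{p}^{\top}$, form the first-order expansion at $\boldsymbol{r}_0$ to obtain (\ref{eq:linearized_gradient}), and then use the fact that $\tilde{\mathrm{\mathbf{h}}}$ is affine in $\boldsymbol{r}$ together with Lemma~\ref{lemma:unbiased_resp} to pass the expectation through. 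Your explicit measurability remark about $\boldsymbol{p}^{(t)}$ and $\boldsymbol{r}_0$ being fixed relative to the sampling randomness is a harmless refinement that the paper leaves implicit.
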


    As suggested in (\ref{eq:dr_response}), we similarly set the reference as an average of observed responses at round $t$, i.e., $\boldsymbol{r}_0^{(t)} = \mathrm{\bar{r}}^{(t)}\boldsymbol{1}_K$.
    It is a valid choice in that dominating unobserved entries are imputed by the average of observed responses as in (\ref{eq:dr_response}).
    
    To sum up, we can update a new decision using this unbiased and linearly approximated gradient estimator even if only a partially observed response vector is provided (i.e., mixing coefficients of unsampled clients can also be updated).
    Note that the linearized gradient calculated from the DR estimator, $\breve{\boldsymbol{g}}$, has finite norm w.r.t. $\Vert \cdot \Vert_\infty$ (see Lemma~\ref{lemma:lipschitz_lin_grad}).

\paragraph{Closed-Form Update} 
    Especially for the cross-device setting, we can obtain a closed-form update of the objective (\ref{eq:lin_ftrl}), 
    which is due to the property of Fenchel conjugate.
    
    \begin{remark}
    \label{remark:closed_form} 
    The objective of \texttt{AAggFF-D} stated in (\ref{eq:lin_ftrl}) has a closed-form update formula as follows. \cite{oco3}
    \begin{equation}
    \begin{aligned}
    \label{eq:closed_form}
        {p}^{(t+1)}_i 
        \propto
        { \exp\left(-\frac{\sqrt{\log{K}}\sum_{\tau=1}^{t} \breve{g}^{(\tau)}_i}{\breve{L}_\infty\sqrt{t+1}}\right) }
    \end{aligned}
    \end{equation}
    \end{remark}
    It is equivalent to setting the time-varying step size as $\eta^{(t)}=\frac{ \breve{L}_\infty\sqrt{t} }{ \sqrt{\log{K}} }$. 
    Note that $\breve{g}^{(t)}_i$ is an entry of gradient from DR estimator defined in Lemma~\ref{lemma:linearized_grad} 
    and $\breve{L}_\infty$ is a corresponding Lipschitz constant satisfying $\Vert \breve{\boldsymbol{g}} \Vert_\infty \leq \breve{L}_\infty$
    stated in Lemma~\ref{lemma:lipschitz_lin_grad}. 
    See Appendix~\ref{app:deriv_closed} for the derivation.
    
    In summary, the whole procedure of \texttt{AAggFF-D} is illustrated in Algorithm~\ref{alg:aaggff-d}.

\begin{table*}[!ht]
\centering
\caption{Comparison Results of \texttt{AAggFF-S} in the Cross-Silo Setting}
\label{tab:result_silo}
\resizebox{\textwidth}{!}{%
\renewcommand{\arraystretch}{0.9}
\scriptsize
\begin{tabular}{lcccccccccccc}
\toprule
\textbf{Dataset} &
  \multicolumn{4}{c}{\textbf{Berka}} &
  \multicolumn{4}{c}{\textbf{MQP}} &
  \multicolumn{4}{c}{\textbf{ISIC}} \\
 &
  \multicolumn{4}{c}{\tiny(AUROC)} &
  \multicolumn{4}{c}{\tiny(AUROC)} &
  \multicolumn{4}{c}{\tiny(Acc. 5)} \\ \cmidrule(l){2-13}
\multirow{-2}{*}{\textbf{Method}} &
  \begin{tabular}[c]{@{}c@{}}Avg.\\ ($\uparrow$)\end{tabular} &
  \begin{tabular}[c]{@{}c@{}}Worst\\ ($\uparrow$)\end{tabular} &
  \begin{tabular}[c]{@{}c@{}}Best\\ ($\uparrow$)\end{tabular} &
  \begin{tabular}[c]{@{}c@{}}Gini\\ ($\downarrow$)\end{tabular} &
  \begin{tabular}[c]{@{}c@{}}Avg.\\ ($\uparrow$)\end{tabular} &
  \begin{tabular}[c]{@{}c@{}}Worst\\ ($\uparrow$)\end{tabular} &
  \begin{tabular}[c]{@{}c@{}}Best\\ ($\uparrow$)\end{tabular} &
  \begin{tabular}[c]{@{}c@{}}Gini\\ ($\downarrow$)\end{tabular} &
  \begin{tabular}[c]{@{}c@{}}Avg.\\ ($\uparrow$)\end{tabular} &
  \begin{tabular}[c]{@{}c@{}}Worst\\ ($\uparrow$)\end{tabular} &
  \begin{tabular}[c]{@{}c@{}}Best\\ ($\uparrow$)\end{tabular} &
  \begin{tabular}[c]{@{}c@{}}Gini\\ ($\downarrow$)\end{tabular} \\ \midrule
\begin{tabular}[c]{@{}l@{}}\texttt{FedAvg}\\ \tiny\cite{fedavg}\end{tabular} &
  \begin{tabular}[c]{@{}c@{}}80.09\\ \tiny \color[HTML]{9B9B9B} (2.45)\end{tabular} &
  \begin{tabular}[c]{@{}c@{}}48.06\\ \tiny \color[HTML]{9B9B9B} (25.15)\end{tabular} &
  \begin{tabular}[c]{@{}c@{}}\textbf{99.03}\\ \tiny \color[HTML]{9B9B9B} (1.37)\end{tabular} &
  \multicolumn{1}{c|}{\begin{tabular}[c]{@{}c@{}}10.87\\ \tiny \color[HTML]{9B9B9B} (4.11)\end{tabular}} &
  \begin{tabular}[c]{@{}c@{}}56.06\\ \tiny \color[HTML]{9B9B9B} (0.06)\end{tabular} &
  \begin{tabular}[c]{@{}c@{}}41.03\\ \tiny \color[HTML]{9B9B9B} (4.33)\end{tabular} &
  \begin{tabular}[c]{@{}c@{}}76.31\\ \tiny \color[HTML]{9B9B9B} (8.42)\end{tabular} &
  \multicolumn{1}{c|}{\begin{tabular}[c]{@{}c@{}}8.63\\ \tiny \color[HTML]{9B9B9B} (0.91)\end{tabular}} &
  \begin{tabular}[c]{@{}c@{}}87.42\\ \tiny \color[HTML]{9B9B9B} (2.11)\end{tabular} &
  \begin{tabular}[c]{@{}c@{}}69.92\\ \tiny \color[HTML]{9B9B9B} (6.78)\end{tabular} &
  \begin{tabular}[c]{@{}c@{}}92.57\\ \tiny \color[HTML]{9B9B9B} (2.56)\end{tabular} &
  \begin{tabular}[c]{@{}c@{}}4.84\\ \tiny \color[HTML]{9B9B9B} (1.17)\end{tabular} \\
\begin{tabular}[c]{@{}l@{}}\texttt{AFL}\\ \tiny\cite{afl}\end{tabular} &
  \begin{tabular}[c]{@{}c@{}}79.70\\ \tiny \color[HTML]{9B9B9B} (4.14)\end{tabular} &
  \begin{tabular}[c]{@{}c@{}}49.02\\ \tiny \color[HTML]{9B9B9B} (25.89)\end{tabular} &
  \begin{tabular}[c]{@{}c@{}}\underline{98.55}\\ \tiny \color[HTML]{9B9B9B} (2.05)\end{tabular} &
  \multicolumn{1}{c|}{\begin{tabular}[c]{@{}c@{}}10.58\\ \tiny \color[HTML]{9B9B9B} (5.03)\end{tabular}} &
  \begin{tabular}[c]{@{}c@{}}56.01\\ \tiny \color[HTML]{9B9B9B} (0.30)\end{tabular} &
  \begin{tabular}[c]{@{}c@{}}41.28\\ \tiny \color[HTML]{9B9B9B} (3.92)\end{tabular} &
  \begin{tabular}[c]{@{}c@{}}75.54\\ \tiny \color[HTML]{9B9B9B} (6.77)\end{tabular} &
  \multicolumn{1}{c|}{\begin{tabular}[c]{@{}c@{}}\underline{8.56}\\ \tiny \color[HTML]{9B9B9B} (1.24)\end{tabular}} &
  \begin{tabular}[c]{@{}c@{}}87.39\\ \tiny \color[HTML]{9B9B9B} (2.31)\end{tabular} &
  \begin{tabular}[c]{@{}c@{}}68.17\\ \tiny \color[HTML]{9B9B9B} (10.09)\end{tabular} &
  \begin{tabular}[c]{@{}c@{}}93.33\\ \tiny \color[HTML]{9B9B9B} (2.18)\end{tabular} &
  \begin{tabular}[c]{@{}c@{}}4.80\\ \tiny \color[HTML]{9B9B9B} (1.74)\end{tabular} \\
\begin{tabular}[c]{@{}l@{}}\texttt{q-FedAvg}\\ \tiny\cite{qffl}\end{tabular} &
  \begin{tabular}[c]{@{}c@{}}79.98\\ \tiny \color[HTML]{9B9B9B} (3.89)\end{tabular} &
  \begin{tabular}[c]{@{}c@{}}\underline{49.44}\\ \tiny \color[HTML]{9B9B9B} (26.15)\end{tabular} &
  \begin{tabular}[c]{@{}c@{}}98.07\\ \tiny \color[HTML]{9B9B9B} (2.73)\end{tabular} &
  \multicolumn{1}{c|}{\begin{tabular}[c]{@{}c@{}}10.62\\ \tiny \color[HTML]{9B9B9B} (5.22)\end{tabular}} &
  \begin{tabular}[c]{@{}c@{}}\textbf{56.89}\\ \tiny \color[HTML]{9B9B9B} (0.42)\end{tabular} &
  \begin{tabular}[c]{@{}c@{}}40.22\\ \tiny \color[HTML]{9B9B9B} (3.06)\end{tabular} &
  \begin{tabular}[c]{@{}c@{}}\textbf{79.38}\\ \tiny \color[HTML]{9B9B9B} (9.09)\end{tabular} &
  \multicolumn{1}{c|}{\begin{tabular}[c]{@{}c@{}}8.68\\ \tiny \color[HTML]{9B9B9B} (0.57)\end{tabular}} &
  \begin{tabular}[c]{@{}c@{}}41.59\\ \tiny \color[HTML]{9B9B9B} (16.22)\end{tabular} &
  \begin{tabular}[c]{@{}c@{}}20.38\\ \tiny \color[HTML]{9B9B9B} (23.24)\end{tabular} &
  \begin{tabular}[c]{@{}c@{}}58.08\\ \tiny \color[HTML]{9B9B9B} (28.52)\end{tabular} &
  \begin{tabular}[c]{@{}c@{}}22.25\\ \tiny \color[HTML]{9B9B9B} (10.02)\end{tabular} \\
\begin{tabular}[c]{@{}l@{}}\texttt{TERM}\\ \tiny\cite{term}\end{tabular} &
  \begin{tabular}[c]{@{}c@{}}\underline{80.11}\\ \tiny \color[HTML]{9B9B9B} (3.08)\end{tabular} &
  \begin{tabular}[c]{@{}c@{}}48.96\\ \tiny \color[HTML]{9B9B9B} (25.79)\end{tabular} &
  \begin{tabular}[c]{@{}c@{}}\textbf{99.03}\\ \tiny \color[HTML]{9B9B9B} (1.37)\end{tabular} &
  \multicolumn{1}{c|}{\begin{tabular}[c]{@{}c@{}}10.86\\ \tiny \color[HTML]{9B9B9B} (4.73)\end{tabular}} &
  \begin{tabular}[c]{@{}c@{}}56.47\\ \tiny \color[HTML]{9B9B9B} (0.19)\end{tabular} &
  \begin{tabular}[c]{@{}c@{}}40.73\\ \tiny \color[HTML]{9B9B9B} (4.36)\end{tabular} &
  \begin{tabular}[c]{@{}c@{}}76.80\\ \tiny \color[HTML]{9B9B9B} (8.30)\end{tabular} &
  \multicolumn{1}{c|}{\begin{tabular}[c]{@{}c@{}}8.67\\ \tiny \color[HTML]{9B9B9B} (1.43)\end{tabular}} &
  \begin{tabular}[c]{@{}c@{}}\underline{87.89}\\ \tiny \color[HTML]{9B9B9B} (1.69)\end{tabular} &
  \begin{tabular}[c]{@{}c@{}}\underline{77.32}\\ \tiny \color[HTML]{9B9B9B} (5.84)\end{tabular} &
  \begin{tabular}[c]{@{}c@{}}\underline{96.00}\\ \tiny \color[HTML]{9B9B9B} (3.27)\end{tabular} &
  \begin{tabular}[c]{@{}c@{}}\underline{3.77}\\ \tiny \color[HTML]{9B9B9B} (0.94)\end{tabular} \\
\begin{tabular}[c]{@{}l@{}}\texttt{FedMGDA}\\ \tiny\cite{fedmgda}\end{tabular} &
  \begin{tabular}[c]{@{}c@{}}79.24\\ \tiny \color[HTML]{9B9B9B} (2.96)\end{tabular} &
  \begin{tabular}[c]{@{}c@{}}46.38\\ \tiny \color[HTML]{9B9B9B} (24.11)\end{tabular} &
  \begin{tabular}[c]{@{}c@{}}\textbf{99.03}\\ \tiny \color[HTML]{9B9B9B} (1.37)\end{tabular} &
  \multicolumn{1}{c|}{\begin{tabular}[c]{@{}c@{}}11.64\\ \tiny \color[HTML]{9B9B9B} (4.84)\end{tabular}} &
  \begin{tabular}[c]{@{}c@{}}53.02\\ \tiny \color[HTML]{9B9B9B} (1.67)\end{tabular} &
  \begin{tabular}[c]{@{}c@{}}34.91\\ \tiny \color[HTML]{9B9B9B} (2.22)\end{tabular} &
  \begin{tabular}[c]{@{}c@{}}69.65\\ \tiny \color[HTML]{9B9B9B} (3.89)\end{tabular} &
  \multicolumn{1}{c|}{\begin{tabular}[c]{@{}c@{}}10.33\\ \tiny \color[HTML]{9B9B9B} (0.44)\end{tabular}} &
  \begin{tabular}[c]{@{}c@{}}42.36\\ \tiny \color[HTML]{9B9B9B} (14.94)\end{tabular} &
  \begin{tabular}[c]{@{}c@{}}21.44\\ \tiny \color[HTML]{9B9B9B} (21.30)\end{tabular} &
  \begin{tabular}[c]{@{}c@{}}59.21\\ \tiny \color[HTML]{9B9B9B} (28.52)\end{tabular} &
  \begin{tabular}[c]{@{}c@{}}22.25\\ \tiny \color[HTML]{9B9B9B} (10.02)\end{tabular} \\
\begin{tabular}[c]{@{}l@{}}\texttt{PropFair}\\ \tiny\cite{propfair}\end{tabular} &
  \begin{tabular}[c]{@{}c@{}}79.61\\ \tiny \color[HTML]{9B9B9B} (4.49)\end{tabular} &
  \begin{tabular}[c]{@{}c@{}}\underline{49.44}\\ \tiny \color[HTML]{9B9B9B} (26.15)\end{tabular} &
  \begin{tabular}[c]{@{}c@{}}98.07\\ \tiny \color[HTML]{9B9B9B} (2.73)\end{tabular} &
  \multicolumn{1}{c|}{\begin{tabular}[c]{@{}c@{}}\underline{10.47}\\ \tiny \color[HTML]{9B9B9B} (5.04)\end{tabular}} &
  \begin{tabular}[c]{@{}c@{}}56.60\\ \tiny \color[HTML]{9B9B9B} (0.39)\end{tabular} &
  \begin{tabular}[c]{@{}c@{}}\underline{41.71}\\ \tiny \color[HTML]{9B9B9B} (3.80)\end{tabular} &
  \begin{tabular}[c]{@{}c@{}}\underline{79.09}\\ \tiny \color[HTML]{9B9B9B} (7.40)\end{tabular} &
  \multicolumn{1}{c|}{\begin{tabular}[c]{@{}c@{}}8.74\\ \tiny \color[HTML]{9B9B9B} (0.87)\end{tabular}} &
  \begin{tabular}[c]{@{}c@{}}83.88\\ \tiny \color[HTML]{9B9B9B} (2.50)\end{tabular} &
  \begin{tabular}[c]{@{}c@{}}58.36\\ \tiny \color[HTML]{9B9B9B} (11.63)\end{tabular} &
  \begin{tabular}[c]{@{}c@{}}91.35\\ \tiny \color[HTML]{9B9B9B} (2.48)\end{tabular} &
  \begin{tabular}[c]{@{}c@{}}7.91\\ \tiny \color[HTML]{9B9B9B} (2.10)\end{tabular} \\
\rowcolor[HTML]{FFF5E6} 
\begin{tabular}[c]{@{}l@{}}\texttt{AAggFF-S}\\ (Proposed)\end{tabular} &
  \begin{tabular}[c]{@{}c@{}}\textbf{80.93}\\ \tiny \color[HTML]{9B9B9B} (2.96)\end{tabular} &
  \begin{tabular}[c]{@{}c@{}}\textbf{52.08}\\ \tiny \color[HTML]{9B9B9B} (23.59)\end{tabular} &
  \cellcolor[HTML]{FFF5E6}\begin{tabular}[c]{@{}c@{}}\textbf{99.03}\\ \tiny \color[HTML]{9B9B9B} (1.37)\end{tabular} &
  \multicolumn{1}{c|}{\cellcolor[HTML]{FFF5E6}\begin{tabular}[c]{@{}c@{}}\textbf{10.16}\\ \tiny \color[HTML]{9B9B9B} (3.80)\end{tabular}} &
  \begin{tabular}[c]{@{}c@{}}\underline{56.63}\\ \tiny \color[HTML]{9B9B9B} (0.54)\end{tabular} &
  \begin{tabular}[c]{@{}c@{}}\textbf{41.79}\\ \tiny \color[HTML]{9B9B9B} (4.43)\end{tabular} &
  \begin{tabular}[c]{@{}c@{}}75.56\\ \tiny \color[HTML]{9B9B9B} (6.53)\end{tabular} &
  \multicolumn{1}{c|}{\cellcolor[HTML]{FFF5E6}\begin{tabular}[c]{@{}c@{}}\textbf{8.38}\\ \tiny \color[HTML]{9B9B9B} (0.77)\end{tabular}} &
  \begin{tabular}[c]{@{}c@{}}\textbf{89.76}\\ \tiny \color[HTML]{9B9B9B} (1.03)\end{tabular} &
  \begin{tabular}[c]{@{}c@{}}\textbf{85.17}\\ \tiny \color[HTML]{9B9B9B} (3.87)\end{tabular} &
  \begin{tabular}[c]{@{}c@{}}\textbf{98.22}\\ \tiny \color[HTML]{9B9B9B} (1.66)\end{tabular} &
  \begin{tabular}[c]{@{}c@{}}\textbf{2.52}\\ \tiny \color[HTML]{9B9B9B} (0.38)\end{tabular} \\ \bottomrule
\end{tabular}%
}
\end{table*}
\begin{table*}[!ht]
\centering
\caption{Comparison Results of \texttt{AAggFF-D} in the Cross-Device Setting}
\label{tab:result_device}
\resizebox{\textwidth}{!}{%
\renewcommand{\arraystretch}{0.9}
\scriptsize
\begin{tabular}{!{}lcccccccccccc!{}}
\toprule
\textbf{Dataset} &
  \multicolumn{4}{c}{\textbf{CelebA}} &
  \multicolumn{4}{c}{\textbf{Reddit}} &
  \multicolumn{4}{c}{\textbf{SpeechCommands}} \\
 &
  \multicolumn{4}{c}{\scriptsize(Acc. 1)} &
  \multicolumn{4}{c}{\scriptsize(Acc. 1)} &
  \multicolumn{4}{c}{\scriptsize(Acc. 5)} \\ \cmidrule(l){2-13} 
\multirow{-2}{*}{\textbf{Method}} &
  \begin{tabular}[c]{@{}c@{}}Avg.\\ ($\uparrow$)\end{tabular} &
  \begin{tabular}[c]{@{}c@{}}Worst\\ 10\% ($\uparrow$)\end{tabular} &
  \begin{tabular}[c]{@{}c@{}}Best\\ 10\%($\uparrow$)\end{tabular} &
  \begin{tabular}[c]{@{}c@{}}Gini\\ ($\downarrow$)\end{tabular} &
  \begin{tabular}[c]{@{}c@{}}Avg.\\ ($\uparrow$)\end{tabular} &
  \begin{tabular}[c]{@{}c@{}}Worst\\ 10\%($\uparrow$)\end{tabular} &
  \begin{tabular}[c]{@{}c@{}}Best\\ 10\%($\uparrow$)\end{tabular} &
  \begin{tabular}[c]{@{}c@{}}Gini\\ ($\downarrow$)\end{tabular} &
  \begin{tabular}[c]{@{}c@{}}Avg.\\ ($\uparrow$)\end{tabular} &
  \begin{tabular}[c]{@{}c@{}}Worst\\ 10\%($\uparrow$)\end{tabular} &
  \begin{tabular}[c]{@{}c@{}}Best\\ 10\%($\uparrow$)\end{tabular} &
  \begin{tabular}[c]{@{}c@{}}Gini\\ ($\downarrow$)\end{tabular} \\ \midrule
\begin{tabular}[c]{@{}l@{}}\texttt{FedAvg}\\ \tiny\cite{fedavg}\end{tabular} &
  \begin{tabular}[c]{@{}c@{}}90.79\\ \tiny \color[HTML]{9B9B9B} (0.53)\end{tabular} &
  \begin{tabular}[c]{@{}c@{}}\underline{55.76} \\ \tiny \color[HTML]{9B9B9B} (0.84)\end{tabular} &
  \begin{tabular}[c]{@{}c@{}}\underline{100.00}\\ \tiny \color[HTML]{9B9B9B} (0.00)\end{tabular} &
  \multicolumn{1}{c|}{\begin{tabular}[c]{@{}c@{}}7.86\\ \tiny \color[HTML]{9B9B9B} (0.30)\end{tabular}} &
  \begin{tabular}[c]{@{}c@{}}10.76\\ \tiny \color[HTML]{9B9B9B} (1.45)\end{tabular} &
  \begin{tabular}[c]{@{}c@{}}2.50\\ \tiny \color[HTML]{9B9B9B} (0.21)\end{tabular} &
  \begin{tabular}[c]{@{}c@{}}20.86\\ \tiny \color[HTML]{9B9B9B} (3.64)\end{tabular} &
  \multicolumn{1}{c|}{\begin{tabular}[c]{@{}c@{}}25.66\\ \tiny \color[HTML]{9B9B9B} (0.49)\end{tabular}} &
  \begin{tabular}[c]{@{}c@{}}\underline{75.51}\\ \tiny \color[HTML]{9B9B9B} (1.08)\end{tabular} &
  \begin{tabular}[c]{@{}c@{}}7.93\\ \tiny \color[HTML]{9B9B9B} (2.87)\end{tabular} &
  \begin{tabular}[c]{@{}c@{}}\underline{100.00}\\ \tiny \color[HTML]{9B9B9B} (0.00)\end{tabular} &
  \begin{tabular}[c]{@{}c@{}}24.58\\ \tiny \color[HTML]{9B9B9B} (1.34)\end{tabular} \\
\begin{tabular}[c]{@{}l@{}}\texttt{q-FedAvg}\\ \tiny\cite{qffl}\end{tabular} &
  \begin{tabular}[c]{@{}c@{}}\underline{90.88}\\ \tiny \color[HTML]{9B9B9B} (0.19)\end{tabular} &
  \begin{tabular}[c]{@{}c@{}}55.73\\ \tiny \color[HTML]{9B9B9B} (0.85)\end{tabular} &
  \begin{tabular}[c]{@{}c@{}}\underline{100.00}\\ \tiny \color[HTML]{9B9B9B} (0.00)\end{tabular} &
  \multicolumn{1}{c|}{\begin{tabular}[c]{@{}c@{}}\underline{7.82}\\ \tiny \color[HTML]{9B9B9B} (0.21)\end{tabular}} &
  \begin{tabular}[c]{@{}c@{}}\underline{12.76}\\ \tiny \color[HTML]{9B9B9B} (0.32)\end{tabular} &
  \begin{tabular}[c]{@{}c@{}}\underline{3.38}\\ \tiny \color[HTML]{9B9B9B} (0.20)\end{tabular} &
  \begin{tabular}[c]{@{}c@{}}\underline{21.81}\\ \tiny \color[HTML]{9B9B9B} (0.19)\end{tabular} &
  \multicolumn{1}{c|}{\begin{tabular}[c]{@{}c@{}}\underline{23.34}\\ \tiny \color[HTML]{9B9B9B} (0.34)\end{tabular}} &
  \begin{tabular}[c]{@{}c@{}}73.34\\ \tiny \color[HTML]{9B9B9B} (0.47)\end{tabular} &
  \begin{tabular}[c]{@{}c@{}}\underline{11.19}\\ \tiny \color[HTML]{9B9B9B} (0.47)\end{tabular} &
  \begin{tabular}[c]{@{}c@{}}\underline{100.00}\\ \tiny \color[HTML]{9B9B9B} (0.00)\end{tabular} &
  \begin{tabular}[c]{@{}c@{}}\underline{23.16}\\ \tiny \color[HTML]{9B9B9B} (0.13)\end{tabular} \\
\begin{tabular}[c]{@{}l@{}}\texttt{TERM}\\ \tiny\cite{term}\end{tabular} &
  \begin{tabular}[c]{@{}c@{}}90.71\\ \tiny \color[HTML]{9B9B9B} (0.65)\end{tabular} &
  \begin{tabular}[c]{@{}c@{}}55.66\\ \tiny \color[HTML]{9B9B9B} (0.93)\end{tabular} &
  \begin{tabular}[c]{@{}c@{}}\underline{100.00}\\ \tiny \color[HTML]{9B9B9B} (0.00)\end{tabular} &
  \multicolumn{1}{c|}{\begin{tabular}[c]{@{}c@{}}7.90\\ \tiny \color[HTML]{9B9B9B} (0.38)\end{tabular}} &
  \begin{tabular}[c]{@{}c@{}}12.02\\ \tiny \color[HTML]{9B9B9B} (0.16)\end{tabular} &
  \begin{tabular}[c]{@{}c@{}}2.85\\ \tiny \color[HTML]{9B9B9B} (0.41)\end{tabular} &
  \begin{tabular}[c]{@{}c@{}}20.74\\ \tiny \color[HTML]{9B9B9B} (0.65)\end{tabular} &
  \multicolumn{1}{c|}{\begin{tabular}[c]{@{}c@{}}24.15\\ \tiny \color[HTML]{9B9B9B} (1.05)\end{tabular}} &
  \begin{tabular}[c]{@{}c@{}}70.90\\ \tiny \color[HTML]{9B9B9B} (2.96)\end{tabular} &
  \begin{tabular}[c]{@{}c@{}}5.98\\ \tiny \color[HTML]{9B9B9B} (1.10)\end{tabular} &
  \begin{tabular}[c]{@{}c@{}}\underline{100.00}\\ \tiny \color[HTML]{9B9B9B} (0.00)\end{tabular} &
  \begin{tabular}[c]{@{}c@{}}26.37\\ \tiny \color[HTML]{9B9B9B} (1.32)\end{tabular} \\
\begin{tabular}[c]{@{}l@{}}\texttt{FedMGDA}\\ \tiny\cite{fedmgda}\end{tabular} &
  \begin{tabular}[c]{@{}c@{}}88.33\\ \tiny \color[HTML]{9B9B9B} (0.63)\end{tabular} &
  \begin{tabular}[c]{@{}c@{}}48.60\\ \tiny \color[HTML]{9B9B9B} (25.85)\end{tabular} &
  \begin{tabular}[c]{@{}c@{}}\underline{100.00}\\ \tiny \color[HTML]{9B9B9B} (0.00)\end{tabular} &
  \multicolumn{1}{c|}{\begin{tabular}[c]{@{}c@{}}9.75\\ \tiny \color[HTML]{9B9B9B} (0.59)\end{tabular}} &
  \begin{tabular}[c]{@{}c@{}}10.58\\ \tiny \color[HTML]{9B9B9B} (0.18)\end{tabular} &
  \begin{tabular}[c]{@{}c@{}}2.35\\ \tiny \color[HTML]{9B9B9B} (0.20)\end{tabular} &
  \begin{tabular}[c]{@{}c@{}}19.09\\ \tiny \color[HTML]{9B9B9B} (0.62)\end{tabular} &
  \multicolumn{1}{c|}{\begin{tabular}[c]{@{}c@{}}25.20\\ \tiny \color[HTML]{9B9B9B} (0.22)\end{tabular}} &
  \begin{tabular}[c]{@{}c@{}}72.45\\ \tiny \color[HTML]{9B9B9B} (1.88)\end{tabular} &
  \begin{tabular}[c]{@{}c@{}}9.65\\ \tiny \color[HTML]{9B9B9B} (2.90)\end{tabular} &
  \begin{tabular}[c]{@{}c@{}}\underline{100.00}\\ \tiny \color[HTML]{9B9B9B} (0.00)\end{tabular} &
  \begin{tabular}[c]{@{}c@{}}23.68\\ \tiny \color[HTML]{9B9B9B} (1.27)\end{tabular} \\
\begin{tabular}[c]{@{}l@{}}\texttt{PropFair}\\ \tiny\cite{propfair}\end{tabular} &
  \begin{tabular}[c]{@{}c@{}}87.25\\ \tiny \color[HTML]{9B9B9B} (5.01)\end{tabular} &
  \begin{tabular}[c]{@{}c@{}}48.11\\ \tiny \color[HTML]{9B9B9B} (10.03)\end{tabular} &
  \begin{tabular}[c]{@{}c@{}}\underline{100.00}\\ \tiny \color[HTML]{9B9B9B} (0.00)\end{tabular} &
  \multicolumn{1}{c|}{\begin{tabular}[c]{@{}c@{}}10.39\\ \tiny \color[HTML]{9B9B9B} (3.43)\end{tabular}} &
  \begin{tabular}[c]{@{}c@{}}11.26\\ \tiny \color[HTML]{9B9B9B} (0.71)\end{tabular} &
  \begin{tabular}[c]{@{}c@{}}1.95\\ \tiny \color[HTML]{9B9B9B} (0.32)\end{tabular} &
  \begin{tabular}[c]{@{}c@{}}21.33\\ \tiny \color[HTML]{9B9B9B} (0.92)\end{tabular} &
  \multicolumn{1}{c|}{\begin{tabular}[c]{@{}c@{}}25.97\\ \tiny \color[HTML]{9B9B9B} (1.02)\end{tabular}} &
  \begin{tabular}[c]{@{}c@{}}73.64\\ \tiny \color[HTML]{9B9B9B} (3.31)\end{tabular} &
  \begin{tabular}[c]{@{}c@{}}7.30\\ \tiny \color[HTML]{9B9B9B} (1.02)\end{tabular} &
  \begin{tabular}[c]{@{}c@{}}\underline{100.00}\\ \tiny \color[HTML]{9B9B9B} (0.00)\end{tabular} &
  \begin{tabular}[c]{@{}c@{}}24.97\\ \tiny \color[HTML]{9B9B9B} (1.09)\end{tabular} \\
\rowcolor[HTML]{FFF5E6} 
\begin{tabular}[c]{@{}l@{}}\texttt{AAggFF-D}\\ (Proposed)\end{tabular} &
  \begin{tabular}[c]{@{}c@{}}\textbf{91.27}\\ \tiny \color[HTML]{9B9B9B} (0.07)\end{tabular} &
  \begin{tabular}[c]{@{}c@{}}\textbf{56.71}\\ \tiny \color[HTML]{9B9B9B} (0.08)\end{tabular} &
  \cellcolor[HTML]{FFF5E6}\begin{tabular}[c]{@{}c@{}}\underline{100.00}\\ \tiny \color[HTML]{9B9B9B} (0.00)\end{tabular} &
  \multicolumn{1}{c|}{\cellcolor[HTML]{FFF5E6}\begin{tabular}[c]{@{}c@{}}\textbf{7.54}\\ \tiny \color[HTML]{9B9B9B} (0.04)\end{tabular}} &
  \begin{tabular}[c]{@{}c@{}}\textbf{12.95}\\ \tiny \color[HTML]{9B9B9B} (0.39)\end{tabular} &
  \begin{tabular}[c]{@{}c@{}}\textbf{4.75}\\ \tiny \color[HTML]{9B9B9B} (0.76)\end{tabular} &
  \begin{tabular}[c]{@{}c@{}}\textbf{22.81}\\ \tiny \color[HTML]{9B9B9B} (1.36)\end{tabular} &
  \multicolumn{1}{c|}{\cellcolor[HTML]{FFF5E6}\begin{tabular}[c]{@{}c@{}}\textbf{22.59}\\ \tiny \color[HTML]{9B9B9B} (0.28)\end{tabular}} &
  \begin{tabular}[c]{@{}c@{}}\textbf{76.68}\\ \tiny \color[HTML]{9B9B9B} (0.80)\end{tabular} &
  \begin{tabular}[c]{@{}c@{}}\textbf{14.54}\\ \tiny \color[HTML]{9B9B9B} (2.58)\end{tabular} &
  \cellcolor[HTML]{FFF5E6}\begin{tabular}[c]{@{}c@{}}\underline{100.00}\\ \tiny \color[HTML]{9B9B9B} (0.00)\end{tabular} &
  \begin{tabular}[c]{@{}c@{}}\textbf{21.42}\\ \tiny \color[HTML]{9B9B9B} (0.81)\end{tabular} \\ \bottomrule
\end{tabular}%
}
\end{table*}

\section{Regret Analysis}
\label{sec:analyses}
    In this section, we provide theoretical guarantees of our methods, \texttt{AAggFF-S} and \texttt{AAggFF-D} in terms of sequential decision making.
    The common objective for OCO algorithms is to \textit{minimize the regret} across a sequence of decision losses in eq.~\eqref{eq:regret}.
    We provide sublinear regret upper bounds in terms of $T$ as follows.
    
    \begin{theorem}
    \label{thm:crosssilo} (Regret Upper Bound for \texttt{AAggFF-S} (i.e., ONS \cite{ons1, ons2}))
        With the notation in eq.~\eqref{eq:ons},
        suppose for every $\boldsymbol{p}\in\Delta_{K-1}$, 
        and for every $t\in[T]$, 
        let the decisions $\{ \boldsymbol{p}^{(t)} : t\in[T] \}$ be derived by \texttt{AAggFF-S} for $K$ clients during $T$ rounds in Algorithm~\ref{alg:aaggff-s}.
        Then, the regret defined in eq.~\eqref{eq:regret} is bounded above as follows, where $\alpha$ and $\beta$ are determined as $\alpha=4KL_\infty, \beta=\frac{1}{4L_\infty}$.
        \begin{align*}
        \begin{gathered}
            \normalfont\text{Regret}^{(T)}\left(\boldsymbol{p}^\star\right) \leq 2 L_\infty K \left( 1 + \log \left( 1 + \frac{T}{16 K} \right) \right).
        \end{gathered}
        \end{align*}
    \end{theorem}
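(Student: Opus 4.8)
The plan is to read Theorem~\ref{thm:crosssilo} as the regret analysis of the Online Newton Step on the simplex and to run the usual three moves: linearize the decision losses via exp-concavity, bound the linearized regret with a proximal-FTRL stability lemma (engineered so the regularizer's quadratic terms at $\boldsymbol{p}^\star$ absorb the exp-concavity remainder), and close with the log-determinant potential. Writing $\boldsymbol{g}^{(t)}=\nabla\ell^{(t)}(\boldsymbol{p}^{(t)})=-\boldsymbol{r}^{(t)}/(1+\langle\boldsymbol{p}^{(t)},\boldsymbol{r}^{(t)}\rangle)$ and using that $\rho^{(t)}$ keeps $r^{(t)}_i\in[C_1,C_2]$ (Definition~\ref{def:resp_cdf}) so that Lemma~\ref{lemma:lipschitz} gives $\|\boldsymbol{g}^{(t)}\|_\infty\le L_\infty$: since $f(x)=-\log(1+x)$ satisfies $f''=(f')^{2}$, the loss $\ell^{(t)}(\boldsymbol{p})=f(\langle\boldsymbol{p},\boldsymbol{r}^{(t)}\rangle)$ is $1$-exp-concave on $\Delta_{K-1}$, and since $|\langle\boldsymbol{g}^{(t)},\boldsymbol{q}-\boldsymbol{p}^{(t)}\rangle|$ is uniformly bounded over $\boldsymbol{q}\in\Delta_{K-1}$ (by $L_\infty$ after re-centering $\boldsymbol{g}^{(t)}$, which is legitimate because $\boldsymbol{q}-\boldsymbol{p}^{(t)}$ sums to zero), the standard exp-concavity inequality --- coming from $-\log(1-z)\ge z+cz^{2}$ on a small enough interval --- yields
\[
\ell^{(t)}(\boldsymbol{q})\ \ge\ \ell^{(t)}(\boldsymbol{p}^{(t)})+\langle\boldsymbol{g}^{(t)},\boldsymbol{q}-\boldsymbol{p}^{(t)}\rangle+\tfrac{\beta}{2}\bigl(\langle\boldsymbol{g}^{(t)},\boldsymbol{q}-\boldsymbol{p}^{(t)}\rangle\bigr)^{2}\quad\text{for all }t,\ \boldsymbol{q}\in\Delta_{K-1},
\]
with $\beta=\tfrac1{4L_\infty}$. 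Taking $\boldsymbol{q}=\boldsymbol{p}^\star$ and summing reduces $\mathrm{Regret}^{(T)}(\boldsymbol{p}^\star)$ to $\sum_t\langle\boldsymbol{g}^{(t)},\boldsymbol{p}^{(t)}-\boldsymbol{p}^\star\rangle-\tfrac{\beta}{2}\sum_t(\langle\boldsymbol{g}^{(t)},\boldsymbol{p}^{(t)}-\boldsymbol{p}^\star\rangle)^{2}$.

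Next I would treat the update~\eqref{eq:ons} as FTRL on the linearized losses $\tilde\ell^{(t)}(\boldsymbol{p})=\langle\boldsymbol{g}^{(t)},\boldsymbol{p}\rangle$ with the proximal regularizer $R^{(t+1)}(\boldsymbol{p})=\tfrac{\alpha}{2}\|\boldsymbol{p}\|_2^{2}+\tfrac{\beta}{2}\sum_{\tau\le t}(\langle\boldsymbol{g}^{(\tau)},\boldsymbol{p}-\boldsymbol{p}^{(\tau)}\rangle)^{2}$, whose Hessian is $A_t:=\alpha I_K+\beta\sum_{\tau\le t}\boldsymbol{g}^{(\tau)}\boldsymbol{g}^{(\tau)\top}$ with $A_0=\alpha I_K$. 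The proximal-FTRL stability lemma bounds $\sum_t\tilde\ell^{(t)}(\boldsymbol{p}^{(t)})-\sum_t\tilde\ell^{(t)}(\boldsymbol{p}^\star)$ by $R^{(T+1)}(\boldsymbol{p}^\star)-R^{(1)}(\boldsymbol{p}^{(1)})+\tfrac12\sum_t\boldsymbol{g}^{(t)\top}A_t^{-1}\boldsymbol{g}^{(t)}$; the quadratic part of $R^{(T+1)}(\boldsymbol{p}^\star)$ coincides with the negative remainder from the first step (the sign inside is immaterial since it is squared), so it cancels, and $R^{(1)}(\boldsymbol{p}^{(1)})\ge0$ only helps, leaving $\mathrm{Regret}^{(T)}(\boldsymbol{p}^\star)\le\tfrac{\alpha}{2}\|\boldsymbol{p}^\star\|_2^2+\tfrac12\sum_t\boldsymbol{g}^{(t)\top}A_t^{-1}\boldsymbol{g}^{(t)}$. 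Then the elliptical-potential identity $\boldsymbol{g}^{(t)\top}A_t^{-1}\boldsymbol{g}^{(t)}=\tfrac1\beta(1-\det A_{t-1}/\det A_t)\le\tfrac1\beta\log(\det A_t/\det A_{t-1})$ telescopes to $\sum_t\boldsymbol{g}^{(t)\top}A_t^{-1}\boldsymbol{g}^{(t)}\le\tfrac1\beta\log(\det A_T/\det A_0)$, and AM--GM together with $\|\boldsymbol{g}^{(t)}\|_2^2\le K\|\boldsymbol{g}^{(t)}\|_\infty^2\le KL_\infty^2$ gives $\det A_T\le(\tfrac1K\operatorname{tr}A_T)^K\le(\alpha+\beta TL_\infty^2)^K$, hence $\sum_t\boldsymbol{g}^{(t)\top}A_t^{-1}\boldsymbol{g}^{(t)}\le\tfrac{K}{\beta}\log(1+\beta TL_\infty^2/\alpha)$.

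Finally, substituting $\alpha=4KL_\infty$, $\beta=\tfrac1{4L_\infty}$ makes $\beta TL_\infty^2/\alpha=T/(16K)$, gives $\tfrac{\alpha}{2}\|\boldsymbol{p}^\star\|_2^2\le\tfrac{\alpha}{2}=2L_\infty K$ (since $\|\boldsymbol{p}^\star\|_2\le\|\boldsymbol{p}^\star\|_1=1$) and $\tfrac12\cdot\tfrac{K}{\beta}\log(1+\tfrac{T}{16K})=2L_\infty K\log(1+\tfrac{T}{16K})$, whose sum is the claimed $2L_\infty K\bigl(1+\log(1+\tfrac{T}{16K})\bigr)$. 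The step I expect to demand the most care is the middle one: invoking the proximal-FTRL stability lemma so that exactly the coefficient $\tfrac12$ appears in front of $\sum_t\boldsymbol{g}^{(t)\top}A_t^{-1}\boldsymbol{g}^{(t)}$ and so that the regularizer's quadratic terms evaluated at $\boldsymbol{p}^\star$ neutralize the curvature remainder, along with pinning down the precise constant $\beta=\tfrac1{4L_\infty}$ in the exp-concavity inequality; the potential estimate and the final arithmetic are then routine.
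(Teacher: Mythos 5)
Your proposal is correct and follows essentially the same route as the paper's proof: exp-concavity with $\beta=\tfrac{1}{4L_\infty}$ to add the quadratic remainder, the proximal-FTRL regret decomposition in which the quadratic part of $R^{(T+1)}(\boldsymbol{p}^\star)$ cancels that remainder, the log-determinant/elliptical-potential bound with AM--GM and the trace estimate $\beta\sum_t\|\boldsymbol{g}^{(t)}\|_2^2\le\beta KTL_\infty^2$, and finally $\alpha=4KL_\infty$. The only differences are bookkeeping (you use strong-convexity modulus $1$ w.r.t.\ $\Vert\cdot\Vert_{A_t}$ together with the $1/\beta$ factor in the potential, while the paper uses modulus $\beta$ and a potential bound without it — both land on $\tfrac{K}{2\beta}\log(1+\tfrac{TL_\infty}{4\alpha})$), so no further changes are needed.
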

    From the Theorem~\ref{thm:crosssilo}, we can enjoy $\mathcal{O}\left(L_\infty K \log{T}\right)$ regret upper bound,
    which is an acceptable result, considering a typical assumption in the cross-silo setting (i.e., $K<T$).
    
    For the cross-device setting, we first present the full synchronization setting,
    which requires no extra adjustment.
    \begin{theorem}
    \label{thm:crossdevice_full} (Regret Upper Bound for \texttt{AAggFF-D} with Full-Client Participation)
        With the notation in (\ref{eq:lin_ftrl}),
        suppose for every $\boldsymbol{p}\in\Delta_{K-1}$, 
        and for every $t\in[T]$, 
        let the decisions $\{ \boldsymbol{p}^{(t)} : t\in[T] \}$ be derived by \texttt{AAggFF-D} for $K$ clients with client sampling probability $C=1$ during $T$ rounds in Algorithm~\ref{alg:aaggff-d}.
        Then, the regret defined in (\ref{eq:regret}) is bounded above as follows.
        \begin{align*}
        \begin{gathered}
            \normalfont\text{Regret}^{(T)}\left(\boldsymbol{p}^\star\right) \leq 2L_\infty\sqrt{T\log{K}}.
        \end{gathered}
        \end{align*}
    \end{theorem}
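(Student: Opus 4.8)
The plan is to recognize Theorem~\ref{thm:crossdevice_full} as an instance of the classical Follow-the-Regularized-Leader (FTRL) regret bound with a time-varying entropic regularizer on the simplex, preceded by a convexity reduction. First I would observe that the doubly-robust machinery of Section~\ref{subsubsec:aaggff_d} degenerates when $C=1$: then $S^{(t)}$ is the full index set, so $\mathbb{I}(i\in S^{(t)})/C=1$ for every $i$ and the estimator of eq.~\eqref{eq:dr_response} satisfies $\breve{\boldsymbol{r}}^{(t)}=\boldsymbol{r}^{(t)}$, so Algorithm~\ref{alg:aaggff-d} feeds the \emph{exact} subgradient $\boldsymbol{g}^{(t)}=\nabla\ell^{(t)}(\boldsymbol{p}^{(t)})=-\boldsymbol{r}^{(t)}/(1+\langle\boldsymbol{p}^{(t)},\boldsymbol{r}^{(t)}\rangle)$ into update~\eqref{eq:lin_ftrl}. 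Since $r_i^{(t)}\in[C_1,C_2]$ and $1+\langle\boldsymbol{p}^{(t)},\boldsymbol{r}^{(t)}\rangle\ge 1+C_1$, this immediately yields $\|\boldsymbol{g}^{(t)}\|_\infty\le C_2/(1+C_1)=L_\infty$, consistent with Lemma~\ref{lemma:lipschitz}; in particular no estimation error enters and the governing constant is $L_\infty$ rather than $\breve{L}_\infty$.

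Next I would pass to linear losses. Convexity of $\ell^{(t)}$ gives $\ell^{(t)}(\boldsymbol{p}^{(t)})-\ell^{(t)}(\boldsymbol{p}^\star)\le\langle\boldsymbol{g}^{(t)},\boldsymbol{p}^{(t)}-\boldsymbol{p}^\star\rangle$, so it suffices to bound the linearized regret $\sum_{t=1}^T\langle\boldsymbol{g}^{(t)},\boldsymbol{p}^{(t)}-\boldsymbol{p}^\star\rangle$. The iterates produced by~\eqref{eq:lin_ftrl} are precisely the FTRL iterates for these linearized losses with regularizer $\eta^{(t)}R$, where $R(\boldsymbol{p})=\sum_{i}p_i\log p_i+\log K$ is the negative entropy shifted to be nonnegative on $\Delta_{K-1}$ and $\eta^{(t)}=L_\infty\sqrt{t}/\sqrt{\log K}$ is nondecreasing in $t$; this is exactly the closed form asserted in Remark~\ref{remark:closed_form}.

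Then I would invoke the standard regret bound for FTRL with nondecreasing regularizers (see, e.g., \cite{oco3}). The structural facts needed are that $R$ is $1$-strongly convex with respect to $\|\cdot\|_1$ on the simplex (Pinsker's inequality), so $\eta^{(t)}R$ is $\eta^{(t)}$-strongly convex with dual norm $\|\cdot\|_\infty$, and that $R$ ranges in $[0,\log K]$ on $\Delta_{K-1}$. The lemma then yields a bound of the form $\eta^{(T)}\bigl(\max R-\min R\bigr)+\tfrac12\sum_{t=1}^T\|\boldsymbol{g}^{(t)}\|_\infty^2/\eta^{(t)}$; substituting $\max R-\min R=\log K$, $\|\boldsymbol{g}^{(t)}\|_\infty\le L_\infty$, and $\eta^{(t)}=L_\infty\sqrt{t}/\sqrt{\log K}$, the penalty term equals $L_\infty\sqrt{T\log K}$, while the stability term is $\tfrac{L_\infty\sqrt{\log K}}{2}\sum_{t=1}^T t^{-1/2}\le L_\infty\sqrt{T\log K}$ via the elementary estimate $\sum_{t=1}^T t^{-1/2}\le 2\sqrt{T}$; adding the two gives the claimed $2L_\infty\sqrt{T\log K}$.

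I expect the only delicate point to be the bookkeeping in the last step: the exact constant $2$ is sensitive to which regularizer index ($\eta^{(t)}$ versus $\eta^{(t+1)}$) appears in the penalty and stability terms of the precise FTRL lemma one cites (lazy versus proximal form, and whether $\psi_T$ or $\psi_{T+1}$ enters), and the step size in Remark~\ref{remark:closed_form} is calibrated so that these two contributions balance at $2L_\infty\sqrt{T\log K}$. Everything else — the convexity reduction, the $1$-strong convexity of the negative entropy in $\|\cdot\|_1$, the uniform bound $\|\boldsymbol{g}^{(t)}\|_\infty\le L_\infty$, and the harmonic-series estimate — is routine.
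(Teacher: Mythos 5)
Your proposal is correct and follows essentially the same route as the paper: reduce to linearized losses by convexity, view the update as FTRL with the time-varying negative-entropy regularizer $\eta^{(t)}R$ ($1$-strongly convex w.r.t.\ $\Vert\cdot\Vert_1$), bound the gradients by $L_\infty$, and balance the penalty term $\eta^{(T)}\log K$ against the stability term $\tfrac{1}{2}\sum_{t=1}^T L_\infty^2/\eta^{(t)}$ with $\eta^{(t)}=L_\infty\sqrt{t}/\sqrt{\log K}$ and $\sum_{t=1}^T t^{-1/2}\le 2\sqrt{T}$. The paper simply re-derives the generic FTRL bound you cite (via its Lemmas on the regret equality and the progress bound, both taken from the same OCO reference), so the two arguments coincide up to your cosmetic $+\log K$ shift of the regularizer.
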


    When equipped with a client sampling, the randomness from the sampling should be considered.
    Due to local losses of selected clients can only be observed, 
    \texttt{AAggFF-D} should be equipped with the unbiased estimator of a response vector 
    (from Lemma~\ref{lemma:unbiased_resp}) 
    and a corresponding linearly approximated gradient vector 
    (from Lemma~\ref{lemma:linearized_grad}).
    Since they are unbiased estimators, the expected regret is also the same.
    
    \begin{corollary}
    \label{cor:crossdevice_partial} (Regret Upper Bound for \texttt{AAggFF-D} with Partial-Client Participation)
    With the client sampling probability $C\in(0,1)$,
    the DR estimator of a partially observed response $\breve{\boldsymbol{r}}^{(t)}$
    and corresponding linearized gradient $\breve{\boldsymbol{g}}^{(t)}$ for all $t\in[T]$,
    the regret defined in (\ref{eq:regret}) is bounded above in expectation as follows.
    \begin{align*}
    \begin{gathered}
        \mathbb{E}\left[\normalfont\text{Regret}^{(T)}(\boldsymbol{p}^{\star})\right] 
        \leq
        \mathcal{O}\left(L_\infty\sqrt{T\log{K}}\right).
    \end{gathered}
    \end{align*}
    \end{corollary}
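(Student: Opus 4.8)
The strategy is to reduce to the deterministic argument behind Theorem~\ref{thm:crossdevice_full}, applied to the \emph{surrogate} linear losses that \texttt{AAggFF-D} actually optimizes, and then to take expectation over the client sampling using the unbiasedness guarantees of Lemmas~\ref{lemma:unbiased_resp} and~\ref{lemma:linearized_grad}. With partial participation, \texttt{AAggFF-D} is exactly the linearized-FTRL scheme~\eqref{eq:lin_ftrl} with the negative-entropy regularizer, step size $\eta^{(t)}=\breve{L}_\infty\sqrt{t}/\sqrt{\log K}$, and the DR-based gradients $\breve{\boldsymbol g}^{(t)}$ substituted for $\boldsymbol g^{(t)}$; by Lemma~\ref{lemma:lipschitz_lin_grad} these satisfy $\Vert\breve{\boldsymbol g}^{(t)}\Vert_\infty\le\breve{L}_\infty$. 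Hence the same chain of inequalities used for Theorem~\ref{thm:crossdevice_full} (the standard EG/FTRL-over-the-simplex analysis) applies verbatim with $L_\infty$ replaced by $\breve{L}_\infty$, giving, for every realization of the sampled index sets $\{S^{(t)}\}$,
\[
\sum_{t=1}^{T}\langle\breve{\boldsymbol g}^{(t)},\boldsymbol p^{(t)}-\boldsymbol p^\star\rangle\;\le\;2\,\breve{L}_\infty\sqrt{T\log K}.
\]

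Next I would pass from this surrogate quantity to the true regret. Since each $\ell^{(t)}$ is convex, $\ell^{(t)}(\boldsymbol p^{(t)})-\ell^{(t)}(\boldsymbol p^\star)\le\langle\boldsymbol g^{(t)},\boldsymbol p^{(t)}-\boldsymbol p^\star\rangle$, so $\text{Regret}^{(T)}(\boldsymbol p^\star)\le\sum_{t=1}^{T}\langle\boldsymbol g^{(t)},\boldsymbol p^{(t)}-\boldsymbol p^\star\rangle$. I would then split $\boldsymbol g^{(t)}=\breve{\boldsymbol g}^{(t)}+(\tilde{\boldsymbol g}^{(t)}-\breve{\boldsymbol g}^{(t)})+(\boldsymbol g^{(t)}-\tilde{\boldsymbol g}^{(t)})$ and take total expectation, conditioning on the $\sigma$-field $\mathcal F_{t-1}$ generated by $S^{(1)},\dots,S^{(t-1)}$: the iterate $\boldsymbol p^{(t)}$ is $\mathcal F_{t-1}$-measurable, and because the underlying response sequence $\{\boldsymbol r^{(t)}\}$ is non-random (only the sampling is stochastic), $\boldsymbol p^\star$ is a fixed comparator. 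Combining this with $\mathbb E[\breve{\boldsymbol g}^{(t)}\mid\mathcal F_{t-1}]=\tilde{\boldsymbol g}^{(t)}$ from Lemmas~\ref{lemma:unbiased_resp}--\ref{lemma:linearized_grad} and the tower rule, the middle term vanishes in expectation, leaving
\[
\mathbb E\big[\text{Regret}^{(T)}(\boldsymbol p^\star)\big]\;\le\;2\,\breve{L}_\infty\sqrt{T\log K}\;+\;\sum_{t=1}^{T}\mathbb E\big[\langle\boldsymbol g^{(t)}-\tilde{\boldsymbol g}^{(t)},\boldsymbol p^{(t)}-\boldsymbol p^\star\rangle\big].
\]

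Finally I would control the residual linearization term and collect constants. Each entry $r_i^{(t)}$ lies in the fixed interval $[C_1,C_2]$ with $0<C_1<C_2$, so $\boldsymbol r\mapsto\nabla\ell^{(t)}(\boldsymbol p)$ is smooth on a compact set and $\Vert\boldsymbol g^{(t)}-\tilde{\boldsymbol g}^{(t)}\Vert_\infty$ is governed by the second-order Taylor remainder around the reference $\boldsymbol r_0^{(t)}=\bar{\mathrm r}^{(t)}\boldsymbol 1_K$, while $\Vert\boldsymbol p^{(t)}-\boldsymbol p^\star\Vert_1\le 2$; together with the fact that $\breve{L}_\infty$ inflates $L_\infty$ only by factors of $1/C$ and of $C_1,C_2$ (hence $\breve{L}_\infty=\mathcal O(L_\infty)$ once $C,C_1,C_2$ are viewed as fixed constants of the setup), this collapses the whole right-hand side to $\mathcal O(L_\infty\sqrt{T\log K})$, as claimed. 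The hard part is precisely this last step: the reference point $\boldsymbol r_0^{(t)}$ of the DR estimator is itself a function of the sampled set $S^{(t)}$, so the unbiasedness of Lemmas~\ref{lemma:unbiased_resp}--\ref{lemma:linearized_grad} (stated for a reference not entangled with the sampling) and the bound on the linearization bias must be reconciled jointly -- either by conditioning on $S^{(t)}$ when forming the reference and taking a further expectation, or by showing the bias contributed by the random reference is lower order in $T$ -- rather than handled term by term; everything else is a mechanical re-run of the proof of Theorem~\ref{thm:crossdevice_full}.
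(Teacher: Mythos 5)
Your plan follows essentially the same route as the paper's own proof: bound the true regret by the linearized regret via convexity, run the full-participation FTRL/EG analysis of Theorem~\ref{thm:crossdevice_full} on the DR-based surrogate losses $\langle\boldsymbol p,\breve{\boldsymbol g}^{(t)}\rangle$ with $\breve L_\infty$ in place of $L_\infty$ (kept $\mathcal{O}(1)$ by the choice of $C_1,C_2$ discussed in Appendix~\ref{app:range}), and then pass to expectation using the unbiasedness of $\breve{\boldsymbol r}^{(t)}$ and $\breve{\boldsymbol g}^{(t)}$ from Lemmas~\ref{lemma:unbiased_resp} and~\ref{lemma:linearized_grad}. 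If anything, you are more explicit than the paper: the residual terms you isolate --- the linearization bias $\boldsymbol g^{(t)}-\tilde{\boldsymbol g}^{(t)}$ and the fact that the reference $\boldsymbol r_0^{(t)}=\bar{\mathrm r}^{(t)}\boldsymbol 1_K$ is itself a function of $S^{(t)}$ --- are exactly the steps the paper's proof absorbs with a literal ``$\approx$'' when invoking Lemma~\ref{lemma:linearized_grad}, so the ``hard part'' you flag is not quantitatively resolved there either.
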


\section{Experimental Results}
\label{sec:experiments}
    We design experiments to evaluate empirical performances of our proposed framework \texttt{AAggFF}, 
    composed of sub-methods \texttt{AAggFF-S} and \texttt{AAggFF-D}.

\paragraph{Experimental Setup}
\label{subsec:exp_setup}
    \begin{table}[ht]
\centering
\caption{Description of Federated Benchmarks for Cross-Silo and Cross-Device Settings}
\label{tab:dataset_desc}
\resizebox{\columnwidth}{!}{%
\tiny
\begin{tabular}{!{}lcc|lcc@{}}
\toprule
\multicolumn{3}{l|}{\textbf{Cross-Silo}} & \multicolumn{3}{l}{\textbf{Cross-Device}} \\
\textbf{Dataset}     & $K$     & $T$     & \textbf{Dataset}    & $K$      & $T$      \\ \midrule
Berka                & 7       & 100      & CelebA              & 9,343    & 3,000    \\
MQP                  & 11      & 100     & Reddit              & 817      & 300      \\
ISIC                 & 6       & 50      & SpeechCommands      & 2,005    & 500    \\ \bottomrule
\end{tabular}%
}
\end{table}
    We conduct experiments on datasets mirroring \textit{realistic scenarios} in federated systems: 
    multiple modalities (vision, text, speech, and tabular form) and natural data partitioning. 
    We briefly summarize FL settings of each dataset in Table~\ref{tab:dataset_desc}.
    For the \textit{cross-silo} setting, 
    we used Berka tabular dataset \cite{berka},
    MQP clinical text dataset \cite{mqp}, 
    and ISIC oncological image dataset \cite{isic} (also a part of FLamby benchmark \cite{flamby}).
    For the \textit{cross-device} setting,
    we used CelebA vision dataset \cite{celeba}, 
    Reddit text dataset (both are parts of LEAF benchmark \cite{leaf})
    and SpeechCommands audio dataset \cite{speechcommands}. 
    
    Instead of manually partitioning data to simulate statistical heterogeneity, 
    we adopt natural client partitions inherent in each dataset.
    Each client dataset is split into an 80\% training set and a 20\% test set in a stratified manner where applicable.
    All experiments are run with 3 different random seeds after tuning hyperparameters.
    See Appendix~\ref{app:exp_details} for full descriptions of the experimental setup.

\paragraph{Improvement in the Client-Level Fairness}
\label{subsec:exp_improved_fairness}
    We compare our methods with existing fair FL methods including \texttt{FedAvg} \cite{fedavg}, \texttt{AFL} \cite{afl}, \texttt{q-FedAvg} \cite{qffl}, \texttt{TERM} \cite{term}, \texttt{FedMGDA} \cite{fedmgda}, and \texttt{PropFair} \cite{propfair}.
    Since \texttt{AFL} requires full synchronization of clients every round, it is only compared in the cross-silo setting.
    
    In the \textit{cross-silo} setting, we assume all $K$ clients participate in $T$ federation rounds (i.e., $C=1$), 
    and in the \textit{cross-device} setting, the client participation rate $C\in(0, 1)$ is set to ensure $5$ among $K$ clients are participating in each round.
    We evaluate each dataset using appropriate metrics for tasks as indicated under the dataset name in Table~\ref{tab:result_silo} and~\ref{tab:result_device}, where the average, the best (10\%), the worst (10\%), and Gini coefficient\footnote{The Gini coefficient is inflated by ($\times 10^2$) for readability.} of clients' performance distributions are reported with the standard deviation inside parentheses in gray color below the averaged metric. 
    
    From the results, we verify that \texttt{AAggFF} can lead to enhanced worst-case metric and Gini coefficient in both settings while retaining competitive average performance to other baselines.
    Remarkably, along with the improved worst-case performance, the small Gini coefficient indicates that performances of clients are close to each other,
    which is directly translated into the improved client-level fairness.

\paragraph{Connection to Accuracy Parity}
    \begin{table}[H]
\centering
\caption{Accuracy Parity Gap in the Cross-Silo Setting}
\label{tab:ag_silo}
\scriptsize
\resizebox{0.9\columnwidth}{!}{%
\begin{tabular}{!{}lccc!{}}
\toprule
\textbf{Dataset} &
  \textbf{Berka} &
  \textbf{MQP} &
  \textbf{ISIC} \\ \cmidrule(l){2-4} 
\textbf{Method} &
  \multicolumn{3}{c}{$\Delta\text{AG } (\downarrow)$} \\ \midrule
\begin{tabular}[c]{@{}l@{}}\texttt{FedAvg}\\ \cite{fedavg}\end{tabular} &
  \begin{tabular}[c]{@{}c@{}}50.84\\ \tiny \color[HTML]{9B9B9B} (23.98)\end{tabular} &
  \begin{tabular}[c]{@{}c@{}}35.30\\ \tiny \color[HTML]{9B9B9B} (5.39)\end{tabular} &
  \begin{tabular}[c]{@{}c@{}}22.64\\ \tiny \color[HTML]{9B9B9B} (4.50)\end{tabular} \\
\begin{tabular}[c]{@{}l@{}}\texttt{AFL}\\ \cite{afl}\end{tabular} &
  \begin{tabular}[c]{@{}c@{}}50.98\\ \tiny \color[HTML]{9B9B9B} (23.78)\end{tabular} &
  \begin{tabular}[c]{@{}c@{}}34.26\\ \tiny \color[HTML]{9B9B9B} (5.16)\end{tabular} &
  \begin{tabular}[c]{@{}c@{}}25.16\\ \tiny \color[HTML]{9B9B9B} (8.01)\end{tabular} \\
\begin{tabular}[c]{@{}l@{}}\texttt{q-FedAvg}\\ \cite{qffl}\end{tabular} &
  \begin{tabular}[c]{@{}c@{}}50.43\\ \tiny \color[HTML]{9B9B9B} (22.15)\end{tabular} &
  \begin{tabular}[c]{@{}c@{}}39.16\\ \tiny \color[HTML]{9B9B9B} (7.13)\end{tabular} &
  \begin{tabular}[c]{@{}c@{}}37.69\\ \tiny \color[HTML]{9B9B9B} (5.52)\end{tabular} \\
\begin{tabular}[c]{@{}l@{}}\texttt{TERM}\\ \cite{term}\end{tabular} &
  \begin{tabular}[c]{@{}c@{}}49.60\\ \tiny \color[HTML]{9B9B9B} (23.74)\end{tabular} &
  \begin{tabular}[c]{@{}c@{}}36.07\\ \tiny \color[HTML]{9B9B9B} (6.93)\end{tabular} &
  \begin{tabular}[c]{@{}c@{}}15.19\\ \tiny \color[HTML]{9B9B9B} (9.26)\end{tabular} \\
\begin{tabular}[c]{@{}l@{}}\texttt{FedMGDA}\\ \cite{fedmgda}\end{tabular} &
  \begin{tabular}[c]{@{}c@{}}44.46\\ \tiny \color[HTML]{9B9B9B} (17.49)\end{tabular} &
  \begin{tabular}[c]{@{}c@{}}34.74\\ \tiny \color[HTML]{9B9B9B} (1.74)\end{tabular} &
  \begin{tabular}[c]{@{}c@{}}37.69\\ \tiny \color[HTML]{9B9B9B} (5.52)\end{tabular} \\
\begin{tabular}[c]{@{}l@{}}\texttt{PropFair}\\ \cite{propfair}\end{tabular} &
  \begin{tabular}[c]{@{}c@{}}49.05\\ \tiny \color[HTML]{9B9B9B} (23.78)\end{tabular} &
  \begin{tabular}[c]{@{}c@{}}37.38\\ \tiny \color[HTML]{9B9B9B} (4.35)\end{tabular} &
  \begin{tabular}[c]{@{}c@{}}32.99\\ \tiny \color[HTML]{9B9B9B} (9.60)\end{tabular} \\
\rowcolor[HTML]{FFF5E6} 
\begin{tabular}[c]{@{}l@{}}\texttt{AAggFF-S}\\ (Proposed)\end{tabular} &
  \begin{tabular}[c]{@{}c@{}}\textbf{44.03}\\ \tiny \color[HTML]{9B9B9B} (17.55)\end{tabular} &
  \begin{tabular}[c]{@{}c@{}}\textbf{33.77}\\ \tiny \color[HTML]{9B9B9B} (3.31)\end{tabular} &
  \begin{tabular}[c]{@{}c@{}}\textbf{13.05}\\ \tiny \color[HTML]{9B9B9B} (2.23)\end{tabular} \\ \bottomrule
\end{tabular}%
}
\end{table}
    \begin{table}[H]
\centering
\caption{Accuracy Parity Gap in the Cross-Device Setting}
\label{tab:ag_device}
\resizebox{0.9\columnwidth}{!}{%
    \begin{tabular}{!{}lccc!{}}
    \toprule
    \textbf{Dataset} &
      \textbf{CelebA} &
      \textbf{Reddit} &
      \textbf{\begin{tabular}[c]{@{}c@{}}Speech\\ Commands\end{tabular}} \\ \cmidrule(l){2-4} 
    \textbf{Method} &
      \multicolumn{3}{c}{$\Delta\text{AG } (\downarrow)$} \\ \midrule
    \begin{tabular}[c]{@{}l@{}}\texttt{FedAvg}\\ \cite{fedavg}\end{tabular} &
      \begin{tabular}[c]{@{}c@{}}44.25\\ \small \color[HTML]{9B9B9B} (0.84)\end{tabular} &
      \begin{tabular}[c]{@{}c@{}}18.36\\ \small \color[HTML]{9B9B9B} (3.52)\end{tabular} &
      \begin{tabular}[c]{@{}c@{}}92.07\\ \small \color[HTML]{9B9B9B} (2.87)\end{tabular} \\
    \begin{tabular}[c]{@{}l@{}}\texttt{q-FedAvg}\\ \cite{qffl}\end{tabular} &
      \begin{tabular}[c]{@{}c@{}}44.27\\ \small \color[HTML]{9B9B9B} (0.85)\end{tabular} &
      \begin{tabular}[c]{@{}c@{}}18.43\\ \small \color[HTML]{9B9B9B} (0.09)\end{tabular} &
      \begin{tabular}[c]{@{}c@{}}88.81\\ \small \color[HTML]{9B9B9B} (0.47)\end{tabular} \\
    \begin{tabular}[c]{@{}l@{}}\texttt{TERM}\\ \cite{term}\end{tabular} &
      \begin{tabular}[c]{@{}c@{}}44.34\\ \small \color[HTML]{9B9B9B} (0.93)\end{tabular} &
      \begin{tabular}[c]{@{}c@{}}17.89\\ \small \color[HTML]{9B9B9B} (0.75)\end{tabular} &
      \begin{tabular}[c]{@{}c@{}}94.02\\ \small \color[HTML]{9B9B9B} (1.10)\end{tabular} \\
    \begin{tabular}[c]{@{}l@{}}\texttt{FedMGDA}\\ \cite{fedmgda}\end{tabular} &
      \begin{tabular}[c]{@{}c@{}}51.40\\ \small \color[HTML]{9B9B9B} (2.59)\end{tabular} &
      \begin{tabular}[c]{@{}c@{}}\textbf{16.74}\\ \small \color[HTML]{9B9B9B} (0.43)\end{tabular} &
      \begin{tabular}[c]{@{}c@{}}90.35\\ \small \color[HTML]{9B9B9B} (2.90)\end{tabular} \\
    \begin{tabular}[c]{@{}l@{}}\texttt{PropFair}\\ \cite{propfair}\end{tabular} &
      \begin{tabular}[c]{@{}c@{}}51.90\\ \small \color[HTML]{9B9B9B} (10.03)\end{tabular} &
      \begin{tabular}[c]{@{}c@{}}19.39\\ \small \color[HTML]{9B9B9B} (0.64)\end{tabular} &
      \begin{tabular}[c]{@{}c@{}}92.70\\ \small \color[HTML]{9B9B9B} (1.02)\end{tabular} \\
    \rowcolor[HTML]{FFF5E6} 
    \begin{tabular}[c]{@{}l@{}}\texttt{AAggFF-D}\\ (Proposed)\end{tabular} &
      \begin{tabular}[c]{@{}c@{}}\textbf{43.29}\\ \small \color[HTML]{9B9B9B} (0.08)\end{tabular} &
      \begin{tabular}[c]{@{}c@{}}18.07\\ \small \color[HTML]{9B9B9B} (0.70)\end{tabular} &
      \begin{tabular}[c]{@{}c@{}}\textbf{85.46}\\ \small \color[HTML]{9B9B9B} (2.58)\end{tabular} \\ \bottomrule
    \end{tabular}%
}
\end{table}
    As discussed in~\cite{qffl}, the client-level fairness can be loosely connected to existing fairness notion, the \textit{accuracy parity}~\cite{accparity}.
    It is guaranteed if the accuracies in protected groups are equal to each other.
    While the accuracy parity requires \textit{equal} performances among specific groups having protected attributes~\cite{accparity,accparity2}, 
    this is too restrictive to be directly applied to FL settings, since each client cannot always be exactly corresponded to the concept of `a group', and each client's local distribution may not be partitioned by protected attributes in the federated system.

    With a relaxation of the original concept, we adopt the notion of accuracy parity for measuring the degree of the client-level fairness in the federated system, i.e., we simply regard the group as each client.
    As a metric, we adopt the accuracy parity gap ($\Delta\text{AG}$) proposed by~\cite{apgap,apgap2}, which is simply defined as an absolute difference between the performance of the best and the worst performing groups (clients).  
    The results are in Table~\ref{tab:ag_silo} and Table~\ref{tab:ag_device}.
    It can be said that the smaller the $\Delta\text{AG}$, the more degree of the accuracy parity fairness (and therefore the client-level fairness) is achieved. 

    It should be noted that strictly achieving the accuracy parity can sometimes require sacrifice in the average performance. 
    This is aligned with the result of Reddit dataset in Table~\ref{tab:ag_device}, where \texttt{FedMGDA}~\cite{fedmgda} achieved the smallest $\Delta\text{AG}$, while its average performance is only 10.58 in Table~\ref{tab:result_device}. 
    This is far lower than our proposed method’s average performance, 12.95.
    Except this case, \texttt{AAggFF} consistently shows the smallest $\Delta\text{AG}$ than other baseline methods, 
    which is important in the perspective of striking a good balance between overall utility and the client-level fairness.

\paragraph{Plug-and-Play Boosting}
\label{subsec:pnp}
    \begin{table}[ht!]
\centering
\caption{Improved Performance of FL Algorithms after being Equipped with \texttt{AAggFF}}
\label{tab:pnp}
\resizebox{\columnwidth}{!}{%
\begin{tabular}{!{}lcccc!{}}
\toprule
\textbf{Dataset} &
  \multicolumn{2}{c}{\begin{tabular}[c]{@{}c@{}}\textbf{Heart}\\ (AUROC)\end{tabular}} &
  \multicolumn{2}{c}{\begin{tabular}[c]{@{}c@{}}\textbf{TinyImageNet}\\ (Acc. 5)\end{tabular}} \\ \cmidrule(l){2-5} 
\textbf{Method} &
  \begin{tabular}[c]{@{}c@{}}Avg.\\ ($\uparrow$)\end{tabular} &
  \begin{tabular}[c]{@{}c@{}}Worst\\ ($\uparrow$)\end{tabular} &
  \begin{tabular}[c]{@{}c@{}}Avg.\\ ($\uparrow$)\end{tabular} &
  \begin{tabular}[c]{@{}c@{}}Worst\\ 10\%($\uparrow$)\end{tabular} \\ \midrule
 &
  84.42 \color[HTML]{9B9B9B}(2.45) &
  \multicolumn{1}{c|}{65.22 \color[HTML]{9B9B9B}(9.78)} &
  85.93 \color[HTML]{9B9B9B}(0.77) &
  50.95 \color[HTML]{9B9B9B}(0.15) \\
\multirow{-2}{*}{\begin{tabular}[c]{@{}l@{}}\texttt{FedAvg}\\ \cite{fedavg}\end{tabular}} &
  \cellcolor[HTML]{FFF5E6}\textbf{85.04}\color[HTML]{9B9B9B}(2.86) &
  \multicolumn{1}{c|}{\cellcolor[HTML]{FFF5E6}\textbf{66.56} \color[HTML]{9B9B9B}(10.81)} &
  \cellcolor[HTML]{FFF5E6}\textbf{86.66} \color[HTML]{9B9B9B}(0.63) &
  \cellcolor[HTML]{FFF5E6}\textbf{51.50} \color[HTML]{9B9B9B}(2.32) \\ \cmidrule(l){2-5} 
 &
  84.48\color[HTML]{9B9B9B}(0.25) &
  \multicolumn{1}{c|}{65.44\color[HTML]{9B9B9B}(9.77)} &
  \textbf{86.49} \color[HTML]{9B9B9B}(0.72) &
  51.64 \color[HTML]{9B9B9B}(2.07) \\
\multirow{-2}{*}{\begin{tabular}[c]{@{}l@{}}\texttt{FedProx}\\ \cite{fedprox}\end{tabular}} &
  \cellcolor[HTML]{FFF5E6}\textbf{85.72}\color[HTML]{9B9B9B}(2.81) &
  \multicolumn{1}{c|}{\cellcolor[HTML]{FFF5E6}\textbf{66.67}\color[HTML]{9B9B9B}(10.71)} &
  \cellcolor[HTML]{FFF5E6}86.11 \color[HTML]{9B9B9B}(0.72) &
  \cellcolor[HTML]{FFF5E6}\textbf{52.29} \color[HTML]{9B9B9B}(2.16) \\ \cmidrule(l){2-5} 
 &
  84.34\color[HTML]{9B9B9B}(2.78) &
  \multicolumn{1}{c|}{65.44\color[HTML]{9B9B9B}(10.12)} &
  87.04 \color[HTML]{9B9B9B}(1.05) &
  53.54 \color[HTML]{9B9B9B}(2.63) \\
\multirow{-2}{*}{\begin{tabular}[c]{@{}l@{}}\texttt{FedAdam}\\ \cite{adaptivefl}\end{tabular}} &
  \cellcolor[HTML]{FFF5E6}\textbf{84.84}\color[HTML]{9B9B9B}(2.85) &
  \multicolumn{1}{c|}{\cellcolor[HTML]{FFF5E6}\textbf{67.00}\color[HTML]{9B9B9B}(10.61)} &
  \cellcolor[HTML]{FFF5E6}\textbf{87.89} \color[HTML]{9B9B9B}(0.90) &
  \cellcolor[HTML]{FFF5E6}\textbf{55.92} \color[HTML]{9B9B9B}(2.25) \\ \cmidrule(l){2-5} 
 &
  84.29\color[HTML]{9B9B9B}(2.62) &
  \multicolumn{1}{c|}{65.67\color[HTML]{9B9B9B}(10.68)} &
  86.70 \color[HTML]{9B9B9B}(1.40) &
  52.81 \color[HTML]{9B9B9B}(3.50) \\
\multirow{-2}{*}{\begin{tabular}[c]{@{}l@{}}\texttt{FedYogi}\\ \cite{adaptivefl}\end{tabular}} &
  \cellcolor[HTML]{FFF5E6}\textbf{84.86}\color[HTML]{9B9B9B}(3.01) &
  \multicolumn{1}{c|}{\cellcolor[HTML]{FFF5E6}\textbf{67.00}\color[HTML]{9B9B9B}(11.09)} &
  \cellcolor[HTML]{FFF5E6}\textbf{87.42} \color[HTML]{9B9B9B}(0.94) &
  \cellcolor[HTML]{FFF5E6}\textbf{54.76} \color[HTML]{9B9B9B}(3.11) \\ \cmidrule(l){2-5} 
 &
  84.61\color[HTML]{9B9B9B}(2.96) &
  \multicolumn{1}{c|}{65.67\color[HTML]{9B9B9B}(10.68)} &
  83.52 \color[HTML]{9B9B9B}(0.63) &
  45.09 \color[HTML]{9B9B9B}(1.79) \\
\multirow{-2}{*}{\begin{tabular}[c]{@{}l@{}}\texttt{FedAdagrad}\\ \cite{adaptivefl}\end{tabular}} &
  \cellcolor[HTML]{FFF5E6}\textbf{85.09}\color[HTML]{9B9B9B}(2.91) &
  \multicolumn{1}{c|}{\cellcolor[HTML]{FFF5E6}\textbf{66.67}\color[HTML]{9B9B9B}(10.37)} &
  \cellcolor[HTML]{FFF5E6}\textbf{84.62} \color[HTML]{9B9B9B}(0.51) &
  \cellcolor[HTML]{FFF5E6}\textbf{47.88} \color[HTML]{9B9B9B}(1.95) \\ \bottomrule
\end{tabular}%
}
\end{table}
    We additionally check if \texttt{AAggFF} can also boost other FL algorithms than \texttt{FedAvg}, 
    such as \texttt{FedAdam, FedAdagrad, FedYogi} \cite{adaptivefl} and \texttt{FedProx} \cite{fedprox}.
    Since the sequential decision making procedure required in \texttt{AAggFF} is about finding a good mixing coefficient, $\boldsymbol{p}$, this is orthogonal to the minimization of $\boldsymbol{\theta}$.
    Thus, our method can be easily integrated into existing methods with no special modification, in a plug-and-play manner.
    
    For the verification, we test with two more datasets, Heart \cite{heart} and TinyImageNet \cite{tinyimagenet}, each of which is suited for binary and multi-class classification (i.e., 200 classes in total).
    Since the Heart dataset is a part of FLamby benchmark \cite{flamby}, it has pre-defined $K=4$ clients.
    For the TinyImageNet dataset, we simulate statistical heterogeneity for $K=1,000$ clients using Dirichlet distribution with a concentration of $0.01$, following \cite{diri}.
    The results are in Table~\ref{tab:pnp}, where the upper cell represents the performance of a naive FL algorithm, 
    and the lower cell contains a performance of the FL algorithm with \texttt{AAggFF}.
    While the average performance remains comparable, the worst performance is consistently boosted in both cross-silo and cross-device settings.
    This underpins the efficacy and flexibility of \texttt{AAggFF}, which can strengthen the fairness perspective of existing FL algorithms.

\section{Limitations and Future Works}
\label{future_works}
    Our work suggests interesting future directions for better federated systems, which may also be a limitation of the current work.
    First, we can exploit side information (e.g., parameters of local updates) to not preserve all clients' mixing coefficients, 
    and filter out malicious signals for robustness. 
    For example, the former can be realized by adopting other decision making schemes such as contextual Bayesian optimization \cite{cbo},
    and the latter can be addressed by clustered FL \cite{cfl1, cfl2} for a group-wise estimation of mixing coefficients.
    Both directions are promising and may improve the practicality of federated systems. 
    Furthermore, the FTRL objective can be replaced by the Follow-The-Perturbed-Leader (FTPL) \cite{ftpl}, 
    of which random perturbation in decision making process can be directly linked to the differential privacy (DP \cite{dp}) guarantee \cite{ftpl2}, which is frequently considered for the cross-silo setting.
    Last but not least, further convergence analysis is required w.r.t. the parameter perspective along with mixing coefficients, e.g., using bi-level optimization formulation. 

\section{Conclusion}
\label{conclusion}
    For improving the degree of the client-level fairness in FL, 
    we first reveal the connection of existing fair FL methods with the OCO.
    To emphasize the sequential decision making perspective, we propose improved designs and further specialize them into two practical settings: cross-silo FL \& cross-device FL.
    Our framework not only efficiently enhances a low-performing group of clients compared to existing baselines,
    but also maintains an acceptable average performance with theoretically guaranteed behaviors.
    It should also be noted that \texttt{AAggFF} requires \textit{no extra communication} and \textit{no added local computation}, 
    which are significant constraints for serving FL-based services.
    With this scalability, our method can also improve the fairness of the performance distributions of existing FL algorithms without much modification to their original mechanism.
    By explicitly bringing the sequential decision making scheme to the front, 
    we expect our work to open up new designs to promote the practicality and scalability of FL. 

\section*{Acknowledgements}
    This research was supported by the National Research Foundation of Korea (NRF) Grant funded by the Korea Government (MSIT) (No. 2020R1C1C1011063), Basic Science Research Program through the NRF Grant funded by the Ministry of Education (No. NRF-2022R1I1A4069163) and the NRF (No. NRF-2020S1A3A2A02093277).
    Gi-Soo Kim was supported by the Institute of Information \& communications Technology Planning \& evaluation (IITP) grants funded by the Korea government (MSIT) (No. RS-2020-II201336, Artificial Intelligence Graduate School Program (UNIST); No. 2022-0-00469, Development of Core Technologies for Task-oriented Reinforcement Learning for Commercialization of Autonomous Drones).

\section*{Impact Statement}
\label{impact}
    Federated Learning (FL) poses a potential risk of skewed performance distributions toward partial groups of clients.
    We believe that our proposed method can mitigate such an unfair situation in the federated system, thereby enhancing the trustworthiness and social welfare.
    A potential concern is the situation where malicious clients deliberately inflate their local signals to cause global updates to be biased towards themselves.
    To prevent such a catastrophic situation, we aim to improve the current work by exploiting information other than local losses, as mentioned in the Limitations and Future Works section.
    The authors are not aware of any other critical ethical/social implications otherwise, but are open to discussing them if they exist.

\bibliography{aaggff}
\bibliographystyle{icml2024}

\newpage
\appendix
\onecolumn
\setcounter{table}{0}
\renewcommand{\thetable}{A\arabic{table}}
\setcounter{figure}{0}
\renewcommand{\thefigure}{A\arabic{figure}}
\section{Derivations \& Proofs}
\label{app:derivations_and_proofs}

\subsection{Derivation of Mixing Coefficients from Existing Methods}
\label{app:unification}
In this section, we provide details of the unification of existing methods in the OCO framework, introduced in Section~\ref{sec:backgrounds}.
We assume full-client participation for derivation, 
and we denote $n = \sum_{i=1}^K n_i$ as a total sample size for the brevity of notation.

Suppose any FL algorithms follow the update formula in (\ref{eq:generic_update}), 
where we define $\boldsymbol{p}^{(t+1)}$ as a \textit{mixing coefficient} vector discussed in Section~\ref{subsec:oco_lang}.
\begin{equation}
\label{eq:generic_update}
\begin{gathered}
    \boldsymbol{\theta}^{(t+1)}
    \leftarrow
    \boldsymbol{\theta}^{(t)}
    -
    \left(
    \sum_{i=1}^K
    {p}^{(t+1)}_i
    \left(
    \boldsymbol{\theta}^{(t)}
    -
    \boldsymbol{\theta}^{(t+1)}_i
    \right)
    \right),
\end{gathered}
\end{equation}
where $\boldsymbol{\theta}^{(t)}$ is a global model in a previous round $t$, 
$\boldsymbol{\theta}^{(t+1)}_i$ is a local update from $i$-th client starting from $\boldsymbol{\theta}^{(t)}$, 
and $\boldsymbol{\theta}^{(t+1)}$ is a new global model updated by averaging local updates with corresponding mixing coefficient $p^{(t+1)}_i$.

\textbf{FedAvg} \cite{fedavg} 
The update of a global model from \texttt{FedAvg} is defined as follows.
\begin{equation}
\begin{gathered}
    \boldsymbol{\theta}^{(t+1)}
    \leftarrow
    \boldsymbol{\theta}^{(t)}
    -
    \left(
    \sum_{i=1}^K
    \frac{n_i}{n}
    \left(
    \boldsymbol{\theta}^{(t)}
    -
    \boldsymbol{\theta}^{(t+1)}_i
    \right)
    \right),
\end{gathered}
\end{equation}
where $n_i$ is the sample size of client $i$.
Thus, we can regard ${p}^{(t+1)}_i \propto n_i$ in \texttt{FedAvg}.

\textbf{AFL \& q-FedAvg} \cite{afl, qffl} 
The objective of \texttt{AFL} is a minimax objective defined as follows.
\begin{equation}
\begin{gathered}
    \min_{\boldsymbol{\theta}\in\mathbb{R}^d} \max_{\boldsymbol{v}\in\Delta_{K-1}}
    \sum_{i=1}^K v_i F_i\left( \boldsymbol{\theta} \right),
\end{gathered}
\end{equation}
which is later subsumed by \texttt{q-FedAvg} as its special case for the algorithm-specific constant $q$, where $q\rightarrow0$.

The objective of \texttt{q-FedAvg} is therefore defined with a nonnegative constant $q$ as follows.
\begin{equation}
\begin{gathered}
\label{eq:q-ffl}
    \min_{\boldsymbol{\theta}\in\mathbb{R}^d}
    \sum_{i=1}^K\frac{1}{q+1} \frac{n_i}{n} F_i^{q+1} \left(\boldsymbol{\theta} \right)\\
    =
    \min_{\boldsymbol{\theta}\in\mathbb{R}^d}
    \sum_{i=1}^K \frac{n_i}{n} \tilde{F}_i\left(\boldsymbol{\theta}\right),
\end{gathered}
\end{equation}
which is reduced to \texttt{FedAvg} when $q=0$.

The update of a global model from (\ref{eq:q-ffl}) has been proposed in the form of a Newton style update 
by assuming $L$-Lipschitz continuous gradient of each local objective (i.e., \texttt{q-FedSGD}) \cite{qffl}.
\begin{equation}
\begin{gathered}
    \boldsymbol{\theta}^{(t+1)}
    =
    \boldsymbol{\theta}^{(t)}
    -
    \left(
    \sum_{j=1}^K \frac{n_j}{n} \nabla^2 \tilde{F}_j \left(\boldsymbol{\theta}^{(t)}\right)
    \right)^{-1}
    \sum_{i=1}^K \frac{n_i}{n} \nabla\tilde{F}_i \left(\boldsymbol{\theta}^{(t)} \right)\\
    \preceq
    \boldsymbol{\theta}^{(t)}
    -
    \left(
    \sum_{j=1}^K
    \frac{n_j}{n} L_{q,j}\boldsymbol{I}
    \right)^{-1}
    \sum_{i=1}^K
    \frac{n_i}{n} {F}_i^q\left(\boldsymbol{\theta}^{(t)}\right)
    \nabla{F}_i\left(\boldsymbol{\theta}^{(t)}\right),
\end{gathered}
\end{equation}
where $L_{q,i} = q F_i^{q-1}\left(\boldsymbol{\theta}^{(t)}\right) \Vert\nabla F_i\left(\boldsymbol{\theta}^{(t)}\right) \Vert^2 
+ L F_i^q\left(\boldsymbol{\theta}^{(t)}\right)$ is an upper bound of the local Lipschitz gradient of $\tilde{F}_i\left(\boldsymbol{\theta}^{(t)}\right)$ (see Lemma 3 of \cite{qffl}).
This can be extended to \texttt{q-FedAvg} by replacing $\nabla{F}_i\left(\boldsymbol{\theta}^{(t)}\right)$ into $L\left(\boldsymbol{\theta}^{(t)} - \boldsymbol{\theta}^{(t+1)}_i\right)$.
To sum up, the update formula of a global model from \texttt{q-FedAvg} (including \texttt{AFL} as a special case) is as follows.
\begin{equation}
\begin{gathered}
    \boldsymbol{\theta}^{(t+1)}
    \propto
    \boldsymbol{\theta}^{(t)}
    -
    \left(
    \sum_{i=1}^K
    { \frac{ \frac{n_i}{n} L {F}_i^q(\boldsymbol{\theta}^{(t)}) }
    { \sum_{j=1}^K \frac{n_j}{n} L_{q,j} } }
    \left(
    \boldsymbol{\theta}^{(t)}
    -
    \boldsymbol{\theta}^{(t+1)}_i\right)
    \right),
\end{gathered}
\end{equation}
which implies $p_i^{(t+1)} \propto { n_i {F}_i^q\left(\boldsymbol{\theta}^{(t)}\right) }$.

\textbf{TERM} \cite{term} 
The objective of \texttt{TERM} is dependent upon a hyperparameter, 
a \textit{tilting} constant $\lambda\in\mathbb{R}$.
\begin{equation}
\begin{gathered}
    \min_{\boldsymbol{\theta}\in\mathbb{R}^d}
    \frac{1}{\lambda} \log \left( \sum_{i=1}^K \frac{n_i}{n} \exp\left({\lambda {F}_i(\boldsymbol{\theta}^{(t)})}\right) \right)
\end{gathered}
\end{equation}

The corresponding update formula is given as follows.
\begin{equation}
\begin{gathered}
    \boldsymbol{\theta}^{(t+1)}
    =
    \boldsymbol{\theta}^{(t)}
    -
    \left(
    \sum_{i=1}^K
    \frac{
    \left(n_i / n\right) \exp\left(\lambda {F}_i\left(\boldsymbol{\theta}^{(t)}\right)\right)
    }
    {
    \sum_{j=1}^K 
    \left(n_j / n\right) \exp\left(\lambda {F}_j\left(\boldsymbol{\theta}^{(t)}\right)\right)
    }
    \left(
    \boldsymbol{\theta}^{(t)}
    -
    \boldsymbol{\theta}^{(t+1)}_i\right)
    \right)
\end{gathered}
\end{equation}

From the update formula, we can conclude that $p_i^{(t+1)} \propto n_i \exp\left(\lambda {F}_i\left(\boldsymbol{\theta}^{(t)}\right)\right)$.

\textbf{PropFair} \cite{propfair} 
The objective of \texttt{PropFair} is to maximize Nash social welfare 
by regarding a negative local loss as an achieved utility as follows.
\begin{equation}
\begin{gathered}
    \min_{\boldsymbol{\theta}\in\mathbb{R}^d} -\sum_{i=1}^K p_i\log\left(M - F_i\left(\boldsymbol{\theta}\right)\right),
\end{gathered}
\end{equation}
where $M\geq1$ is a problem-specific constant.

The corresponding update formula is given as follows.
\begin{equation}
\begin{gathered}
    \boldsymbol{\theta}^{(t+1)}
    \propto
    \boldsymbol{\theta}^{(t)}
    +
    \left(
    \sum_{i=1}^K
    \frac{n_i}{n}\nabla\log\left(M - F_i\left(\boldsymbol{\theta}^{(t)}\right)\right)
    \right)
    =
    \boldsymbol{\theta}^{(t)}
    -
    \left(
    \sum_{i=1}^K
    \frac{n_i}{n}\frac{
    \nabla F_i\left(\boldsymbol{\theta}^{(t)}\right)
    }{
    M-F_i\left(\boldsymbol{\theta}^{(t)}\right)
    }
    \right).
\end{gathered}
\end{equation}

Similar to \texttt{q-FedAvg}, by replacing the gradient $\nabla{F}_i\left(\boldsymbol{\theta}^{(t)}\right)$ 
into $\left(\boldsymbol{\theta}^{(t)} - \boldsymbol{\theta}^{(t+1)}_i\right)$, the update formula finally becomes:
\begin{equation}
\begin{gathered}
    \boldsymbol{\theta}^{(t+1)}
    \propto
    \boldsymbol{\theta}^{(t)}
    -
    \left(
    \sum_{i=1}^K
    \frac{ n_i/n }{ M-F_i\left(\boldsymbol{\theta}^{(t)}\right) }
    \left(\boldsymbol{\theta}^{(t)} - \boldsymbol{\theta}^{(t+1)}_i\right)
    \right),
\end{gathered}
\end{equation}
which implies $p_i^{(t+1)} \propto \frac{n_i}{M - F_i\left(\boldsymbol{\theta}^{(t)}\right)}$.

\subsection{Technical Lemmas}
\label{app:proofs}
In this section, we provide technical lemmas and proofs (including deferred ones in the main text) required for proving Theorem~\ref{thm:crosssilo}, Theorem~\ref{thm:crossdevice_full}, and Corollary~\ref{cor:crossdevice_partial}.

\subsubsection{Strict Convexity of Decision Loss}
\begin{lemma}
\label{lemma:convexity}
    For all $t\in[T]$, the decision loss $\ell^{(t)}$ defined in (\ref{eq:decision_loss}) satisfies following for $\gamma\in(0,1)$, 
    i.e., the decision loss is a strictly convex function of its first argument.
\begin{equation}
\begin{gathered}
    \ell^{(t)}\left(\gamma \boldsymbol{p} + (1-\gamma) \boldsymbol{q}\right)
    <
    \gamma \ell^{(t)}\left(\boldsymbol{p}\right) + (1-\gamma) \ell^{(t)}\left(\boldsymbol{q}\right),
    \forall \boldsymbol{p}, \boldsymbol{q} \in \Delta_{K-1}, \boldsymbol{p} \neq \boldsymbol{q}.
\end{gathered}
\end{equation}
\end{lemma}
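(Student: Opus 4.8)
The plan is to reduce the $K$-dimensional statement to the elementary scalar fact that $g(u)=-\log(1+u)$ is strictly convex wherever $1+u>0$. Because each $r_i^{(t)}\ge 0$ (the responses come from a \texttt{CDF}-driven transform of nonnegative local losses) and $\boldsymbol{p}\in\Delta_{K-1}$, we always have $1+\langle\boldsymbol{p},\boldsymbol{r}^{(t)}\rangle>0$, so $g$ is smooth on the relevant range with $g''(u)=1/(1+u)^2>0$; hence $g$ is strictly convex there. First I would observe that $\boldsymbol{p}\mapsto u(\boldsymbol{p}):=\langle\boldsymbol{p},\boldsymbol{r}^{(t)}\rangle$ is affine, so $\ell^{(t)}=g\circ u$ is convex as a composition of a convex function with an affine map; equivalently, a direct computation gives $\nabla^2\ell^{(t)}(\boldsymbol{p})=\bigl(1+\langle\boldsymbol{p},\boldsymbol{r}^{(t)}\rangle\bigr)^{-2}\,\boldsymbol{r}^{(t)}(\boldsymbol{r}^{(t)})^{\top}\succeq 0$, which both re-derives convexity and already pinpoints where strictness must be handled with care.

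For the strict inequality, fix $\boldsymbol{p}\neq\boldsymbol{q}$ in $\Delta_{K-1}$ and $\gamma\in(0,1)$, and put $a=\langle\boldsymbol{p},\boldsymbol{r}^{(t)}\rangle$, $b=\langle\boldsymbol{q},\boldsymbol{r}^{(t)}\rangle$. Since $u$ is affine, $u\bigl(\gamma\boldsymbol{p}+(1-\gamma)\boldsymbol{q}\bigr)=\gamma a+(1-\gamma)b$, so the desired inequality is exactly $g\bigl(\gamma a+(1-\gamma)b\bigr)<\gamma g(a)+(1-\gamma)g(b)$. Whenever $a\neq b$ this is precisely the strict convexity of the scalar $g$, which I would get from $g''>0$ via the second-order characterization of convexity (or from the classical strict Jensen inequality for $-\log$). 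This settles all pairs with $\langle\boldsymbol{p}-\boldsymbol{q},\boldsymbol{r}^{(t)}\rangle\neq 0$.

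The remaining case, $\boldsymbol{p}\neq\boldsymbol{q}$ with $a=b$, is the genuine obstacle: along such a segment $\langle\cdot,\boldsymbol{r}^{(t)}\rangle$ is constant, so $\ell^{(t)}$ is in fact constant there, exactly reflecting the rank-one Hessian above — and such directions are unavoidable once $K\ge 3$, independently of the response values. The cleanly provable statement is therefore that $\ell^{(t)}$ is strictly convex along every direction not orthogonal to $\boldsymbol{r}^{(t)}$ (equivalently $\nabla^2\ell^{(t)}(\boldsymbol{p})$ is positive definite on $\operatorname{span}\{\boldsymbol{r}^{(t)}\}$ and vanishes on its complement), together with exp-concavity of $\ell^{(t)}$ — indeed $e^{-\ell^{(t)}(\boldsymbol{p})}=1+\langle\boldsymbol{p},\boldsymbol{r}^{(t)}\rangle$ is concave. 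These are the properties actually invoked by the Online Newton Step regret analysis in Theorem~\ref{thm:crosssilo}; if one wants the pointwise strict inequality exactly as stated, it additionally requires that $\boldsymbol{p}$ and $\boldsymbol{q}$ not lie on a common level set of $\langle\cdot,\boldsymbol{r}^{(t)}\rangle$, which for instance holds when $K\le 2$ and the local losses entering $\boldsymbol{r}^{(t)}$ are distinct.
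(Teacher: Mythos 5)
Your proposal follows essentially the same route as the paper's proof: the paper also rewrites the argument as $-\log\bigl(\gamma(1+\langle\boldsymbol{p},\boldsymbol{r}^{(t)}\rangle)+(1-\gamma)(1+\langle\boldsymbol{q},\boldsymbol{r}^{(t)}\rangle)\bigr)$ and invokes strict convexity of $-\log$, which is exactly your scalar reduction through the affine map $\boldsymbol{p}\mapsto\langle\boldsymbol{p},\boldsymbol{r}^{(t)}\rangle$. Where you go beyond the paper is in the degenerate case: the paper applies strict convexity of $-\log$ without noting that strictness requires the two arguments $1+\langle\boldsymbol{p},\boldsymbol{r}^{(t)}\rangle$ and $1+\langle\boldsymbol{q},\boldsymbol{r}^{(t)}\rangle$ to differ. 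You correctly observe that when $\boldsymbol{p}\neq\boldsymbol{q}$ but $\langle\boldsymbol{p}-\boldsymbol{q},\boldsymbol{r}^{(t)}\rangle=0$ (unavoidable for $K\ge 3$, matching the rank-one Hessian $\bigl(1+\langle\boldsymbol{p},\boldsymbol{r}^{(t)}\rangle\bigr)^{-2}\boldsymbol{r}^{(t)}(\boldsymbol{r}^{(t)})^{\top}$), the loss is constant along the segment and the stated strict inequality becomes an equality, so the lemma as written is too strong; what actually holds — and what the downstream Lipschitz and ONS arguments use — is convexity together with exp-concavity and strictness only along directions not orthogonal to $\boldsymbol{r}^{(t)}$. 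Your proof of the non-degenerate case is correct and your diagnosis of the degenerate case is a legitimate correction to the paper's proof rather than a gap in your own argument.
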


\begin{proof}
From the left-hand side, we have
\begin{equation}
\begin{split}
    &\ell^{(t)}\left(\gamma \boldsymbol{p} + (1-\gamma) \boldsymbol{q}\right)\\
    =
    &-\log\left(1+ \langle \gamma \boldsymbol{p} + (1-\gamma) \boldsymbol{q}, \boldsymbol{r}^{(t)} \rangle\right) \\
    =
    &-\log\left(
    1+ \langle \gamma \boldsymbol{p}, \boldsymbol{r}^{(t)} \rangle + \langle (1-\gamma) \boldsymbol{q}, \boldsymbol{r}^{(t)} \rangle 
    \right) \\
    =
    &-\log \left( 
    \gamma (1 + \langle \boldsymbol{p}, \boldsymbol{r}^{(t)} \rangle) 
    + (1-\gamma)(1 + \langle  \boldsymbol{q}, \boldsymbol{r}^{(t)} \rangle) 
    \right).
\end{split}
\end{equation}
Since the negative of logarithm is strictly convex, the last term becomes
\begin{equation}
\begin{gathered}
    -\log \left( 
    \gamma (1 + \langle \boldsymbol{p}, \boldsymbol{r}^{(t)} \rangle) 
    + (1-\gamma)(1 + \langle  \boldsymbol{q}, \boldsymbol{r}^{(t)} \rangle) 
    \right)
    < \gamma \left( -\log (1 + \langle \boldsymbol{p}, \boldsymbol{r}^{(t)} \rangle) \right)
    + (1-\gamma) \left( -\log(1 + \langle  \boldsymbol{q}, \boldsymbol{r}^{(t)} \rangle) \right),
\end{gathered}
\end{equation}
which satisfies the definition of the strict convexity, thereby concludes the proof.
\end{proof}

\subsubsection{Lipschitz Continuity of Decision Loss (Lemma~\ref{lemma:lipschitz})}
From the definition of the Lipschitz continuity w.r.t. $\Vert\cdot\Vert$, 
we need to check if the decision loss $\ell^{(t)}$ satisfies following inequality for the constant $L_\infty$.
\begin{equation}
\begin{gathered}
    \left\vert \ell^{(t)} \left(\boldsymbol{p}\right) - \ell^{(t)} \left(\boldsymbol{q}\right)\right\vert 
    \leq L_\infty \left\Vert \boldsymbol{p} - \boldsymbol{q} \right\Vert_\infty.
\end{gathered}
\end{equation}

\begin{proof}
From Lemma~\ref{lemma:convexity}, we have the following inequality from the convexity of the decision loss.
\begin{equation}
\begin{split}
    &\left\vert \ell^{(t)} \left(\boldsymbol{p}\right) - \ell^{(t)} \left(\boldsymbol{q}\right) \right\vert
    \leq
    \left\vert \langle \nabla \ell^{(t)} \left(\boldsymbol{p}\right), \boldsymbol{p} - \boldsymbol{q} \rangle \right\vert\\
    &=
    \left\vert
    - \frac{ \langle \boldsymbol{p}-\boldsymbol{q}, \boldsymbol{r}^{(t)} \rangle }
    { 1 +\langle \boldsymbol{p}, \boldsymbol{r}^{(t)} \rangle }    
    \right\vert\\
    &=
    \frac{1}{
    1 +\langle
    \boldsymbol{p},
    \boldsymbol{r}^{(t)}
    \rangle}
    \left\vert
    \left\langle \boldsymbol{q}, \boldsymbol{r}^{(t)} \right\rangle
    - \left\langle \boldsymbol{p}, \boldsymbol{r}^{(t)} \right\rangle
    \right\vert
\end{split}
\end{equation}
Setting the denominator to be the minimum value, $\left\langle \boldsymbol{p}, \boldsymbol{r}^{(t)} \right\rangle$ is $C_1$,
we have the upper bound as follows.
\begin{equation}
\begin{split}
    &\frac{1}{
    1 +\left\langle
    \boldsymbol{p},
    \boldsymbol{r}^{(t)}
    \right\rangle}
    \left\vert
    \left\langle \boldsymbol{q}, \boldsymbol{r}^{(t)} \right\rangle
    - \left\langle \boldsymbol{p}, \boldsymbol{r}^{(t)} \right\rangle
    \right\vert\\
    &\leq
    \frac{1}{
    1 +\left\langle
    \boldsymbol{p},
    \boldsymbol{r}^{(t)}
    \right\rangle}
    \max\left({
    \left\langle \boldsymbol{q}, \boldsymbol{r}^{(t)} \right\rangle, 
    \left\langle \boldsymbol{p}, \boldsymbol{r}^{(t)} \right\rangle
    }\right)\\
    &\leq
    \frac{1}{
    1 + C_1}
    \max\left({ 
    \left\langle \boldsymbol{q}, \boldsymbol{r}^{(t)} \right\rangle,
    C_1
    }\right),
\end{split}
\end{equation}
where the first inequality is from the fact that both $\langle \boldsymbol{p}, \boldsymbol{r}^{(t)} \rangle$ and $\langle \boldsymbol{q}, \boldsymbol{r}^{(t)} \rangle$ are nonnegative,
and the second inequality is due to the minimized denominator achieving the upper bound. 
Since $\langle \boldsymbol{q}, \boldsymbol{r}^{(t)} \rangle$ can achieve its maximum as $C_2$,
we can further bound as follows.
\begin{equation}
\begin{gathered}
    \frac{1}{
    1 + C_1}
    \max({
    \langle \boldsymbol{q}, \boldsymbol{r}^{(t)} \rangle,
    C_1
    })
    \leq
    \frac{1}{
    1 + C_1}
    \max({
    C_2, 
    C_1
    })
    =
    \frac{C_2}{
    1 + C_1}
\end{gathered}
\end{equation}
Finally, using the fact that $\Vert \boldsymbol{p} - \boldsymbol{q} \Vert_\infty=\max_i \vert p_i - q_i \vert = 1$,
we can conclude the statement by setting $L_\infty = \frac{C_2}{1 + C_1}$.
\end{proof}

\subsubsection{Unbiasedness of Doubly Robust Estimator (Lemma~\ref{lemma:unbiased_resp})}
\begin{proof}
    Denote the client sampling probability $C\in[0,1]$ in time $t$ as $P(i\in S^{(t)})=C$.
    Taking expectation on the doubly robust estimator of partially observed response defined in (\ref{eq:dr_response}), we have
\begin{equation}
\begin{split}
    &\mathbb{E} \left[ \breve{r}^{(t)}_i \right]
    = 
    \mathbb{E} \left[ \bigg(1 - \frac{\mathbb{I}(i \in S^{(t)})}{C}\bigg)\mathrm{\bar{r}}^{(t)} \right] 
    + 
    \mathbb{E} \left[ \frac{\mathbb{I}(i \in S^{(t)})}{C}{r}^{(t)}_i \right] \\
    &=
    \bigg(1 - \frac{ \mathbb{E}[ \mathbb{I}(i \in S^{(t)}) ] }{C} \bigg)\mathrm{\bar{r}}^{(t)}  
    + 
    \frac{\mathbb{E} \left[ \mathbb{I}(i \in S^{(t)}) \right] }{ C }{r}^{(t)}_i\\
    &=
    \bigg(1 - \frac{ P(i\in S^{(t)}) }{C} \bigg)\mathrm{\bar{r}}^{(t)}  
    + 
    \frac{P(i\in S^{(t)}) }{ C }{r}^{(t)}_i\\
    &= {r}^{(t)}_i,
\end{split}
\end{equation}    
    where $\mathbb{I}(\cdot)$ is an indicator function.

    Note that the randomness of the doubly robust estimator comes from the random sampling of client indices $i\in S^{(t)}$ in round $t$,
    thus the expectation is with respect to $i\in S^{(t)}$.
    Thus, we can conclude that $\mathbb{E} \left[ \breve{\boldsymbol{r}}^{(t)} \right] = \boldsymbol{r}^{(t)}$.
    See also \cite{doublyrobust2, doublyrobust3}.
\end{proof}

\subsubsection{Unbiasedness of Linearly Approximated Gradient (Lemma~\ref{lemma:linearized_grad})}
\begin{proof}
    The gradient of a decision loss in terms of a response, 
    $\boldsymbol{g}\equiv\mathrm{\mathbf{h}}(\boldsymbol{r}) 
    = 
    [h_1(\boldsymbol{r}),...,h_K(\boldsymbol{r})]^\top 
    = -\frac{\boldsymbol{r}}{1 +\langle \boldsymbol{p}, \boldsymbol{r} \rangle}$ can be linearly approximated at reference 
    $\boldsymbol{r}_0$ as follows.
\begin{equation}
\label{app:eq_lin_grad}
\begin{gathered}
    \tilde{\mathrm{\mathbf{h}}}(\boldsymbol{r})
    =
    {\mathrm{\mathbf{h}}}(\boldsymbol{r}_0)
    +\mathrm{\mathbf{J}}_{\mathrm{\mathbf{h}}}(\boldsymbol{r}_0)(\boldsymbol{r}-\boldsymbol{r}_0)
\end{gathered}
\end{equation}

    The Jacobian $\mathrm{\mathbf{J}}_{\mathrm{\mathbf{h}}}(\boldsymbol{r})\in\mathbb{R}^{K\times K}$ is defined as follows.
\begin{equation}
\label{app:eq_lin_grad_jacobian}
\begin{split}
    \mathrm{\mathbf{J}}_{\mathrm{\mathbf{h}}}(\boldsymbol{r})
    &=
    \left[
    \frac{\partial\mathrm{\mathbf{h}}}{\partial r_1},
    ...,
    \frac{\partial\mathrm{\mathbf{h}}}{\partial r_K}
    \right] \\
    &=\left[
    \begin{array}{ccc}
    \frac{\partial h_1}{\partial r_1} & \cdots & \frac{\partial h_1}{\partial r_K} \\
    \vdots & \ddots & \vdots \\
    \frac{\partial h_K}{\partial r_1} & \cdots & \frac{\partial h_K}{\partial r_K}
    \end{array}
    \right]\\
    &=
    \left[
    \begin{array}{cccc}
    -\frac{1}{1 +\langle
    \boldsymbol{p},
    \boldsymbol{r}
    \rangle} + \frac{p_1r_1}{(1 +\langle
    \boldsymbol{p},
    \boldsymbol{r}
    \rangle)^2} & 
    \frac{p_2 r_1}{(1 +\langle
    \boldsymbol{p},
    \boldsymbol{r}
    \rangle)^2} & 
    \cdots &
    \frac{p_K r_1}{(1 +\langle
    \boldsymbol{p},
    \boldsymbol{r}
    \rangle)^2} \\
    \frac{p_1r_2}{(1 +\langle
    \boldsymbol{p},
    \boldsymbol{r}
    \rangle)^2} & 
    -\frac{1}{1 +\langle
    \boldsymbol{p},
    \boldsymbol{r}
    \rangle} + \frac{p_2r_2}{(1 +\langle
    \boldsymbol{p},
    \boldsymbol{r}
    \rangle)^2} & 
    \cdots &
    \frac{p_K r_2}{(1 +\langle
    \boldsymbol{p},
    \boldsymbol{r}
    \rangle)^2} \\
    \vdots & 
    \vdots & 
    \ddots & 
    \vdots \\
    \frac{p_1r_K}{(1 +\langle
    \boldsymbol{p},
    \boldsymbol{r}
    \rangle)^2} & 
    \frac{p_2r_K}{(1 +\langle
    \boldsymbol{p},
    \boldsymbol{r}
    \rangle)^2} & 
    \cdots &
    -\frac{1}{1 +\langle
    \boldsymbol{p},
    \boldsymbol{r}
    \rangle} + \frac{p_K r_K}{(1 +\langle
    \boldsymbol{p},
    \boldsymbol{r}
    \rangle)^2} 
    \end{array}
    \right] \\
    &=
    -\frac{1}{1 +\langle
    \boldsymbol{p},
    \boldsymbol{r}
    \rangle} \boldsymbol{I}_K
    +
    \frac{1}{(1 +\langle
    \boldsymbol{p},
    \boldsymbol{r}
    \rangle)^2} \boldsymbol{r}
    \boldsymbol{p}^\top
\end{split}
\end{equation}
    Plugging (\ref{app:eq_lin_grad_jacobian}) into (\ref{app:eq_lin_grad}) with respect to arbitrary reference $\boldsymbol{r}_0$,
    we have a linearized gradient of a decision loss as follows.
\begin{equation}
\begin{gathered}
\label{eq:lin_grad_formula}
    \tilde{\boldsymbol{g}}
    \triangleq
    \tilde{\mathrm{\mathbf{h}}}(\boldsymbol{r})
    =
    -\frac{\boldsymbol{r}_0}{1 +\langle
    \boldsymbol{p},
    \boldsymbol{r}_0
    \rangle}
    -\frac{\left(\boldsymbol{r} - \boldsymbol{r}_0\right)}
    {1 +\langle
    \boldsymbol{p},
    \boldsymbol{r}_0
    \rangle}
    +
    \frac{\boldsymbol{r}_0 \boldsymbol{p}^\top (\boldsymbol{r} - \boldsymbol{r}_0)}{(1 +\langle
    \boldsymbol{p},
    \boldsymbol{r}_0
    \rangle)^2}\\
    =
    -\frac{\boldsymbol{r}}{1 +\langle
    \boldsymbol{p},
    \boldsymbol{r}_0
    \rangle}
    +
    \frac{\boldsymbol{r}_0 \boldsymbol{p}^\top (\boldsymbol{r} - \boldsymbol{r}_0)}{(1 +\langle
    \boldsymbol{p},
    \boldsymbol{r}_0
    \rangle)^2}.
\end{gathered}
\end{equation}

    From the statement of Lemma~\ref{lemma:linearized_grad}, 
    plugging the doubly robust estimator of the partially observed response, $\breve{\boldsymbol{r}}$ from Lemma~\ref{lemma:unbiased_resp} into above, we have gradient estimate $\breve{\boldsymbol{g}}$ as follows.
\begin{equation}
\begin{gathered}
    \breve{\boldsymbol{g}}
    =
    \tilde{\mathrm{\mathbf{h}}}(\breve{\boldsymbol{r}})
    =
    -\frac{{\breve{\boldsymbol{r}}}}
    {1 +\langle
    \boldsymbol{p},
    \boldsymbol{r}_0
    \rangle}
    +
    \frac{\boldsymbol{r}_0 \boldsymbol{p}^\top (\breve{\boldsymbol{r}} - \boldsymbol{r}_0)}{(1 +\langle
    \boldsymbol{p},
    \boldsymbol{r}_0
    \rangle)^2}.
\end{gathered}
\end{equation}
    Taking an expectation, we have
\begin{equation}
\begin{gathered}
    \mathbb{E}\left[ \breve{\boldsymbol{g}} \right]
    =
    \mathbb{E}\left[ \tilde{\mathrm{\mathbf{h}}}(\breve{\boldsymbol{r}}) \right]
    =
    -\frac{ \mathbb{E}\left[ \breve{\boldsymbol{r}} \right] }
    {1 +\langle
    \boldsymbol{p},
    \boldsymbol{r}_0
    \rangle}
    +
    \frac{
    \boldsymbol{r}_0 \boldsymbol{p}^\top 
    ( \mathbb{E}\left[ \breve{\boldsymbol{r}} \right] - \boldsymbol{r}_0  ) 
    }
    { (1 +\langle \boldsymbol{p},  \boldsymbol{r}_0 \rangle)^2 }
    =
    -\frac{ \boldsymbol{r} }
    {1 +\langle
    \boldsymbol{p},
    \boldsymbol{r}_0
    \rangle}
    +
    \frac{
    \boldsymbol{r}_0 \boldsymbol{p}^\top 
    ( \boldsymbol{r} - \boldsymbol{r}_0 ) 
    }
    { (1 +\langle \boldsymbol{p},  \boldsymbol{r}_0 \rangle)^2 }\\
    = \tilde{\mathrm{\mathbf{h}}}({\boldsymbol{r}}) 
    = \tilde{\boldsymbol{g}}
    \approx 
    \boldsymbol{g}.
\end{gathered}
\end{equation}
\end{proof}

\subsubsection{Lipschitz Continuity of Linearly Approximated Gradient from Doubly Robust Estimator} 
\begin{lemma}
\label{lemma:lipschitz_lin_grad}
    Denote $\breve{\boldsymbol{g}}^{(t)}$ as the linearized gradient calculated from the doubly robust estimator of a response vector, $\boldsymbol{r}^{(t)}$, with reference $\boldsymbol{r}_0^{(t)} = \bar{\boldsymbol{r}} = \mathrm{\bar{r}}^{(t)}\boldsymbol{1}_K$
        where $\bar{\mathrm{r}}^{(t)}=\frac{1}{\vert S^{(t)} \vert} \sum_{i \in S^{(t)}} r_i^{(t)}$.
    When $S^{(t)}$ is a randomly selected client indices in round $t$ and $C=P\left(i \in S^{(t)}\right)$ is a client sampling probability, 
    then $\left\Vert \breve{\boldsymbol{g}}^{(t)} \right\Vert_\infty \leq \breve{L}_\infty 
    = \frac{C_2}{1+C_1} + \frac{2(C_2-C_1)}{C(1+C_1)}$ for $r_i^{(t)}\in[C_1, C_2], \forall i\in S^{(t)}$.
\end{lemma}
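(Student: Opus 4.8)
The plan is to start from the closed form for the linearized gradient obtained in the proof of Lemma~\ref{lemma:linearized_grad}, namely $\tilde{\boldsymbol{g}} = -\frac{\boldsymbol{r}}{1+\langle\boldsymbol{p},\boldsymbol{r}_0\rangle} + \frac{\boldsymbol{r}_0\,\boldsymbol{p}^\top(\boldsymbol{r}-\boldsymbol{r}_0)}{(1+\langle\boldsymbol{p},\boldsymbol{r}_0\rangle)^2}$, and to evaluate it at $\boldsymbol{r}=\breve{\boldsymbol{r}}$ with the constant reference $\boldsymbol{r}_0 = \bar{r}\,\boldsymbol{1}_K$, where $\bar{r}=\frac{1}{\vert S\vert}\sum_{i\in S}r_i$ and the superscript $(t)$ is suppressed throughout. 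The key simplification is that, since $\boldsymbol{p}\in\Delta_{K-1}$ and $\boldsymbol{r}_0$ is proportional to $\boldsymbol{1}_K$, we have $\langle\boldsymbol{p},\boldsymbol{r}_0\rangle = \bar{r}$ and $\boldsymbol{p}^\top\boldsymbol{r}_0 = \bar{r}$, so the $i$-th coordinate collapses to
\[
\breve{g}_i \;=\; -\frac{\breve{r}_i}{1+\bar{r}} \;+\; \frac{\bar{r}\,\big(\langle\boldsymbol{p},\breve{\boldsymbol{r}}\rangle - \bar{r}\big)}{(1+\bar{r})^2},
\]
and in particular every denominator is exactly $1+\bar{r}$, which cannot be driven small since $\bar{r}\geq C_1>0$.

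Next I would derive deterministic (pathwise, not merely in-expectation) bounds on the two ingredients. Rewriting the doubly robust estimator of~(\ref{eq:dr_response}) as $\breve{r}_i = \bar{r} + \frac{\mathbb{I}(i\in S)}{C}(r_i - \bar{r})$ and using that $r_j\in[C_1,C_2]$ for all $j\in S$ forces both $\bar{r}\in[C_1,C_2]$ and $\vert r_i-\bar{r}\vert\leq C_2-C_1$, the worst case $i\in S$ gives $\vert\breve{r}_i\vert\leq C_2 + \frac{C_2-C_1}{C}$. Summing the same representation against $\boldsymbol{p}$ and using $\sum_i p_i = 1$ gives $\langle\boldsymbol{p},\breve{\boldsymbol{r}}\rangle - \bar{r} = \frac{1}{C}\sum_{i\in S}p_i(r_i-\bar{r})$, hence $\vert\langle\boldsymbol{p},\breve{\boldsymbol{r}}\rangle - \bar{r}\vert\leq \frac{C_2-C_1}{C}$.

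Finally I would combine these via the triangle inequality. The first term is bounded by $\frac{C_2 + (C_2-C_1)/C}{1+\bar{r}}\leq \frac{C_2}{1+C_1} + \frac{C_2-C_1}{C(1+C_1)}$ using $\bar{r}\geq C_1$, and the second term by $\frac{\bar{r}}{(1+\bar{r})^2}\cdot\frac{C_2-C_1}{C}\leq \frac{1}{1+\bar{r}}\cdot\frac{C_2-C_1}{C}\leq\frac{C_2-C_1}{C(1+C_1)}$, using the elementary inequality $\frac{\bar{r}}{(1+\bar{r})^2}\leq\frac{1}{1+\bar{r}}$. Adding the two and taking the maximum over $i\in[K]$ yields
\[
\Vert\breve{\boldsymbol{g}}\Vert_\infty \;\leq\; \frac{C_2}{1+C_1} + \frac{2(C_2-C_1)}{C(1+C_1)} \;=\; \breve{L}_\infty ,
\]
as claimed.

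The main obstacle is obtaining a \emph{pathwise} bound despite the $1/C$ inflation baked into the doubly robust correction: unlike in Lemma~\ref{lemma:lipschitz}, $\breve{r}_i$ no longer lives in $[C_1,C_2]$, so the $1/C$ blow-up must be tracked separately in the point-mass term $\breve{r}_i/(1+\bar{r})$ and in the inner-product term $\langle\boldsymbol{p},\breve{\boldsymbol{r}}\rangle-\bar{r}$. The constant-reference choice $\boldsymbol{r}_0=\bar{r}\boldsymbol{1}_K$ is precisely what keeps these two contributions from compounding, since it pins every denominator at $1+\bar{r}\geq 1+C_1$ rather than something that could approach zero.
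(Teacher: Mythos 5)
Your proof is correct and follows essentially the same route as the paper's: the same triangle-inequality split of the linearized gradient into the $\breve{\boldsymbol{r}}$ term and the rank-one correction term, the same pathwise bounds $\Vert\breve{\boldsymbol{r}}\Vert_\infty \leq C_2 + \tfrac{C_2-C_1}{C}$ and deviation bound $\tfrac{C_2-C_1}{C}$, the same use of $\tfrac{\bar{\mathrm{r}}}{(1+\bar{\mathrm{r}})^2}\leq\tfrac{1}{1+\bar{\mathrm{r}}}$ and $\bar{\mathrm{r}}\geq C_1$. The only difference is bookkeeping: you collapse the correction term coordinate-wise using $\boldsymbol{r}_0=\bar{\mathrm{r}}\boldsymbol{1}_K$ and bound $\vert\langle\boldsymbol{p},\breve{\boldsymbol{r}}\rangle-\bar{\mathrm{r}}\vert$ directly, whereas the paper routes the same estimate through the norm inequalities $\Vert\boldsymbol{A}\boldsymbol{x}\Vert_\infty\leq\Vert\boldsymbol{A}\Vert_\infty\Vert\boldsymbol{x}\Vert_\infty$ and $\Vert\boldsymbol{x}\boldsymbol{y}^\top\Vert_\infty=\Vert\boldsymbol{x}\Vert_\infty\Vert\boldsymbol{y}\Vert_1$ with $\Vert\boldsymbol{p}\Vert_1=1$.
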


\begin{proof}
    Note that we intentionally omit superscript $^{(t)}$ from now on for the brevity of notation.
    The linearized gradient constructed from the doubly robust estimator of a response vector has a form as follows,
     according to (\ref{eq:lin_grad_formula}).
\begin{equation}
\begin{gathered}
    \breve{\boldsymbol{g}}
    =
    -\frac{ \breve{\boldsymbol{r}} }
    {1 +\langle
    \boldsymbol{p},
    \bar{\boldsymbol{r}}
    \rangle}
    +
    \frac{
    \bar{\boldsymbol{r}} \boldsymbol{p}^\top 
    ( \breve{\boldsymbol{r}} - \bar{\boldsymbol{r}} ) 
    }
    { (1 +\langle \boldsymbol{p},  \bar{\boldsymbol{r}} \rangle)^2 },
\end{gathered}
\end{equation}
    where we used $\bar{\boldsymbol{r}}=\bar{\mathrm{r}}\boldsymbol{1}_K$ as a reference $\boldsymbol{r}_0$,
    therefore $\left\Vert \bar{\boldsymbol{r}} \right\Vert_\infty=\bar{\mathrm{r}}\leq C_2$.
    
    Thus, we have
{\allowdisplaybreaks
\begin{align}
\label{app:lin_dr_grad}
    &\left\Vert \breve{\boldsymbol{g}} \right\Vert_\infty \nonumber\\
    &=
    \left\Vert
    -\frac{ \breve{\boldsymbol{r}} }
    {1 +\langle
    \boldsymbol{p},
    \bar{\boldsymbol{r}}
    \rangle}
    +
    \frac{
    \bar{\boldsymbol{r}} \boldsymbol{p}^\top 
    ( \breve{\boldsymbol{r}} - \bar{\boldsymbol{r}} ) 
    }
    { (1 +\langle \boldsymbol{p},  \bar{\boldsymbol{r}} \rangle)^2 }
    \right\Vert_\infty \nonumber\\
    &\leq
    \left\Vert
    -\frac{ \breve{\boldsymbol{r}} }
    {1 +\langle
    \boldsymbol{p},
    \bar{\boldsymbol{r}}
    \rangle}
    \right\Vert_\infty
    +
    \left\Vert
    \frac{
    \bar{\boldsymbol{r}} \boldsymbol{p}^\top 
    ( \breve{\boldsymbol{r}} - \bar{\boldsymbol{r}} ) 
    }
    { (1 +\langle \boldsymbol{p},  \bar{\boldsymbol{r}} \rangle)^2 }
    \right\Vert_\infty \nonumber\\
    &=
    \frac{ \left\Vert \breve{\boldsymbol{r}} \right\Vert_\infty }
    {1 +\langle
    \boldsymbol{p},
    \bar{\boldsymbol{r}}
    \rangle}
    +
    \frac{ 1 }
    { (1 +\langle \boldsymbol{p},  \bar{\boldsymbol{r}} \rangle)^2 }
    \left\Vert
    \bar{\boldsymbol{r}} \boldsymbol{p}^\top 
    ( \breve{\boldsymbol{r}} - \bar{\boldsymbol{r}} ) 
    \right\Vert_\infty \nonumber\\
    &\leq
    \frac{ \left\Vert \breve{\boldsymbol{r}} \right\Vert_\infty }
    {1 +\langle
    \boldsymbol{p},
    \bar{\boldsymbol{r}}
    \rangle}
    +
    \frac{ 1 }
    { (1 +\langle \boldsymbol{p},  \bar{\boldsymbol{r}} \rangle)^2 }
    \left\Vert
    \bar{\boldsymbol{r}} \boldsymbol{p}^\top
    \right\Vert_\infty
    \left\Vert
    \breve{\boldsymbol{r}} - \bar{\boldsymbol{r}} 
    \right\Vert_\infty \nonumber\\
    &=
    \frac{ \left\Vert \breve{\boldsymbol{r}} \right\Vert_\infty }
    {1 +\langle
    \boldsymbol{p},
    \bar{\boldsymbol{r}}
    \rangle}
    +
    \frac{ 1 }
    { (1 +\langle \boldsymbol{p},  \bar{\boldsymbol{r}} \rangle)^2 }
    \left\Vert
    \bar{\boldsymbol{r}}
    \right\Vert_\infty
    \left\Vert
    \boldsymbol{p}
    \right\Vert_1
    \left\Vert
    \breve{\boldsymbol{r}} - \bar{\boldsymbol{r}} 
    \right\Vert_\infty \nonumber\\
    &=
    \frac{ \left\Vert \breve{\boldsymbol{r}} \right\Vert_\infty }
    {1 +\langle
    \boldsymbol{p},
    \bar{\boldsymbol{r}}
    \rangle}
    +
    \frac{ 1 }
    { (1 +\langle \boldsymbol{p},  \bar{\boldsymbol{r}} \rangle)^2 }
    \left\Vert
    \bar{\boldsymbol{r}}
    \right\Vert_\infty
    \left\Vert
    \breve{\boldsymbol{r}} - \bar{\boldsymbol{r}} 
    \right\Vert_\infty \nonumber
\end{align}
}
    , where the first inequality is due to triangle inequality, 
    the second inequality is due to the property that
    $\left\Vert \boldsymbol{Ax} \right\Vert_\infty \leq  \left\Vert \boldsymbol{A} \right\Vert_\infty  \left\Vert \boldsymbol{x} \right\Vert_\infty$ 
    for a matrix $\boldsymbol{A}\in\mathbb{R}^{K \times K}$ and a vector $\boldsymbol{x} \in\mathbb{R}^{K}, 
    \boldsymbol{x}\neq\boldsymbol{0}_K$,
    the very next equality is due to $\vert \boldsymbol{x} \boldsymbol{y}^\top\vert_\infty
    =\max_i \Vert {x}_i \boldsymbol{y}^\top\Vert_1 
    =\max_i \vert {x}_i \vert \Vert \boldsymbol{y} \Vert_1
    =\Vert \boldsymbol{x} \Vert_\infty \Vert \boldsymbol{y} \Vert_1$,
    and the last equality is trivial since $\boldsymbol{p}\in\Delta_{K-1}$.

    Since 
    $\langle \boldsymbol{p}, \bar{\boldsymbol{r}} \rangle 
    = \sum_{i=1}^K \left( p_i \bar{\mathrm{r}} \right) = \bar{\mathrm{r}}$,
    this can be further bounded as follows.
\begin{equation}
\begin{split}
    &=
    \frac{ 1 }{1 + \bar{\mathrm{r}}} \left\Vert \breve{\boldsymbol{r}} \right\Vert_\infty
    +
    \frac{ \bar{\mathrm{r}} }{ \left( 1 + \bar{\mathrm{r}} \right)^2 }
    \left\Vert
    \breve{\boldsymbol{r}} - \bar{\boldsymbol{r}} 
    \right\Vert_\infty \\
    &=
    \frac{ 1 + \bar{\mathrm{r}} }{ \left( 1 + \bar{\mathrm{r}} \right)^2 }
    \left\Vert \breve{\boldsymbol{r}} \right\Vert_\infty
    +
    \frac{ \bar{\mathrm{r}} }{ \left( 1 + \bar{\mathrm{r}} \right)^2 }
    \left\Vert
    \breve{\boldsymbol{r}} - \bar{\boldsymbol{r}} 
    \right\Vert_\infty \\
    &\leq
    \frac{ 1 }{ 1 + \bar{\mathrm{r}}}
    \left(
    \left\Vert \breve{\boldsymbol{r}} \right\Vert_\infty
    +
    \left\Vert
    \breve{\boldsymbol{r}} - \bar{\boldsymbol{r}} 
    \right\Vert_\infty
    \right),
\end{split}
\end{equation}
    Since $\frac{ 1 }{1 + \bar{\mathrm{r}}}\leq\frac{1}{1+C_1}$,
    we can further upper bound as follows.
\begin{equation}
\begin{gathered}
\label{eq:lipschitz_two_terms}
    \frac{ 1 }{ 1 + \bar{\mathrm{r}}}
    \left(
    \left\Vert \breve{\boldsymbol{r}} \right\Vert_\infty
    +
    \left\Vert
    \breve{\boldsymbol{r}} - \bar{\boldsymbol{r}} 
    \right\Vert_\infty
    \right)
    \leq
    \frac{ 1 }{ 1 + C_1 }
    \left(
    \left\Vert \breve{\boldsymbol{r}} \right\Vert_\infty
    +
    \left\Vert
    \breve{\boldsymbol{r}} - \bar{\boldsymbol{r}} 
    \right\Vert_\infty
    \right)
\end{gathered}
\end{equation}

    To upper bound each term, let us look into $\breve{\boldsymbol{r}}$ first.
    By the definition in (\ref{eq:dr_response}), we have
\begin{equation}
\begin{split}
    &\breve{\boldsymbol{r}}
    =
    \begin{cases}
    \bar{\mathrm{r}} \boldsymbol{1}_K, & i\notin S^{(t)} \\
    \left( 1 - \frac{1}{C} \right) \bar{\mathrm{r}}\boldsymbol{1}_K + \frac{1}{C} \boldsymbol{r}, & i\in S^{(t)}
    \end{cases}.
\end{split}
\end{equation}
    For each case, $\left\Vert \breve{\boldsymbol{r}} \right\Vert_\infty$ becomes
\begin{equation}
\label{app:brev_inf_norm}
\begin{split}
    &\left\Vert \breve{\boldsymbol{r}} \right\Vert_\infty
    =
    \begin{cases}
    \bar{\mathrm{r}}, & i\notin S^{(t)} \\
    \sup_i \left\vert \frac{1}{C} (r_i - \bar{\mathrm{r}}) + \bar{\mathrm{r}} \right\vert, & i\in S^{(t)}.
    \end{cases}
\end{split}
\end{equation}
    For the first case, the average is smaller than its maximum, thus $\bar{\mathrm{r}} \leq C_2$.
    For the second case, it can be upper bounded as 
    $\sup_i \left\vert \frac{1}{C} (r_i - \bar{\mathrm{r}}) + \bar{\mathrm{r}} \right\vert
    \leq 
    \frac{1}{C} \sup_i \left\vert r_i - \bar{\mathrm{r}} \right\vert
    + C_2$ by the triangle inequality.
    
    From the trivial fact that the deviation from the average is always smaller than its range,
\begin{equation}
\begin{gathered}
    \frac{1}{C} \sup_i \left\vert r_i - \bar{\mathrm{r}} \right\vert
    \leq
    \frac{1}{C} (C_2 - C_1).
\end{gathered}
\end{equation}
    Combined, we have the following upper bounds.
\begin{equation}
\begin{split}
\label{eq:lipschitz_first}
    \left\Vert \breve{\boldsymbol{r}} \right\Vert_\infty
    \leq
    \begin{cases}
    C_2, & i\notin S^{(t)} \\
    \frac{C_2 - C_1}{C} + C_2, & i\in S^{(t)}
    \end{cases}
\end{split}
\end{equation}

    Similarly, for the second term inside in (\ref{eq:lipschitz_two_terms}), we have:
\begin{equation}
\label{app:breve_minus_bar_inf_norm}
\begin{split}
    &\left\Vert \breve{\boldsymbol{r}} - \bar{\boldsymbol{r}} \right\Vert_\infty
    =
    \begin{cases}
    0, & i\notin S^{(t)} \\
    \frac{1}{C} \sup_i \left\vert r_i - \bar{\mathrm{r}} \right\vert, & i\in S^{(t)}.
    \end{cases}
\end{split}
\end{equation}
    Corresponding upper bounds are:
\begin{equation}
\begin{gathered}
\label{eq:lipschitz_second}
    \left\Vert
    \breve{\boldsymbol{r}} - \bar{\boldsymbol{r}} 
    \right\Vert_\infty
    \leq
    \begin{cases}
    0, & i\notin S^{(t)} \\
    \frac{C_2 - C_1}{C}, & i\in S^{(t)}
    \end{cases}
\end{gathered}
\end{equation}
    Finally, adding (\ref{eq:lipschitz_first}) and (\ref{eq:lipschitz_second}) to have (\ref{eq:lipschitz_two_terms}), we have:
\begin{equation}
\begin{gathered}
    \left\Vert \breve{\boldsymbol{g}}^{(t)} \right\Vert_\infty
    \leq
    \begin{cases}
    \frac{C_2}{1+C_1}, & i\notin S^{(t)} \\
    \frac{C_2}{1+C_1} + \frac{2(C_2-C_1)}{C(1+C_1)}, & i\in S^{(t)}.
    \end{cases}
\end{gathered}    
\end{equation}
    Finally, it suffices to set $\breve{L}_\infty = \frac{C_2}{1+C_1} + \frac{2(C_2-C_1)}{C(1+C_1)}$ to conclude the proof.
\end{proof}

\subsubsection{Regret from a Linearized Loss}
\begin{corollary}
\label{corollary:lin_loss}
    From the convexity of a decision loss $\ell^{(t)}$ (Lemma~\ref{lemma:convexity}), 
    the regret defined in (\ref{eq:regret}) satisfies
\begin{equation}
\begin{gathered}
    \normalfont\text{Regret}^{(T)}\left(\boldsymbol{p}^{\star}\right) 
    = 
    \sum_{t=1}^T \ell^{(t)}\left(\boldsymbol{p}^{(t)}\right) 
    - 
    \sum_{t=1}^T \ell^{(t)}\left(\boldsymbol{p}^{\star}\right)
    \leq
    \sum_{t=1}^T \tilde\ell^{(t)}\left(\boldsymbol{p}^{(t)}\right) 
    - 
    \sum_{t=1}^T \tilde\ell^{(t)}\left(\boldsymbol{p}^{\star}\right),
\end{gathered}
\end{equation}
    where $\tilde\ell^{(t)}$ is a linearized loss defined as $\tilde\ell^{(t)}\left(\boldsymbol{p}\right) 
    =\left\langle \boldsymbol{p}, \boldsymbol{g}^{(t)} \right\rangle$
    and $\boldsymbol{g}^{(t)}=\nabla\ell^{(t)}\left(\boldsymbol{p}^{(t)}\right)$. 
\end{corollary}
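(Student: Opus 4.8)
The plan is to derive the bound from the first-order characterization of convexity for the differentiable decision loss $\ell^{(t)}$, applied round by round, and then to sum. First I would observe that $\ell^{(t)}(\boldsymbol{p})=-\log\left(1+\langle\boldsymbol{p},\boldsymbol{r}^{(t)}\rangle\right)$ is continuously differentiable on a neighborhood of $\Delta_{K-1}$: since each $r_i^{(t)}\in[C_1,C_2]$ with $C_1>0$, we have $1+\langle\boldsymbol{p},\boldsymbol{r}^{(t)}\rangle\geq 1+C_1>0$ for every $\boldsymbol{p}\in\Delta_{K-1}$, so the gradient $\boldsymbol{g}^{(t)}=\nabla\ell^{(t)}(\boldsymbol{p}^{(t)})=-\boldsymbol{r}^{(t)}/\left(1+\langle\boldsymbol{p}^{(t)},\boldsymbol{r}^{(t)}\rangle\right)$ exists and is finite at the played iterate $\boldsymbol{p}^{(t)}$. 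Combining this with the convexity of $\ell^{(t)}$ established in Lemma~\ref{lemma:convexity}, the standard gradient inequality holds: for every $\boldsymbol{p}\in\Delta_{K-1}$,
\[
\ell^{(t)}(\boldsymbol{p}) \;\geq\; \ell^{(t)}\left(\boldsymbol{p}^{(t)}\right) + \left\langle \boldsymbol{g}^{(t)},\, \boldsymbol{p}-\boldsymbol{p}^{(t)} \right\rangle .
\]

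Next I would rearrange this into $\ell^{(t)}\left(\boldsymbol{p}^{(t)}\right)-\ell^{(t)}(\boldsymbol{p}) \leq \left\langle \boldsymbol{g}^{(t)},\, \boldsymbol{p}^{(t)}-\boldsymbol{p}\right\rangle = \tilde\ell^{(t)}\left(\boldsymbol{p}^{(t)}\right)-\tilde\ell^{(t)}(\boldsymbol{p})$, where the last equality is just the definition $\tilde\ell^{(t)}(\boldsymbol{p})=\left\langle\boldsymbol{p},\boldsymbol{g}^{(t)}\right\rangle$. Specializing $\boldsymbol{p}=\boldsymbol{p}^\star$ and summing over $t=1,\dots,T$ gives
\[
\sum_{t=1}^T \ell^{(t)}\left(\boldsymbol{p}^{(t)}\right) - \sum_{t=1}^T \ell^{(t)}\left(\boldsymbol{p}^{\star}\right) \;\leq\; \sum_{t=1}^T \tilde\ell^{(t)}\left(\boldsymbol{p}^{(t)}\right) - \sum_{t=1}^T \tilde\ell^{(t)}\left(\boldsymbol{p}^{\star}\right),
\]
which is precisely the claimed bound on $\mathrm{Regret}^{(T)}\left(\boldsymbol{p}^{\star}\right)$.

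There is no genuinely hard step here; the only points that need care are (i) justifying that $\ell^{(t)}$ is differentiable at $\boldsymbol{p}^{(t)}$, so that the subgradient inequality can be written with the actual gradient $\boldsymbol{g}^{(t)}$ rather than an arbitrary subgradient — immediate from $C_1>0$ — and (ii) checking that the linearization is taken at the \emph{played} iterate $\boldsymbol{p}^{(t)}$ and not at the comparator $\boldsymbol{p}^{\star}$, which is exactly what orients the inequality as an \emph{upper} bound on the regret. I would close by flagging the practical payoff: the right-hand side is the regret of the same online algorithm when it is run on the linear losses $\tilde\ell^{(t)}$, so any regret guarantee proved for the linearized updates — the closed-form exponentiated-gradient update of Remark~\ref{remark:closed_form} and the ONS analysis underlying Theorem~\ref{thm:crosssilo} — transfers verbatim to the original decision loss $\ell^{(t)}$.
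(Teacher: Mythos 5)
Your proof is correct and follows exactly the paper's route: the first-order convexity (gradient) inequality at the played iterate $\boldsymbol{p}^{(t)}$, rearranged and summed over $t\in[T]$. The extra care you take about differentiability (guaranteed by $C_1>0$) is a harmless elaboration of what the paper treats as immediate.
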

\begin{proof}
    It is straightforward from the convexity of the decision loss.
\begin{equation}
\begin{gathered}
    \ell^{(t)}(\boldsymbol{p}^{(t)})
    -
    \ell^{(t)}(\boldsymbol{p}^{\star})
    \leq
    \left\langle
    \boldsymbol{g}^{(t)},
    \boldsymbol{p}^{(t)}
    -
    \boldsymbol{p}^{\star}
    \right\rangle.
\end{gathered}
\end{equation}
    Summing up both sides for $t\in[T]$, we proved the statement.
\end{proof}

\subsubsection{Equality for the Regret}
\begin{lemma} (Lemma 7.1 of \cite{oco2}; Lemma 5 of \cite{oco3})
\label{lemma:regret_bound_default_form}
    Let us define 
    $L^{(t)}\left(\boldsymbol{p}\right)\triangleq\sum_{\tau=1}^{t-1} \ell^{(\tau)} (\boldsymbol{p}) + R^{(t)}(\boldsymbol{p})$,
    where
    $\ell:\Delta_{K-1}\times\mathbb{R}^d\rightarrow\mathbb{R}$ is an arbitrary loss function and
    $R^{(t)}:\Delta_{K-1}\rightarrow\mathbb{R}$ is an arbitrary regularizer, non-decreasing across $t\in[T]$. 
    Assume further that $\boldsymbol{p}^{(t)} = \argmin_{\boldsymbol{p}\in\Delta_{K-1}} L^{(t)} \left( \boldsymbol{p} \right)$. 
    Then, for any $\boldsymbol{p}^\star\in\Delta_{K-1}$, we have
\begin{equation}
\begin{split}
    &\normalfont\text{Regret}^{(T)}(\boldsymbol{p}^{\star})
    =
    \sum_{t=1}^T 
    \ell^{(t)}\left(\boldsymbol{p}^{(t)}\right) 
    - 
    \sum_{t=1}^T \ell^{(t)}\left(\boldsymbol{p}^{\star}\right) \\
    &=
    R^{(T+1)}\left(\boldsymbol{p}^{\star}\right)
    -R^{(1)}\left(\boldsymbol{p}^{(1)}\right)
    +L^{(T+1)}\left(\boldsymbol{p}^{(T+1)}\right)
    -L^{(T+1)}\left({\boldsymbol{p}^{\star}}\right) \\
    &+
    \sum_{t=1}^T
    \left[
    L^{(t)}\left(\boldsymbol{p}^{(t)}\right)
    -
    L^{(t+1)}\left(\boldsymbol{p}^{(t+1)}\right)
    +
    \ell^{(t)}\left(\boldsymbol{p}^{(t)}\right) \right].
\end{split}
\end{equation}
\end{lemma}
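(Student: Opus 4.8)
The plan is to prove this as a purely algebraic identity. Note first that the claimed equality does not in fact use the minimizer property $\boldsymbol{p}^{(t)}=\argmin_{\boldsymbol{p}\in\Delta_{K-1}}L^{(t)}(\boldsymbol{p})$; that hypothesis is retained in the statement only because the subsequent regret bounds exploit $L^{(t)}(\boldsymbol{p}^{(t)})\le L^{(t)}(\boldsymbol{p})$ for all $\boldsymbol{p}$. Hence I would simply start from the right-hand side and manipulate it until it collapses to $\sum_{t=1}^T\ell^{(t)}(\boldsymbol{p}^{(t)})-\sum_{t=1}^T\ell^{(t)}(\boldsymbol{p}^\star)$.

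First I would telescope the sum, using $\sum_{t=1}^T\big[L^{(t)}(\boldsymbol{p}^{(t)})-L^{(t+1)}(\boldsymbol{p}^{(t+1)})\big]=L^{(1)}(\boldsymbol{p}^{(1)})-L^{(T+1)}(\boldsymbol{p}^{(T+1)})$, and substitute this into the right-hand side. The two occurrences of $L^{(T+1)}(\boldsymbol{p}^{(T+1)})$ then cancel, leaving
\[
R^{(T+1)}(\boldsymbol{p}^\star)-R^{(1)}(\boldsymbol{p}^{(1)})-L^{(T+1)}(\boldsymbol{p}^\star)+L^{(1)}(\boldsymbol{p}^{(1)})+\sum_{t=1}^T\ell^{(t)}(\boldsymbol{p}^{(t)}).
\]
Next I would invoke the empty-sum convention in the definition of $L^{(t)}$: since $L^{(1)}(\boldsymbol{p})=\sum_{\tau=1}^{0}\ell^{(\tau)}(\boldsymbol{p})+R^{(1)}(\boldsymbol{p})=R^{(1)}(\boldsymbol{p})$, the pair $-R^{(1)}(\boldsymbol{p}^{(1)})+L^{(1)}(\boldsymbol{p}^{(1)})$ vanishes. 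Finally, expanding $L^{(T+1)}(\boldsymbol{p}^\star)=\sum_{\tau=1}^{T}\ell^{(\tau)}(\boldsymbol{p}^\star)+R^{(T+1)}(\boldsymbol{p}^\star)$ makes the $R^{(T+1)}(\boldsymbol{p}^\star)$ terms cancel as well, so what remains is exactly $\sum_{t=1}^T\ell^{(t)}(\boldsymbol{p}^{(t)})-\sum_{t=1}^T\ell^{(t)}(\boldsymbol{p}^\star)=\text{Regret}^{(T)}(\boldsymbol{p}^\star)$, which is the left-hand side.

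There is no real obstacle beyond careful index bookkeeping: the only places to slip are the off-by-one in the telescoping range and the boundary term at $t=1$, both of which I would double-check explicitly. For completeness I would remark that this reproduces Lemma~7.1 of \cite{oco2} (equivalently Lemma~5 of \cite{oco3}), specialized to the simplex decision set and a non-decreasing sequence of regularizers $R^{(t)}$.
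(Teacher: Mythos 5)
Your proof is correct and follows essentially the same route as the paper's: both establish the identity by pure telescoping (using the empty-sum convention $L^{(1)}=R^{(1)}$ and the expansion of $L^{(T+1)}(\boldsymbol{p}^{\star})$), differing only in whether one simplifies the right-hand side directly or assembles the two telescoped pieces and adds them. Your side remark that the $\argmin$ hypothesis is not needed for the equality itself is also consistent with the paper's argument, which likewise never invokes it.
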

\begin{proof}
    Since $\sum_{t=1}^T \ell^{(t)}\left(\boldsymbol{p}^{(t)}\right)$ appears in both sides, 
    we only need to show that
\begin{equation}
\begin{gathered}
    -\sum_{t=1}^T \ell^{(t)}(\boldsymbol{p}^\star)
    =
    R^{(T+1)}\left(\boldsymbol{p}^{\star}\right)
    -R^{(1)}\left(\boldsymbol{p}^{(1)}\right)
    +L^{(T+1)}\left(\boldsymbol{p}^{(T+1)}\right)
    -L^{(T+1)}\left({\boldsymbol{p}^{\star}}\right)
    +
    \sum_{t=1}^T
    \left[
    L^{(t)}\left(\boldsymbol{p}^{(t)}\right)
    -
    L^{(t+1)}\left(\boldsymbol{p}^{(t+1)}\right) \right]
\end{gathered}
\end{equation}
    First, note that
\begin{equation}
\begin{gathered}
    \ell^{(t)}\left(\boldsymbol{p}^{\star}\right) =
    \sum_{\tau=1}^t \ell^{(\tau)}\left(\boldsymbol{p}^{\star}\right)
    -
    \sum_{\tau=1}^{t-1} \ell^{(\tau)}\left(\boldsymbol{p}^{\star}\right)
    =
    \left( 
    L^{(t+1)}\left({\boldsymbol{p}^{\star}}\right)
    -
    R^{(t+1)}\left({\boldsymbol{p}^{\star}}\right)
    \right)
    -
    \left(
    L^{(t)}\left({\boldsymbol{p}^{\star}}\right)
    -
    R^{(t)}\left({\boldsymbol{p}^{\star}}\right)
    \right).  
\end{gathered}
\end{equation}
    Summing up the right-hand side of the above from $t=1,...,T$, we have 
\begin{equation}
\begin{gathered}
    \sum_{t=1}^T \ell^{(t)}\left(\boldsymbol{p}^{\star}\right)
    =
    \left( 
    L^{(T+1)}\left({\boldsymbol{p}^{\star}}\right)
    -
    R^{(T+1)}\left({\boldsymbol{p}^{\star}}\right)
    \right)
    -
    \left(
    L^{(1)}\left({\boldsymbol{p}^{\star}}\right)
    -
    R^{(1)}\left({\boldsymbol{p}^{\star}}\right)
    \right)
    =
    L^{(T+1)}\left({\boldsymbol{p}^{\star}}\right)
    -
    R^{(T+1)}\left({\boldsymbol{p}^{\star}}\right),
\end{gathered}
\end{equation}
    by telescoping summation.
    Thus,
\begin{equation}
\begin{gathered}
\label{app:first}
    -\sum_{t=1}^T \ell^{(t)}\left(\boldsymbol{p}^{\star}\right)
    =
    R^{(T+1)}(\boldsymbol{p}^{\star})
    -
    L^{(T+1)}(\boldsymbol{p}^{\star}).
\end{gathered}
\end{equation}
    Similarly, 
\begin{equation}
\begin{gathered}
    \sum_{t=1}^T 
    \left[
    L^{(t)}\left(\boldsymbol{p}^{(t)}\right)
    -
    L^{(t+1)}\left(\boldsymbol{p}^{(t+1)}\right)
    \right]
    =
    L^{(1)}\left(\boldsymbol{p}^{(1)}\right)
    -
    L^{(T+1)}\left(\boldsymbol{p}^{(T+1)}\right)
    =
    R^{(1)}\left(\boldsymbol{p}^{(1)}\right)
    -
    L^{(T+1)}\left(\boldsymbol{p}^{(T+1)}\right).
\end{gathered}
\end{equation}
    Rearranging, 
\begin{equation}
\begin{gathered}
\label{app:second}
    0
    =
    L^{(T+1)}\left(\boldsymbol{p}^{(T+1)}\right)
    -
    R^{(1)}\left(\boldsymbol{p}^{(1)}\right)
    +
    \sum_{t=1}^T \left[
    L^{(t)}\left(\boldsymbol{p}^{(t)}\right)
    -
    L^{(t+1)}\left(\boldsymbol{p}^{(t+1)}\right)
    \right]
\end{gathered}
\end{equation}
    Adding (\ref{app:first}) and (\ref{app:second}), we have
\begin{equation}
\begin{gathered}
    - 
    \sum_{t=1}^T \ell^{(t)}\left(\boldsymbol{p}^{\star}\right)
    =
    R^{(T+1)}\left(\boldsymbol{p}^{\star}\right)
    -R^{(1)}\left(\boldsymbol{p}^{(1)}\right)
    +L^{(T+1)}\left(\boldsymbol{p}^{(T+1)}\right)
    -L^{(T+1)}\left({\boldsymbol{p}^{\star}}\right)
    +
    \sum_{t=1}^T
    \left[
    L^{(t)}\left(\boldsymbol{p}^{(t)}\right)
    -
    L^{(t+1)}\left(\boldsymbol{p}^{(t+1)}\right)
    \right]
\end{gathered}
\end{equation}
    Finally, by adding $\sum_{t=1}^T \ell^{(t)}\left(\boldsymbol{p}^{(t)}\right)$ to both sides, we prove the statement.
    Note that the left hand side of the main statement does not depend on $R^{(t)}$, thus we can replace $R^{(T+1)}\left(\boldsymbol{p}^{\star}\right)$ 
    into $R^{(T)}\left(\boldsymbol{p}^{\star}\right)$. (Remark 7.3 of \cite{oco3})
\end{proof}

\subsubsection{Upper Bound to the Suboptimality Gap}
\begin{lemma} (Oracle Gap; Corollary 7.7 of \cite{oco3})
\label{lemma:cor_orabona}
    Let $f:\mathbb{R}^K \rightarrow \mathbb{R}$ be a $\mu$-strongly convex w.r.t. $\Vert\cdot\Vert$ over its domain.
    Let $\boldsymbol{x}^\star=\argmin_{\boldsymbol{x}}f(\boldsymbol{x})$.
    Then, for all $\boldsymbol{x}\in\operatorname{dom}\partial f(\boldsymbol{x})$, and $\boldsymbol{g}\in\partial f(\boldsymbol{x})$, we have:
\begin{equation}
\begin{gathered}
    f(\boldsymbol{x}) - f(\boldsymbol{x}^\star) \leq \frac{1}{2 \mu} \Vert \boldsymbol{g} \Vert_\star^2,
\end{gathered}
\end{equation}
    where $\Vert\cdot\Vert_\star$ is a dual norm of $\Vert\cdot\Vert$.
\end{lemma}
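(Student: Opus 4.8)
The plan is to obtain the bound directly from the first-order characterization of $\mu$-strong convexity, followed by a one-line scalar optimization. First I would invoke the defining subgradient inequality: since $f$ is $\mu$-strongly convex with respect to $\|\cdot\|$, for any $\boldsymbol{x}\in\operatorname{dom}\partial f$, any $\boldsymbol{g}\in\partial f(\boldsymbol{x})$, and any $\boldsymbol{y}$ in the domain,
\[
    f(\boldsymbol{y}) \ge f(\boldsymbol{x}) + \langle \boldsymbol{g}, \boldsymbol{y}-\boldsymbol{x}\rangle + \frac{\mu}{2}\|\boldsymbol{y}-\boldsymbol{x}\|^2 .
\]
Specializing $\boldsymbol{y}=\boldsymbol{x}^\star$ and rearranging gives
\[
    f(\boldsymbol{x}) - f(\boldsymbol{x}^\star) \le \langle \boldsymbol{g}, \boldsymbol{x}-\boldsymbol{x}^\star\rangle - \frac{\mu}{2}\|\boldsymbol{x}-\boldsymbol{x}^\star\|^2 .
\]

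Next I would bound the right-hand side using the duality between $\|\cdot\|$ and its dual norm $\|\cdot\|_\star$. By the generalized Cauchy--Schwarz (Hölder) inequality, $\langle \boldsymbol{g}, \boldsymbol{x}-\boldsymbol{x}^\star\rangle \le \|\boldsymbol{g}\|_\star\,\|\boldsymbol{x}-\boldsymbol{x}^\star\|$. Writing $a \triangleq \|\boldsymbol{x}-\boldsymbol{x}^\star\| \ge 0$, the right-hand side is at most $\|\boldsymbol{g}\|_\star\, a - \tfrac{\mu}{2} a^2$, a concave quadratic in $a$ whose maximum over $a\ge 0$ is $\tfrac{1}{2\mu}\|\boldsymbol{g}\|_\star^2$ (attained at $a=\|\boldsymbol{g}\|_\star/\mu$). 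Chaining the two displays with this scalar bound yields $f(\boldsymbol{x}) - f(\boldsymbol{x}^\star) \le \tfrac{1}{2\mu}\|\boldsymbol{g}\|_\star^2$, which is the claim.

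There is essentially no serious obstacle here; the only points requiring a bit of care are (i) writing the strong-convexity inequality in the correct non-Euclidean form, namely as the first-order lower bound valid for \emph{every} subgradient, rather than the secant-line definition, and (ii) noting that we never need the maximizing value of $a$ to be realized by an actual point of the domain --- the elementary inequality $\|\boldsymbol{g}\|_\star a - \tfrac{\mu}{2}a^2 \le \tfrac{1}{2\mu}\|\boldsymbol{g}\|_\star^2$ holding for all $a\ge 0$ is what we use. Alternatively one could simply cite this as the standard ``oracle gap'' consequence of strong convexity (Corollary 7.7 of \cite{oco3}), but the self-contained two-step argument above is preferable for the appendix.
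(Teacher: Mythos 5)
Your argument is correct: the paper itself does not prove this lemma but simply cites it as Corollary 7.7 of \cite{oco3}, and your two-step derivation (first-order strong-convexity inequality at $\boldsymbol{y}=\boldsymbol{x}^\star$, then H\"older plus maximizing the concave quadratic $\Vert\boldsymbol{g}\Vert_\star a - \tfrac{\mu}{2}a^2$ over $a\ge 0$) is exactly the standard proof given in that reference. Nothing is missing; your care about using the subgradient form of strong convexity valid for every $\boldsymbol{g}\in\partial f(\boldsymbol{x})$ is the only point that needed attention, and you handled it.
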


\subsubsection{Progress Bound}
\begin{lemma} 
\label{lemma:one_step_bound_proximal} (Progress Bound of FTRL with Proximal Regularizer)
    With a slight abuse of notation, 
    assume $L^{(t)}$ is closed, subdifferentiable and $\mu^{(t)}$-strongly convex w.r.t. 
    $\Vert\cdot\Vert$ in $\Delta_{K-1}$.
    First assume that $\boldsymbol{p}^{(t+1)} = \argmin_{\boldsymbol{p}\in\Delta_{K-1}}L^{(t+1)}\left(\boldsymbol{p}\right)$.
    Assume further that the regularizer satisfies
    $\boldsymbol{p}^{(t)} 
    = \argmin_{\boldsymbol{p}\in\Delta_{K-1}}
    \left(R^{(t+1)}\left(\boldsymbol{p}\right) - R^{(t)}\left(\boldsymbol{p}\right)\right)$,
    and $\boldsymbol{g}^{(t)}\in\partial L^{(t+1)}(\boldsymbol{p}^{(t)})$.
    Then, we have the following inequality:
\begin{equation}
\begin{gathered}
    L^{(t)}\left(\boldsymbol{p}^{(t)}\right) 
    -
    L^{(t+1)}\left(\boldsymbol{p}^{(t+1)}\right)
    +
    \ell^{(t)}\left(\boldsymbol{p}^{(t)}\right)
    \leq
    \frac{ \left\Vert \boldsymbol{g}^{(t)} \right\Vert_\star^2 }
    { 2\mu^{(t+1)} }
    +
    \left(
    R^{(t)}\left(\boldsymbol{p}^{(t)}\right)
    -
    R^{(t+1)}\left(\boldsymbol{p}^{(t)}\right)
    \right),
\end{gathered}
\end{equation}
    where $\Vert\cdot\Vert_\star$ is a dual norm of $\Vert\cdot\Vert$.
\end{lemma}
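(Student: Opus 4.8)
The plan is to turn the left-hand side into the suboptimality gap of $L^{(t+1)}$ at the previous iterate plus a pure regularizer increment, and then control that gap with the oracle inequality of Lemma~\ref{lemma:cor_orabona}.

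\textbf{Step 1 (algebraic decomposition).} From the definition $L^{(t)}(\boldsymbol{p}) = \sum_{\tau=1}^{t-1}\ell^{(\tau)}(\boldsymbol{p}) + R^{(t)}(\boldsymbol{p})$ we have, for all $\boldsymbol{p}\in\Delta_{K-1}$,
\[
L^{(t+1)}(\boldsymbol{p}) = L^{(t)}(\boldsymbol{p}) + \ell^{(t)}(\boldsymbol{p}) + \bigl(R^{(t+1)}(\boldsymbol{p}) - R^{(t)}(\boldsymbol{p})\bigr).
\]
Evaluating at $\boldsymbol{p}^{(t)}$, rearranging, and subtracting $L^{(t+1)}(\boldsymbol{p}^{(t+1)})$ yields the identity
\[
L^{(t)}(\boldsymbol{p}^{(t)}) - L^{(t+1)}(\boldsymbol{p}^{(t+1)}) + \ell^{(t)}(\boldsymbol{p}^{(t)}) = \Bigl(L^{(t+1)}(\boldsymbol{p}^{(t)}) - L^{(t+1)}(\boldsymbol{p}^{(t+1)})\Bigr) + \Bigl(R^{(t)}(\boldsymbol{p}^{(t)}) - R^{(t+1)}(\boldsymbol{p}^{(t)})\Bigr).
\]
The second bracket already matches the regularizer term on the right-hand side of the claim, so it remains to bound the first bracket, which is exactly the suboptimality gap of $L^{(t+1)}$ at $\boldsymbol{p}^{(t)}$ since $\boldsymbol{p}^{(t+1)} = \argmin_{\boldsymbol{p}\in\Delta_{K-1}} L^{(t+1)}(\boldsymbol{p})$.

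\textbf{Step 2 (oracle gap).} I would then apply Lemma~\ref{lemma:cor_orabona} with $f$ taken to be $L^{(t+1)}$ restricted to $\Delta_{K-1}$ (the simplex constraint folded into the objective), which is $\mu^{(t+1)}$-strongly convex w.r.t.\ $\Vert\cdot\Vert$ with minimizer $\boldsymbol{x}^\star = \boldsymbol{p}^{(t+1)}$. Taking the evaluation point to be $\boldsymbol{p}^{(t)}$ and the subgradient to be $\boldsymbol{g}^{(t)}\in\partial L^{(t+1)}(\boldsymbol{p}^{(t)})$ gives $L^{(t+1)}(\boldsymbol{p}^{(t)}) - L^{(t+1)}(\boldsymbol{p}^{(t+1)}) \leq \Vert\boldsymbol{g}^{(t)}\Vert_\star^2 / (2\mu^{(t+1)})$. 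Substituting into the identity of Step~1 produces the claimed bound.

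\textbf{Role of the proximal assumption and main obstacle.} The remaining care is to justify that the subgradient $\boldsymbol{g}^{(t)}$ above may be taken in $\partial\ell^{(t)}(\boldsymbol{p}^{(t)})$ — the subgradient of the \emph{single} new loss — which is what later allows $\Vert\boldsymbol{g}^{(t)}\Vert_\star$ to be replaced by the Lipschitz constant of the decision loss (Lemma~\ref{lemma:lipschitz}, or Lemma~\ref{lemma:lipschitz_lin_grad} in the sampled case) when the regret bounds are assembled. Writing $L^{(t+1)} = L^{(t)} + \ell^{(t)} + (R^{(t+1)} - R^{(t)})$, each summand carrying the constraint on $\Delta_{K-1}$, the first-order optimality of $\boldsymbol{p}^{(t)}$ for $L^{(t)}$ places $\boldsymbol{0}$ in the subdifferential of the first summand at $\boldsymbol{p}^{(t)}$, the proximal assumption $\boldsymbol{p}^{(t)} = \argmin_{\boldsymbol{p}\in\Delta_{K-1}}(R^{(t+1)}(\boldsymbol{p}) - R^{(t)}(\boldsymbol{p}))$ places $\boldsymbol{0}$ in the subdifferential of the third summand at $\boldsymbol{p}^{(t)}$, and the easy direction of the subdifferential sum rule $\partial f + \partial g \subseteq \partial(f+g)$ then gives $\partial\ell^{(t)}(\boldsymbol{p}^{(t)}) \subseteq \partial L^{(t+1)}(\boldsymbol{p}^{(t)})$. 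The main obstacle is exactly this convex-analytic bookkeeping on the constrained domain — subdifferentials at boundary points of $\Delta_{K-1}$ carry normal-cone contributions, and Lemma~\ref{lemma:cor_orabona} must be applied to the constrained objective rather than to $L^{(t+1)}$ on all of $\mathbb{R}^K$. Everything else is the elementary algebra of Step~1.
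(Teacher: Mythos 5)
Your proof is correct and follows essentially the same route as the paper's: the identity $L^{(t+1)} = L^{(t)} + \ell^{(t)} + (R^{(t+1)} - R^{(t)})$ evaluated at $\boldsymbol{p}^{(t)}$ to isolate the suboptimality gap of $L^{(t+1)}$, then Lemma~\ref{lemma:cor_orabona} applied with minimizer $\boldsymbol{p}^{(t+1)}$ and $\boldsymbol{g}^{(t)}\in\partial L^{(t+1)}(\boldsymbol{p}^{(t)})$. Your closing remark on why the proximal assumption lets $\boldsymbol{g}^{(t)}$ be taken as a subgradient of the single new loss is a correct clarification of a point the paper leaves implicit (deferring to Lemma 7.23 of its OCO reference), but it does not change the argument.
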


\begin{proof}
\begin{equation}
\begin{split}
    &L^{(t)}\left(\boldsymbol{p}^{(t)}\right)
    -
    L^{(t+1)}\left(\boldsymbol{p}^{(t+1)}\right)
    +
    \ell^{(t)}\left(\boldsymbol{p}^{(t)}\right)\\
    &=
    \left(
    L^{(t)}\left(\boldsymbol{p}^{(t)}\right)
    +
    \ell^{(t)}\left(\boldsymbol{p}^{(t)}\right)
    +
    R^{(t+1)}\left(\boldsymbol{p}^{(t)}\right)
    -
    R^{(t)}\left(\boldsymbol{p}^{(t)}\right)
    \right)
    -
    L^{(t+1)}\left(\boldsymbol{p}^{(t+1)}\right)
    -
    R^{(t+1)}\left(\boldsymbol{p}^{(t)}\right)
    +
    R^{(t)}\left(\boldsymbol{p}^{(t)}\right) \\
    &=
    L^{(t+1)}\left(\boldsymbol{p}^{(t)}\right)
    -
    L^{(t+1)}\left(\boldsymbol{p}^{(t+1)}\right)
    -
    R^{(t+1)}\left(\boldsymbol{p}^{(t)}\right)
    +
    R^{(t)}\left(\boldsymbol{p}^{(t)}\right) \\
    &\leq
    \frac{ \left\Vert \boldsymbol{g}^{(t)} \right\Vert_\star^2 }
    { 2\mu^{(t+1)} }
    -
    R^{(t+1)}\left(\boldsymbol{p}^{(t)}\right)
    +
    R^{(t)}\left(\boldsymbol{p}^{(t)}\right),
\end{split}
\end{equation}
    where the first inequality is due the assumption that $\boldsymbol{p}^{(t+1)} = \argmin_{\boldsymbol{p}\in\Delta_{K-1}}L^{(t+1)}\left(\boldsymbol{p}\right)$, 
    $\boldsymbol{g}^{(t)}\in\partial L^{(t+1)}(\boldsymbol{p}^{(t)})$, and the result from Lemma~\ref{lemma:cor_orabona}.
    See also Lemma 7.23 of \cite{oco3}.
\end{proof}

\begin{lemma} 
\label{lemma:one_step_bound} (Progress Bound of FTRL with Non-Decreasing Regularizer)
    With a slight abuse of notation, 
    assume $L^{(t)}$ to be closed and sub-differentiable in $\Delta_K$, 
    and $\left(L^{(t)} + \ell^{(t)}\right)$ to be closed, differentiable and $\nu^{(t)}$-strongly convex w.r.t. 
    $\Vert\cdot\Vert_1$ in $\Delta_{K-1}$. 
    Further define with an abuse of notation again that $\boldsymbol{g}^{(t)}=\nabla \ell^{(t)}(\boldsymbol{p}^{(t)})\in\partial\left(L^{(t)}+\ell^{(t)}\right)\left(\boldsymbol{p}^{(t)}\right)$,
    and define further that $\boldsymbol{p}^{(t)} = \argmin_{\boldsymbol{p}\in\Delta_{K-1}} L^{(t)}\left(\boldsymbol{p}\right)$. 
    Then, we have the following inequality:
\begin{equation}
\begin{gathered}
    L^{(t)}\left(\boldsymbol{p}^{(t)}\right) 
    -
    L^{(t+1)}\left(\boldsymbol{p}^{(t+1)}\right)
    +
    \ell^{(t)}\left(\boldsymbol{p}^{(t)}\right)
    \leq
    \frac{ \left\Vert \boldsymbol{g}^{(t)} \right\Vert_\infty^2 }
    { 2\nu^{(t)} }
    +
    \left(
    R^{(t)}\left(\boldsymbol{p}^{(t+1)}\right)
    -
    R^{(t+1)}\left(\boldsymbol{p}^{(t+1)}\right)
    \right).
\end{gathered}
\end{equation}
\end{lemma}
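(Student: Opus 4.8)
The plan is to follow the proof of Lemma~\ref{lemma:one_step_bound_proximal}, but to anchor the strong-convexity argument on $h^{(t)} := L^{(t)} + \ell^{(t)}$ — the function the hypothesis declares $\nu^{(t)}$-strongly convex w.r.t. $\Vert\cdot\Vert_1$ — rather than on $L^{(t+1)}$, since without the proximal condition on the regularizer we can no longer assert $\boldsymbol{g}^{(t)}\in\partial L^{(t+1)}(\boldsymbol{p}^{(t)})$. First I would regroup the left-hand side as $L^{(t)}(\boldsymbol{p}^{(t)}) + \ell^{(t)}(\boldsymbol{p}^{(t)}) = h^{(t)}(\boldsymbol{p}^{(t)})$, and then use the definition of the cumulative objective, $L^{(t+1)}(\boldsymbol{p}) = \sum_{\tau=1}^{t}\ell^{(\tau)}(\boldsymbol{p}) + R^{(t+1)}(\boldsymbol{p}) = h^{(t)}(\boldsymbol{p}) + \big(R^{(t+1)}(\boldsymbol{p}) - R^{(t)}(\boldsymbol{p})\big)$, evaluated at $\boldsymbol{p}=\boldsymbol{p}^{(t+1)}$. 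This rewrites the target quantity as exactly
\[
h^{(t)}(\boldsymbol{p}^{(t)}) - h^{(t)}(\boldsymbol{p}^{(t+1)}) + \big(R^{(t)}(\boldsymbol{p}^{(t+1)}) - R^{(t+1)}(\boldsymbol{p}^{(t+1)})\big),
\]
whose parenthesized term is already the second term of the claimed bound, so it remains only to show $h^{(t)}(\boldsymbol{p}^{(t)}) - h^{(t)}(\boldsymbol{p}^{(t+1)}) \le \Vert\boldsymbol{g}^{(t)}\Vert_\infty^2 / (2\nu^{(t)})$.

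For that remaining piece I would introduce $\boldsymbol{u}^{(t)} := \argmin_{\boldsymbol{p}\in\Delta_{K-1}} h^{(t)}(\boldsymbol{p})$ and note that $h^{(t)}(\boldsymbol{p}^{(t+1)}) \ge h^{(t)}(\boldsymbol{u}^{(t)})$ because $\boldsymbol{p}^{(t+1)}\in\Delta_{K-1}$, hence $h^{(t)}(\boldsymbol{p}^{(t)}) - h^{(t)}(\boldsymbol{p}^{(t+1)}) \le h^{(t)}(\boldsymbol{p}^{(t)}) - h^{(t)}(\boldsymbol{u}^{(t)})$. I then apply the Oracle-Gap lemma (Lemma~\ref{lemma:cor_orabona}) with $f = h^{(t)}$ restricted to $\Delta_{K-1}$, minimizer $\boldsymbol{u}^{(t)}$, point $\boldsymbol{p}^{(t)}$, and subgradient $\boldsymbol{g}^{(t)} = \nabla\ell^{(t)}(\boldsymbol{p}^{(t)}) \in \partial h^{(t)}(\boldsymbol{p}^{(t)})$; using that the dual norm of $\Vert\cdot\Vert_1$ is $\Vert\cdot\Vert_\infty$ gives $h^{(t)}(\boldsymbol{p}^{(t)}) - h^{(t)}(\boldsymbol{u}^{(t)}) \le \Vert\boldsymbol{g}^{(t)}\Vert_\infty^2/(2\nu^{(t)})$, which finishes the proof after substituting back. (Equivalently, without naming $\boldsymbol{u}^{(t)}$ one can expand strong convexity of $h^{(t)}$ at $\boldsymbol{p}^{(t)}$ to get $h^{(t)}(\boldsymbol{p}^{(t)}) - h^{(t)}(\boldsymbol{p}^{(t+1)}) \le \langle\boldsymbol{g}^{(t)},\boldsymbol{p}^{(t)}-\boldsymbol{p}^{(t+1)}\rangle - \tfrac{\nu^{(t)}}{2}\Vert\boldsymbol{p}^{(t)}-\boldsymbol{p}^{(t+1)}\Vert_1^2$, bound the inner product by H\"older, and maximize the resulting scalar quadratic in $\Vert\boldsymbol{p}^{(t)}-\boldsymbol{p}^{(t+1)}\Vert_1$.)

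I expect the only non-routine point to be the justification that $\boldsymbol{g}^{(t)} = \nabla\ell^{(t)}(\boldsymbol{p}^{(t)})$ genuinely lies in the constrained subdifferential $\partial(L^{(t)}+\ell^{(t)})(\boldsymbol{p}^{(t)})$ over $\Delta_{K-1}$: this rests on the first-order optimality of $\boldsymbol{p}^{(t)} = \argmin_{\boldsymbol{p}\in\Delta_{K-1}} L^{(t)}(\boldsymbol{p})$, i.e. $\langle\nabla L^{(t)}(\boldsymbol{p}^{(t)}),\boldsymbol{p}-\boldsymbol{p}^{(t)}\rangle \ge 0$ for all $\boldsymbol{p}\in\Delta_{K-1}$, combined with additivity of subdifferentials and the closedness/sub-differentiability hypotheses — and it is precisely here that the assumption $\boldsymbol{p}^{(t)}=\argmin L^{(t)}$ enters. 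Everything else — the rewriting via the definition of $L^{(t+1)}$ and the single invocation of Lemma~\ref{lemma:cor_orabona} (or H\"older plus a scalar optimization) — is mechanical; this is the non-proximal counterpart of the progress bound, cf. Lemma~7.21 of \cite{oco3}.
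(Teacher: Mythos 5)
Your proposal is correct and follows essentially the same route as the paper's proof: the same rewriting of $L^{(t+1)}(\boldsymbol{p}^{(t+1)})$ as $\bigl(L^{(t)}+\ell^{(t)}\bigr)(\boldsymbol{p}^{(t+1)})$ plus the regularizer difference, the same introduction of the minimizer of $L^{(t)}+\ell^{(t)}$ (your $\boldsymbol{u}^{(t)}$ is the paper's $\boldsymbol{p}^{(t)}_*$), and the same single invocation of Lemma~\ref{lemma:cor_orabona} with the $\Vert\cdot\Vert_1$/$\Vert\cdot\Vert_\infty$ duality. The only difference is that you worry about justifying $\boldsymbol{g}^{(t)}\in\partial\bigl(L^{(t)}+\ell^{(t)}\bigr)(\boldsymbol{p}^{(t)})$, whereas the paper simply takes this as part of the lemma's hypotheses, so no extra argument is needed there.
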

\begin{proof}
Let us first assume that $\boldsymbol{p}^{(t)}_* = \argmin_{\boldsymbol{p}\in\Delta_{K-1}}\left(L^{(t)}\left(\boldsymbol{p}\right) + \ell^{(t)}\left(\boldsymbol{p}\right)\right)$.
Observe that 
\begin{equation}
\begin{gathered}
    L^{(t+1)}\left(\boldsymbol{p}^{(t+1)}\right)
    =
    L^{(t)}\left(\boldsymbol{p}^{(t+1)}\right)
    +
    \ell^{(t)}\left(\boldsymbol{p}^{(t+1)}\right)
    -
    R^{(t)}\left(\boldsymbol{p}^{(t+1)}\right)
    +
    R^{(t+1)}\left(\boldsymbol{p}^{(t+1)}\right),
\end{gathered}
\end{equation}
we have
\begin{equation}
\begin{split}
    &L^{(t)}\left(\boldsymbol{p}^{(t)}\right)
    -
    L^{(t+1)}\left(\boldsymbol{p}^{(t+1)}\right)
    +
    \ell^{(t)}\left(\boldsymbol{p}^{(t)}\right)\\
    &=
    L^{(t)}\left(\boldsymbol{p}^{(t)}\right)
    -
    \left(
    L^{(t)}\left(\boldsymbol{p}^{(t+1)}\right)
    +
    \ell^{(t)}\left(\boldsymbol{p}^{(t+1)}\right)
    -
    R^{(t)}\left(\boldsymbol{p}^{(t+1)}\right)
    +
    R^{(t+1)}\left(\boldsymbol{p}^{(t+1)}\right)
    \right)
    +
    \ell^{(t)}\left(\boldsymbol{p}^{(t)}\right)\\
    &=
    \left(
    L^{(t)}\left(\boldsymbol{p}^{(t)}\right)
    +
    \ell^{(t)}\left(\boldsymbol{p}^{(t)}\right)
    \right)
    -
    \left(
    L^{(t)}\left(\boldsymbol{p}^{(t+1)}\right)
    +
    \ell^{(t)}\left(\boldsymbol{p}^{(t+1)}\right)
    \right)
    +
    \left(
    R^{(t)}\left(\boldsymbol{p}^{(t+1)}\right)
    -
    R^{(t+1)}\left(\boldsymbol{p}^{(t+1)}\right)
    \right)\\
    &\leq
    \left(
    L^{(t)}\left(\boldsymbol{p}^{(t)}\right)
    +
    \ell^{(t)}\left(\boldsymbol{p}^{(t)}\right)
    \right)
    -
    \left(
     L^{(t)}\left(\boldsymbol{p}^{(t)}_*\right)
    +
    \ell^{(t)}\left(\boldsymbol{p}^{(t)}_*\right)
    \right)
    +
    \left(
    R^{(t)}\left(\boldsymbol{p}^{(t+1)}\right)
    -
    R^{(t+1)}\left(\boldsymbol{p}^{(t+1)}\right)
    \right) \\
    &\leq
    \frac{ \left\Vert \boldsymbol{g}^{(t)} \right\Vert_\infty^2 }{ 2\nu^{(t)} }
    +
    \left(
    R^{(t)}\left(\boldsymbol{p}^{(t+1)}\right)
    -
    R^{(t+1)}\left(\boldsymbol{p}^{(t+1)}\right)
    \right),
\end{split}
\end{equation}
    where the first inequality is due the assumption that $\boldsymbol{p}^{(t)}_* = \argmin_{\boldsymbol{p}\in\Delta_{K-1}}\left(L^{(t)}+\ell^{(t)}\right)\left(\boldsymbol{p}\right)$,
    $\boldsymbol{g}^{(t)}\in\partial\left(L^{(t)}+\ell^{(t)}\right)$.
    Lastly, the second inequality is the direct result from Lemma~\ref{lemma:cor_orabona}.
    See also Lemma 7.8 of \cite{oco3}.
\end{proof}

\subsubsection{Exp-concavity}
\begin{definition}
    A function $f:X\rightarrow\mathbb{R}$ is $\gamma$-exp-concave if $\exp\left(-\gamma f\left(\boldsymbol{x}\right)\right)$ is concave for $\boldsymbol{x}\in X$.
\end{definition}

\begin{remark}
    The decision loss defined in (\ref{eq:decision_loss}) is $1$-exp-concave.
\end{remark}

\begin{lemma}
\label{lemma:exp_concave}
    For an $\gamma$-exp-concave function $f:X\rightarrow\mathbb{R}$, let the domain $X$ is bounded, 
    and choose $\delta\leq\frac{\gamma}{2}$ such that $\left\vert 
    \delta 
    \left\langle \nabla f(\boldsymbol{x}), \boldsymbol{y} - \boldsymbol{x}\right\rangle  
    \right\vert \leq \frac{1}{2}$ and for all $\boldsymbol{x}, \boldsymbol{y}\in X$, the following statement holds.
\begin{equation}
\begin{gathered}
    f(\boldsymbol{y}) \geq f(\boldsymbol{x}) 
    + \left\langle \nabla f(\boldsymbol{x}), \boldsymbol{y} - \boldsymbol{x}\right\rangle
    + \frac{\delta}{2} \left( \left\langle \nabla f(\boldsymbol{y}), \boldsymbol{y} - \boldsymbol{x} \right\rangle \right)^2
\end{gathered}
\end{equation}
\end{lemma}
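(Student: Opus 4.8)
The plan is to read $\gamma$-exp-concavity through its defining property and convert it, via one scalar logarithmic inequality, into the desired second-order lower bound. Write $h = \exp(-\gamma f)$, which is concave on $X$ by hypothesis, and abbreviate $a = \langle \nabla f(\boldsymbol{x}), \boldsymbol{y} - \boldsymbol{x}\rangle$. The first-order characterization of concavity of $h$ anchored at $\boldsymbol{x}$ gives $h(\boldsymbol{y}) \le h(\boldsymbol{x}) + \langle \nabla h(\boldsymbol{x}), \boldsymbol{y} - \boldsymbol{x}\rangle$. Since $\nabla h(\boldsymbol{x}) = -\gamma e^{-\gamma f(\boldsymbol{x})}\nabla f(\boldsymbol{x})$, dividing through by the strictly positive factor $h(\boldsymbol{x}) = e^{-\gamma f(\boldsymbol{x})}$ yields $e^{-\gamma(f(\boldsymbol{y}) - f(\boldsymbol{x}))} \le 1 - \gamma a$. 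Positivity of the left-hand side forces $1 - \gamma a > 0$ automatically, so the logarithm is well defined and $f(\boldsymbol{y}) - f(\boldsymbol{x}) \ge -\tfrac{1}{\gamma}\log(1 - \gamma a)$. The second step is purely scalar: for $|z|\le 1$ one has $-\log(1-z) \ge z + \tfrac14 z^2$, which applied at $z = \gamma a$ gives $-\tfrac1\gamma\log(1-\gamma a) \ge a + \tfrac{\gamma}{4}a^2$. Here the two standing hypotheses are exactly what is needed: $|\delta a|\le\tfrac12$ with $\delta \le \tfrac{\gamma}{2}$ keeps $\gamma a$ in the admissible range of the scalar bound, and $\delta \le \tfrac{\gamma}{2}$ lets us downgrade the coefficient $\tfrac{\gamma}{4}$ to $\tfrac{\delta}{2}$ (legitimate since $a^2\ge 0$). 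Chaining the two steps produces $f(\boldsymbol{y}) \ge f(\boldsymbol{x}) + a + \tfrac{\delta}{2}a^2$.

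The main obstacle — and the one point that genuinely deserves care — is that this derivation intrinsically produces the gradient \emph{at the anchor point} $\boldsymbol{x}$ in the quadratic term, namely $\bigl(\langle\nabla f(\boldsymbol{x}),\boldsymbol{y}-\boldsymbol{x}\rangle\bigr)^2$, rather than $\bigl(\langle\nabla f(\boldsymbol{y}),\boldsymbol{y}-\boldsymbol{x}\rangle\bigr)^2$ as the displayed inequality is written. This is not a cosmetic gap. The statement as literally worded is \emph{not} generally valid: take the $1$-exp-concave loss $f = -\log(\cdot)$ on a bounded interval, $\boldsymbol{x}=1$, $\boldsymbol{y}=0.01$. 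Then $\langle\nabla f(\boldsymbol{y}),\boldsymbol{y}-\boldsymbol{x}\rangle = 99$ is large, while $\langle\nabla f(\boldsymbol{x}),\boldsymbol{y}-\boldsymbol{x}\rangle = 0.99$ and $f(\boldsymbol{y})-f(\boldsymbol{x}) = \log 100 \approx 4.6$ grow only mildly; the gradient at $\boldsymbol{y}$ blows up near the boundary, so a quadratic-in-$\nabla f(\boldsymbol{y})$ lower bound overshoots $f(\boldsymbol{y})-f(\boldsymbol{x})$ by orders of magnitude, and this persists even after $\delta$ is shrunk to enforce $|\delta a|\le\tfrac12$ uniformly on the domain. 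The structural reason is that in the standard exp-concavity lemma (and in the development cited as \cite{oco3}) the linear and quadratic terms are always anchored at the \emph{same} point; the displayed inequality mixes an $\boldsymbol{x}$-anchored linear term with a $\boldsymbol{y}$-anchored quadratic term, which is the inconsistency.

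Consequently I would prove the statement in its consistent, provable form — with $\nabla f(\boldsymbol{x})$ in the quadratic, matching the $\boldsymbol{x}$-anchored linear term — and flag the $\nabla f(\boldsymbol{y})$ as a typographical slip. This corrected form is precisely what the downstream analysis consumes: instantiating it at $\boldsymbol{x}=\boldsymbol{p}^{(t)}$, $\boldsymbol{y}=\boldsymbol{p}^\star$ and rearranging gives the per-round bound $\ell^{(t)}(\boldsymbol{p}^{(t)}) - \ell^{(t)}(\boldsymbol{p}^\star) \le \langle \boldsymbol{g}^{(t)}, \boldsymbol{p}^{(t)} - \boldsymbol{p}^\star\rangle - \tfrac{\delta}{2}\bigl(\langle \boldsymbol{g}^{(t)}, \boldsymbol{p}^{(t)} - \boldsymbol{p}^\star\rangle\bigr)^2$ with the curvature expressed through the computable gradient $\boldsymbol{g}^{(t)} = \nabla\ell^{(t)}(\boldsymbol{p}^{(t)})$, which is exactly the ingredient feeding the $\mathcal{O}(L_\infty K\log T)$ regret of Theorem~\ref{thm:crosssilo}. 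The two-step argument above supplies this form directly, so no further machinery is required.
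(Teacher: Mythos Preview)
Your approach is essentially the paper's: exploit concavity of $\exp(-c f)$, take logarithms, and invoke the scalar bound $\log(1+u)\le u-u^2/4$. The paper anchors the concavity inequality at $\boldsymbol{y}$ and takes $c=2\delta$; you anchor at $\boldsymbol{x}$ and take $c=\gamma$. Your diagnosis of the $\nabla f(\boldsymbol{y})$ in the quadratic term as a typographical slip is correct---the paper's own derivation, like yours, produces the gradient at the \emph{anchor} point in both the linear and quadratic pieces, and your $-\log$ counterexample is valid and shows the mixed-anchor statement is false as written. That the corrected form is precisely what feeds Theorem~\ref{thm:crosssilo} is also right.

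There is one genuine slip, though. You assert that $|\delta a|\le\tfrac12$ together with $\delta\le\gamma/2$ ``keeps $\gamma a$ in the admissible range'' $|z|\le 1$. It does not: those two hypotheses yield only $|\gamma a|\le \gamma/(2\delta)$, and $\gamma/(2\delta)\ge 1$, so the bound points the wrong way. This matters because your scalar inequality $-\log(1-z)\ge z+\tfrac14 z^2$ genuinely fails once $z$ is sufficiently negative (check $z=-2$), so you cannot rely on $\gamma a<1$ alone. The fix is precisely the paper's choice: work with $h=\exp(-2\delta f)$ rather than $\exp(-\gamma f)$---since $2\delta\le\gamma$ this is still concave---so that $z=2\delta a$ and the hypothesis $|\delta a|\le\tfrac12$ gives $|z|\le 1$ directly. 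With that single adjustment your argument is complete and coincides with the paper's.
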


\begin{proof}
    For all such that $\delta\leq\frac{\gamma}{2}$, a function $g(\boldsymbol{x})=\exp\left(-2\delta f(\boldsymbol{x})\right)$ is concave.
    From the concavity of $g$, we have:
\begin{equation}
\begin{gathered}
    g(\boldsymbol{x}) \leq g(\boldsymbol{y}) + \left\langle \nabla g(\boldsymbol{y}), \boldsymbol{x} - \boldsymbol{y}\right\rangle.
\end{gathered}
\end{equation}
    By taking logarithm on both sides, we have:
\begin{equation}
\begin{gathered}
    f(\boldsymbol{x}) \geq f(\boldsymbol{y}) - \frac{1}{2\delta} 
    \log \left(
    1 - 2\delta \left\langle \nabla f(\boldsymbol{y}), \boldsymbol{x} - \boldsymbol{y} \right\rangle
    \right).
\end{gathered}
\end{equation}
    From the assumption, we have 
    $\left\vert 
    \delta 
    \left\langle \nabla f(\boldsymbol{x}), \boldsymbol{y} - \boldsymbol{x}\right\rangle  
    \right\vert \leq \frac{1}{2}$,
    and using the fact that $\log(1+x)\leq x - \frac{x^2}{4}$ for $\vert x \vert \leq 1$,
    we can conclude the proof.
\end{proof}

\begin{remark} (Remark 7.27 from \cite{oco3})
For a positive definite matrix $\boldsymbol{B}\in\mathbb{R}^{K \times K}$, $\Vert \boldsymbol{p} \Vert_{\boldsymbol{B}}$ is a norm induced by $\boldsymbol{B}$, defined as 
$\Vert \boldsymbol{p} \Vert_{\boldsymbol{B}} \triangleq \sqrt{\boldsymbol{p}^\top \boldsymbol{B} \boldsymbol{p}}$
for $\boldsymbol{p}\in\mathbb{R}^K$.
A function $f(\boldsymbol{p})=\frac{1}{2}\boldsymbol{p}^\top \boldsymbol{B} \boldsymbol{p}$ is therefore $1$-strongly convex w.r.t. $\Vert \cdot \Vert_{\boldsymbol{B}}$. 
Note that the dual norm of $\Vert \cdot \Vert_{\boldsymbol{B}}$ is $\Vert \cdot \Vert_{\boldsymbol{B}^{-1}}$.
\end{remark}

\subsection{Regret Bound of \texttt{AAggFF-S}: Proof of Theorem~\ref{thm:crosssilo}}
\begin{remark}
    The regularizer of \texttt{AAggFF-S} (i.e., ONS) is proximal since it has a quadratic form.
\end{remark}
\begin{proof}
    Since Lemma~\ref{lemma:regret_bound_default_form} holds for arbitrary loss function,
    let us set ${L}^{(t)}\left(\boldsymbol{p}\right)\triangleq\sum_{\tau=1}^{t-1} \tilde\ell^{(\tau)} (\boldsymbol{p}) + \frac{\alpha}{2} \Vert \boldsymbol{p} \Vert_2^2 
    + \frac{\beta}{2} \sum_{\tau=1}^{t-1} 
    \left(\left\langle \boldsymbol{g}^{(\tau)}, \boldsymbol{p} - \boldsymbol{p}^{(\tau)} \right\rangle\right)^2$ 
    as in (\ref{eq:ons}) with a slight abuse of notation.
    Note that we set $R^{(t)}\left(\boldsymbol{p}\right) = \frac{\alpha}{2} \Vert \boldsymbol{p} \Vert_2^2 + 
    \frac{\beta}{2} \sum_{\tau=1}^{t-1} 
    \left(\left\langle \boldsymbol{g}^{(\tau)}, \boldsymbol{p} - \boldsymbol{p}^{(\tau)} \right\rangle\right)^2$, 
    which is often called a proximal regularizer.
    
    From the regret, we have:
\begin{equation}
\begin{split}
    &\normalfont\text{Regret}^{(T)}\left(\boldsymbol{p}^{\star}\right) \\
    &= 
    \sum_{t=1}^T \ell^{(t)}\left(\boldsymbol{p}^{(t)}\right) 
    - 
    \sum_{t=1}^T \ell^{(t)}\left(\boldsymbol{p}^{\star}\right) \\
    &\leq
    \sum_{t=1}^T \tilde\ell^{(t)}\left(\boldsymbol{p}^{(t)}\right) 
    - 
    \sum_{t=1}^T \tilde\ell^{(t)}\left(\boldsymbol{p}^{\star}\right) \\
    &=
    \underbrace{
    R^{(T+1)}\left(\boldsymbol{p}^{\star}\right)
    -R^{(1)}\left(\boldsymbol{p}^{(1)}\right)}_\text{(i)}
    +
    \underbrace{
    {L}^{(T+1)}\left(\boldsymbol{p}^{(T+1)}\right)
    -{L}^{(T+1)}\left({\boldsymbol{p}^{\star}}\right)}_\text{(ii)} \\
    &+
    \underbrace{
    \sum_{t=1}^T
    \left[
    {L}^{(t)}\left(\boldsymbol{p}^{(t)}\right)
    -
    {L}^{(t+1)}\left(\boldsymbol{p}^{(t+1)}\right)
    +
    \tilde\ell^{(t)}\left(\boldsymbol{p}^{(t)}\right) \right]}_\text{(iii)}
\end{split}
\end{equation}

    Let us first denote $\boldsymbol{A}^{(t)}\triangleq \boldsymbol{g}^{(t)} {\boldsymbol{g}^{(t)}}^\top$ 
    for $\boldsymbol{g}^{(t)}=\nabla \ell^{(t)}(\boldsymbol{p}^{(t)})$ as in the main text,
    and further denote that $\boldsymbol{B}^{(t)}\triangleq \alpha \boldsymbol{I}_K + \beta\sum_{\tau=1}^t \boldsymbol{A}^{(\tau)}$.
    Then, we can rewrite the regularizer of \texttt{AAggFF-D} as follows.
\begin{equation}
\begin{gathered}
\label{eq:rewriting_of_reg}
    R^{(t)}\left(\boldsymbol{p}\right)
    = \frac{\alpha}{2} \Vert \boldsymbol{p} \Vert_2^2 + 
    \frac{\beta}{2} \sum_{\tau=1}^{t-1} \left(\left\langle \boldsymbol{g}^{(\tau)}, \boldsymbol{p} - \boldsymbol{p}^{(\tau)}\right\rangle\right)^2
    = \frac{\alpha}{2} \Vert \boldsymbol{p} \Vert_2^2 + 
    \frac{\beta}{2} \sum_{\tau=1}^{t-1} 
    \left\Vert \boldsymbol{p}^{(\tau)} - \boldsymbol{p} \right\Vert^2_{\boldsymbol{A}^{(\tau)}}
\end{gathered}
\end{equation}
    That is, $R^{(t)}\left(\boldsymbol{p}\right)$, as well as $L^{(t)}\left(\boldsymbol{p}\right)$ is $\beta$-strongly convex function w.r.t. $\left\Vert\cdot\right\Vert_{\boldsymbol{B}^{(t-1)}}, t\in[T]$.

    For (i), since the regularizer $R^{(t)}\left(\boldsymbol{p}\right)$ is nonnegative for all $t\in[T]$,
    it can be upper bounded as $R^{(T+1)}\left(\boldsymbol{p}^{\star}\right)$.
    Using (\ref{eq:rewriting_of_reg}), we have:
\begin{equation}
\begin{gathered}
    R^{(T+1)}\left(\boldsymbol{p}^{\star}\right)
    =
    \frac{\alpha}{2} \Vert \boldsymbol{p} \Vert_2^2 + 
    \frac{\beta}{2} \sum_{t=1}^{T} 
    \left\Vert \boldsymbol{p}^{(t)} - \boldsymbol{p}^\star \right\Vert^2_{\boldsymbol{A}^{(t)}}.
\end{gathered}
\end{equation}

    For (ii), we use the assumption in Lemma~\ref{lemma:one_step_bound}, 
    where $\boldsymbol{p}^{(t)} = \argmin_{\boldsymbol{p}\in\Delta_{K-1}} L^{(t)}\left(\boldsymbol{p}\right)$.
    From the assumption, since $\boldsymbol{p}^{(T+1)}$ is the minimizer of $L^{(T+1)}$, (ii) becomes negative.
    Thus, we can exclude it in further upper bounding.
    
    For (iii), we can directly use the result of Lemma~\ref{lemma:one_step_bound_proximal}.
{\allowdisplaybreaks
\begin{align}
    &\sum_{t=1}^T
    \left[
    {L}^{(t)}\left(\boldsymbol{p}^{(t)}\right)
    -
    {L}^{(t+1)}\left(\boldsymbol{p}^{(t+1)}\right)
    +
    \tilde\ell^{(t)}\left(\boldsymbol{p}^{(t)}\right) \right] \\
    &\leq
    \frac{1}{2\beta} 
    \sum_{t=1}^T \left\Vert \boldsymbol{g}^{(t)} \right\Vert_{{\boldsymbol{B}^{(t)}}^{-1}}^2
    +
    \sum_{t=1}^T \left(
    R^{(t)}\left(\boldsymbol{p}^{(t)}\right)
    -
    R^{(t+1)}\left(\boldsymbol{p}^{(t)}\right)
    \right) \\
    &=
    \frac{1}{2\beta} 
    \sum_{t=1}^T \left\Vert \boldsymbol{g}^{(t)} \right\Vert_{{\boldsymbol{B}^{(t)}}^{-1}}^2.
\end{align}
}
    Combining all, we have regret upper bound as follows.
\begin{equation}
\begin{split}
    &\normalfont\text{Regret}^{(T)}\left(\boldsymbol{p}^{\star}\right)
    = 
    \sum_{t=1}^T \ell^{(t)}\left(\boldsymbol{p}^{(t)}\right) 
    - 
    \sum_{t=1}^T \ell^{(t)}\left(\boldsymbol{p}^{\star}\right) \\
    &\leq
    \sum_{t=1}^T \tilde\ell^{(t)}\left(\boldsymbol{p}^{(t)}\right) 
    - 
    \sum_{t=1}^T \tilde\ell^{(t)}\left(\boldsymbol{p}^{\star}\right) \\
    &\leq
    \frac{\alpha}{2} \Vert \boldsymbol{p} \Vert_2^2 + 
    \frac{\beta}{2} \sum_{t=1}^{T} 
    \left\Vert \boldsymbol{p}^{(t)} - \boldsymbol{p}^\star \right\Vert^2_{\boldsymbol{A}^{(t)}}
    +
    \frac{1}{2\beta} 
    \sum_{t=1}^T \left\Vert \boldsymbol{g}^{(t)} \right\Vert_{{\boldsymbol{B}^{(t)}}^{-1}}^2.
\end{split}
\end{equation}
    Lastly, from the result of Lemma~\ref{lemma:exp_concave}, we have 
\begin{equation}
\begin{split}
\label{ons_immediate_result}
    &\normalfont\text{Regret}^{(T)}\left(\boldsymbol{p}^{\star}\right)
    = 
    \sum_{t=1}^T \ell^{(t)}\left(\boldsymbol{p}^{(t)}\right) 
    - 
    \sum_{t=1}^T \ell^{(t)}\left(\boldsymbol{p}^{\star}\right) \\
    &\leq
    \sum_{t=1}^T \tilde\ell^{(t)}\left(\boldsymbol{p}^{(t)}\right) 
    - 
    \sum_{t=1}^T \tilde\ell^{(t)}\left(\boldsymbol{p}^{\star}\right) 
    -
    \frac{\beta}{2} \sum_{t=1}^{T} 
    \left\Vert \boldsymbol{p}^{(t)} - \boldsymbol{p}^\star \right\Vert^2_{\boldsymbol{A}^{(t)}} \\
    &\leq
    \frac{\alpha}{2} \Vert \boldsymbol{p} \Vert_2^2 
    +
    \frac{1}{2\beta} 
    \sum_{t=1}^T \left\Vert \boldsymbol{g}^{(t)} \right\Vert_{{\boldsymbol{B}^{(t)}}^{-1}}^2,
\end{split}
\end{equation}
    where we need $\left\vert\beta\left\langle \boldsymbol{g}^{(t)}, \boldsymbol{p} - \boldsymbol{p}^{(t)}\right\rangle\right\vert\leq\frac{1}{2}$ due to the assumption.
    
    To meet the assumption, we have
\begin{equation}
\begin{gathered}
    \left\vert\beta\left\langle \boldsymbol{g}^{(t)}, \boldsymbol{p} - \boldsymbol{p}^{(t)}\right\rangle\right\vert
    \leq
    \beta \Vert \boldsymbol{p} - \boldsymbol{p}^{(t)} \Vert_1 \Vert \boldsymbol{g}^{(t)} \Vert_\infty
    \leq
    2 \beta L_\infty, 
\end{gathered}
\end{equation}
    where the first inequality is due to H\"{o}lder's inequality and the second inequality is due to Lemma~\ref{lemma:lipschitz} and the fact that a diameter of the simplex is 2.
    Thus, we can set $\beta=\frac{1}{4 L_\infty}$ to satisfy the assumption.

    For the first term, 
    using the fact that $\Vert \cdot \Vert_2 \leq \Vert \cdot \Vert_1$, we have
\begin{equation}
\begin{gathered}
    \frac{\alpha}{2} \Vert \boldsymbol{p} \Vert_2^2 
    \leq
    \frac{\alpha}{2} \Vert \boldsymbol{p} \Vert_1^2 
    \leq
    \frac{\alpha}{2},
\end{gathered}
\end{equation}
    where the last equality is due to $\boldsymbol{p}\in\Delta_{K-1}$.
    
    For the second term, 
    denote $\lambda_1, ..., \lambda_K$ as the eigenvalues of $\boldsymbol{B}^{(T)} - \alpha \boldsymbol{I}_K$, 
    then we have:
\begin{equation}
\begin{gathered}
    \sum_{t=1}^T \left\Vert \boldsymbol{g}^{(t)} \right\Vert_{{\boldsymbol{B}^{(t)}}^{-1}}^2
    \leq
    \sum_{i=1}^K \log \left( 1 + \frac{\lambda_i}{\alpha} \right),
\end{gathered}
\end{equation}   
    which is the direct result of Lemma 11.11 and Theorem 11.7 of \cite{logdetineq}.
    This can be further bounded by AM-GM inequality as follows.
\begin{equation}
\begin{gathered}
    \sum_{i=1}^K \log \left( 1 + \frac{\lambda_i}{\alpha} \right)
    \leq
    K \log \left( 1 + \frac{1}{K\alpha}\sum_{i=1}^K{\lambda_i} \right)
\end{gathered}
\end{equation}    
    Since we have
\begin{equation}
\begin{gathered}
    \sum_{i=1}^K \lambda_i 
    = \operatorname{trace}\left( \boldsymbol{B}^{(T)} - \alpha \boldsymbol{I}_K \right)
    = \operatorname{trace}\left( \beta\sum_{t=1}^T \boldsymbol{g}^{(t)} {\boldsymbol{g}^{(t)}}^\top \right)
    =
    \beta\sum_{t=1}^T \Vert \boldsymbol{g}^{(t)} \Vert_2^2 \\
    \leq 
    \beta K T \Vert \boldsymbol{g}^{(t)} \Vert_\infty^2 
    \leq 
    \beta K T L_\infty^2
    = \frac{K T L_\infty}{4},
\end{gathered}
\end{equation}
    where $L_\infty$ is a Lipschitz constant w.r.t. $\Vert \cdot \Vert_\infty$ from Lemma~\ref{lemma:lipschitz},
    thus the inequality is due to $\Vert \boldsymbol{p} \Vert_2 \leq \sqrt{K} \Vert \boldsymbol{p} \Vert_\infty, \forall \boldsymbol{p}\in\Delta_{K-1}$.
    
    Followingly, we can upper-bound the second term as
\begin{equation}
\begin{gathered}
    \sum_{t=1}^T \left\Vert \boldsymbol{g}^{(t)} \right\Vert_{{\boldsymbol{B}^{(t)}}^{-1}}^2
    \leq
    K \log \left( 1 + \frac{TL_\infty}{4\alpha} \right).
\end{gathered}
\end{equation}        

    Putting them all together, we have:
\begin{equation}
\begin{split}
    &\normalfont\text{Regret}^{(T)}\left(\boldsymbol{p}^{\star}\right)
    = 
    \sum_{t=1}^T \ell^{(t)}\left(\boldsymbol{p}^{(t)}\right) 
    - 
    \sum_{t=1}^T \ell^{(t)}\left(\boldsymbol{p}^{\star}\right) \\
    &\leq
    \sum_{t=1}^T \tilde\ell^{(t)}\left(\boldsymbol{p}^{(t)}\right) 
    - 
    \sum_{t=1}^T \tilde\ell^{(t)}\left(\boldsymbol{p}^{\star}\right) 
    -
    \frac{1}{2} \sum_{t=1}^{T} 
    \left\Vert \boldsymbol{p}^{(t)} - \boldsymbol{p}^\star \right\Vert^2_{\boldsymbol{A}^{(t)}} \\
    &\leq
    \frac{\alpha}{2} \Vert \boldsymbol{p} \Vert_2^2 
    +
    \frac{1}{2\beta} 
    \sum_{t=1}^T \left\Vert \boldsymbol{g}^{(t)} \right\Vert_{{\boldsymbol{B}^{(t)}}^{-1}}^2 \\
    &
    \leq
    \frac{\alpha}{2} + \frac{K}{2\beta} \log \left( 1 + \frac{TL_\infty}{4\alpha} \right)\\
    &=
    \frac{\alpha}{2} + 2 L_\infty K \log \left( 1 + \frac{TL_\infty}{4\alpha} \right).
\end{split}
\end{equation}     
    If we further set $\alpha = 4 L_\infty K$,
    we finally have    
\begin{equation}
\begin{gathered}
    \normalfont\text{Regret}^{(T)}\left(\boldsymbol{p}^{\star}\right)
    \leq
    2 L_\infty K \left( 
    1 + \log \left( 1 + \frac{T}{16K} \right) 
    \right).
\end{gathered}
\end{equation}
\end{proof}

\subsection{Regret Bound of \texttt{AAggFF-D} with Full Client Participation: Proof of Theorem~\ref{thm:crossdevice_full}}
\label{sec:device_proof}
\begin{proof}
    Again, since Lemma~\ref{lemma:regret_bound_default_form} holds for arbitrary loss function,
    let us set ${L}^{(t)}\left(\boldsymbol{p}\right)\triangleq\sum_{\tau=1}^{t-1} \tilde\ell^{(\tau)} (\boldsymbol{p}) + \eta^{(t+1)}\sum_{i=1}^K p_i \log p_i$ with a slight abuse of notation.
    Note that we set $R^{(t)}\left(\boldsymbol{p}\right) = \eta^{(t)}\sum_{i=1}^K p_i \log p_i$ is a negative entropy regularizer with non-decreasing time-varying step size $\eta^{(t)}$,
    thus ${L}^{(t)}\left(\boldsymbol{p}\right)$ is $\eta^{(t)}$-strongly convex w.r.t. $\Vert\cdot\Vert_1$. (Proposition 5.1 from \cite{bregmanmd})
    Then, we have an upper bound of the regret of \texttt{AAggFF-D} (with full-client participation setting) as follows.
{\allowdisplaybreaks
\begin{align}
    &\normalfont\text{Regret}^{(T)}\left(\boldsymbol{p}^{\star}\right) \nonumber \\
    &= 
    \sum_{t=1}^T \ell^{(t)}\left(\boldsymbol{p}^{(t)}\right) 
    - 
    \sum_{t=1}^T \ell^{(t)}\left(\boldsymbol{p}^{\star}\right) \nonumber \\
    &\leq
    \sum_{t=1}^T \tilde\ell^{(t)}\left(\boldsymbol{p}^{(t)}\right) 
    - 
    \sum_{t=1}^T \tilde\ell^{(t)}\left(\boldsymbol{p}^{\star}\right) \nonumber \\
    &=
    \underbrace{
    R^{(T+1)}\left(\boldsymbol{p}^{\star}\right)
    -R^{(1)}\left(\boldsymbol{p}^{(1)}\right)}_\text{(i)}
    +
    \underbrace{
    {L}^{(T+1)}\left(\boldsymbol{p}^{(T+1)}\right)
    -{L}^{(T+1)}\left({\boldsymbol{p}^{\star}}\right)}_\text{(ii)} \\
    &+
    \underbrace{
    \sum_{t=1}^T
    \left[
    {L}^{(t)}\left(\boldsymbol{p}^{(t)}\right)
    -
    {L}^{(t+1)}\left(\boldsymbol{p}^{(t+1)}\right)
    +
    \tilde\ell^{(t)}\left(\boldsymbol{p}^{(t)}\right) \right]}_\text{(iii)} \nonumber,
\end{align}
}
    where the inequality is due to Corollary~\ref{corollary:lin_loss}.

    For (i), recall from Lemma~\ref{lemma:regret_bound_default_form} that the regret does not depend on the regularizer, 
    we can bound it after changing from $R^{(T+1)}\left(\boldsymbol{p}^{\star}\right)$ 
    to $R^{(T)}\left(\boldsymbol{p}^{\star}\right)$.
\begin{equation}
\begin{gathered}
    R^{(T)}\left(\boldsymbol{p}^{\star}\right)
    -
    R^{(1)}\left(\boldsymbol{p}\right)
    \leq
    \eta^{(T)}
    \sum_{i=1}^K
    {p_i^\star \log p_i^\star}
    +
    \eta^{(1)}\log{K}
    \leq
    \eta^{(T)}
    \sum_{i=1}^K
    {p_i^\star \log p_i^\star}
    +
    \eta^{(T)}\log{K}
    \leq
    \eta^{(T)}\log{K}.
\end{gathered}
\end{equation}

    For (ii), we use the assumption in Lemma~\ref{lemma:one_step_bound}, 
    where $\boldsymbol{p}^{(t)} = \argmin_{\boldsymbol{p}\in\Delta_{K-1}} L^{(t)}\left(\boldsymbol{p}\right)$.
    From the assumption, since $\boldsymbol{p}^{(T+1)}$ is the minimizer of $L^{(T+1)}$, (ii) becomes negative.
    Thus, we can exclude it from the upper bound.

    For (iii), we directly use the result of Lemma~\ref{lemma:one_step_bound} as follows.
\begin{equation}
\begin{gathered}
    \sum_{t=1}^T
    \left[
    {L}^{(t)}\left(\boldsymbol{p}^{(t)}\right)
    -
    {L}^{(t+1)}\left(\boldsymbol{p}^{(t+1)}\right)
    +
    \tilde\ell^{(t)}\left(\boldsymbol{p}^{(t)}\right) \right]
    \leq
    \sum_{t=1}^T
    \frac{ \left\Vert \boldsymbol{g}^{(t)} \right\Vert_\infty^2 }
    { 2\eta^{(t)} }
    \leq
    \sum_{t=1}^T
    \frac{ L_\infty^2 }
    { 2\eta^{(t)} },
\end{gathered}
\end{equation}
    where the additional terms are removed due to the non-decreasing property of regularizer thanks to the assumption of $\eta^{(t)}$, and the last inequality is due to Lemma~\ref{lemma:lipschitz}.
    Note that $\boldsymbol{g}^{(t)}=\nabla \ell^{(t)}(\boldsymbol{p}^{(t)})$.

    Combining all, we have regret upper bound as follows.
\begin{equation}
\begin{gathered}
    \normalfont\text{Regret}^{(T)}\left(\boldsymbol{p}^{\star}\right)
    = 
    \sum_{t=1}^T \ell^{(t)}\left(\boldsymbol{p}^{(t)}\right) 
    - 
    \sum_{t=1}^T \ell^{(t)}\left(\boldsymbol{p}^{\star}\right)
    \leq
    \sum_{t=1}^T \tilde\ell^{(t)}\left(\boldsymbol{p}^{(t)}\right) 
    - 
    \sum_{t=1}^T \tilde\ell^{(t)}\left(\boldsymbol{p}^{\star}\right)
    \leq
    \eta^{(T)}\log{K}
    +
    \sum_{t=1}^T
    \frac{ L_\infty^2 }
    { 2\eta^{(t)} }.
\end{gathered}
\end{equation}
    Finally, by setting $\eta^{(t)}=\frac{L_\infty\sqrt{t}}{\sqrt{\log{K}}}$, we have
\begin{equation}
\begin{gathered}
    \leq
    L_\infty\sqrt{T\log{K}}+\frac{L_\infty\sqrt{\log{K}}}{2}\sum_{t=1}^T\frac{1}{\sqrt{t}}
    \leq
    2L_\infty\sqrt{T\log{K}},
\end{gathered}
\end{equation}
    where the inequality is due to $\sum_{t=1}^{T} \frac{1}{\sqrt{t}} \leq 
    \int_{0}^{T} \frac{\mathrm{d}x}{\sqrt{x}} = 2\sqrt{T}$.
    See also equation (7.3) of \cite{oco3}.
\end{proof}

\subsection{Regret Bound of \texttt{AAggFF-D} with Partial Client Participation: Proof of Corollary~\ref{cor:crossdevice_partial}}
\begin{proof}
    Denote $\breve{\ell}^{(t)}$ as a linearized loss constructed from $\breve{\boldsymbol{r}}^{(t)}$ and $\breve{\boldsymbol{g}}^{(t)}$.
    i.e., 
\begin{equation}
\begin{gathered}
    \breve{\ell}^{(t)}(\boldsymbol{p}) = \left\langle \boldsymbol{p}, \breve{\boldsymbol{g}}^{(t)} \right\rangle
    = 
    \left\langle
    \boldsymbol{p},
    \frac{ \breve{\boldsymbol{r}}^{(t)} }
    {1 +\left\langle \boldsymbol{p}^{(t)}, \bar{\boldsymbol{r}}\boldsymbol{1}_K \right\rangle}
    +
    \frac{\bar{\boldsymbol{r}}\boldsymbol{1}_K {\boldsymbol{p}^{(t)}}^\top (\breve{\boldsymbol{r}}^{(t)} - \bar{\boldsymbol{r}}\boldsymbol{1}_K)}
    {(1 +\left\langle \boldsymbol{p}^{(t)}, \bar{\boldsymbol{r}}\boldsymbol{1}_K \right\rangle)^2}
    \right\rangle
\end{gathered}
\end{equation}
    The expected regret is
{\allowdisplaybreaks
\begin{align}
    &\mathbb{E}\left[ \normalfont\text{Regret}^{(T)}\left(\boldsymbol{p}^{\star}\right) \right]
    = 
    \mathbb{E}\left[ 
    \sum_{t=1}^T 
    \left( \ell^{(t)}\left(\boldsymbol{p}^{(t)}\right) 
    - 
    \ell^{(t)}\left(\boldsymbol{p}^{\star}\right) \right) 
    \right] \nonumber \\
    &\leq
    \mathbb{E}\left[ 
    \sum_{t=1}^T 
    \left( \breve\ell^{(t)}\left(\boldsymbol{p}^{(t)}\right) 
    - 
    \breve\ell^{(t)}\left(\boldsymbol{p}^{\star}\right) \right) 
    \right]
    =
    \mathbb{E}\left[ 
    \sum_{t=1}^T 
    \left\langle \breve{\boldsymbol{g}}^{(t)},
    \boldsymbol{p}^{(t)}
    - 
    \boldsymbol{p}^{\star} \right\rangle 
    \right] \nonumber \\
    &=
    \mathbb{E}\left[ 
    \sum_{t=1}^T 
    \mathbb{E}_{i\in S^{(t)}}\left[ 
    \left\langle \breve{\boldsymbol{g}}^{(t)},
    \boldsymbol{p}^{(t)}
    - 
    \boldsymbol{p}^{\star} \right\rangle 
    \right]
    \right] (\because \text{Law of Total Expectation}) \\
    &\approx
    \mathbb{E}\left[ 
    \sum_{t=1}^T  
    \left\langle {\boldsymbol{g}}^{(t)},
    \boldsymbol{p}^{(t)}
    - 
    \boldsymbol{p}^{\star} \right\rangle 
    \right] (\because \text{Lemma~\ref{lemma:linearized_grad}}) \nonumber \\
    &=
    \sum_{t=1}^T 
    \left( \tilde\ell^{(t)}\left(\boldsymbol{p}^{(t)}\right) 
    - 
    \tilde\ell^{(t)}\left(\boldsymbol{p}^{\star}\right) \right) \nonumber
    \leq
    \mathcal{O}\left(L_\infty K \sqrt{T\log{K}} \right)
\end{align}
}
\end{proof}
\vspace{2pt}

\begin{remark}
\label{remark:inflated_lip_const}
    Even though we can enjoy the same regret upper bound \textit{in expectation} from Corollary~\ref{cor:crossdevice_partial},
    it should be noted that the raw regret (i.e., regret without expectation) from $\breve{\boldsymbol{g}}^{(t)}$ may inflate the regret upper bound from
    $\mathcal{O}\left(L_\infty K \sqrt{T\log{K}} \right)$ to $\mathcal{O}\left(\breve{L}_\infty K \sqrt{T\log{K}} \right)$,
    where $\breve{L}_\infty$ is a Lipschitz constant from Lemma~\ref{lemma:lipschitz_lin_grad}, upper bounding $\left\Vert \breve{\boldsymbol{g}}^{(t)} \right\Vert_\infty \leq \breve{L}_\infty$.
    It is because $\breve{L}_\infty$ is dominated by ${1}/{C}\approx\mathcal{O}\left(K\right)$, 
    which can be a \textit{huge number} when if $C$ is a tiny constant.
    Although this inflation hinders proper update of \texttt{AAggFF-D} empirically,
    this can be easily eliminated in \texttt{AAggFF-D} through an appropriate choice of a range ($C_1$ and $C_2$) of the response vector,
    which ensures practicality of \texttt{AAggFF-D}. See Appendix~\ref{app:range} for a detail.
\end{remark}

\subsection{Derivation of Closed-From Update of \texttt{AAggFF-D}}
\label{app:deriv_closed}
The objective of \texttt{AAggFF-D} in (\ref{eq:lin_ftrl}) can be written in the following form.
\begin{equation}
\begin{split}
    &\min_{\boldsymbol{p}\in\Delta_{K-1}} \sum_{\tau=1}^{t} \tilde\ell^{(\tau)}(\boldsymbol{p}) + \eta^{(t+1)}\sum_{i=1}^K p_i \log p_i
    =
    \min_{\boldsymbol{p}\in\Delta_{K-1}} \left\langle \boldsymbol{p}, \sum_{\tau=1}^{t-1} \breve{\boldsymbol{g}}^{(\tau)} \right\rangle + \eta^{(t+1)}\sum_{i=1}^K p_i \log p_i \\
    &=
    \min_{\boldsymbol{p}\in\Delta_{K-1}} \left\langle \sum_{\tau=1}^{t} \breve{\boldsymbol{g}}^{(\tau)}, \boldsymbol{p} \right\rangle + R^{(t+1)}(\boldsymbol{p})
    =
    \max_{\boldsymbol{p}\in\Delta_{K-1}} \left\langle -\sum_{\tau=1}^{t} \breve{\boldsymbol{g}}^{(\tau)}, \boldsymbol{p} \right\rangle - R^{(t+1)}(\boldsymbol{p}).
\end{split}
\end{equation}
It exactly corresponds to the form of the Fenchel conjugate $R^{(t+1)}_*$, which is defined as follows.
\begin{equation}
\begin{gathered}
\label{eq:fenchel_conjugate}
    R^{(t+1)}_*(\boldsymbol{p})
    =
    \max_{\boldsymbol{p}\in\Delta_{K-1}} \left\langle -\sum_{\tau=1}^{t} \breve{\boldsymbol{g}}^{(\tau)}, \boldsymbol{p} \right\rangle - R^{(t+1)}(\boldsymbol{p}).
\end{gathered}
\end{equation}
Thus, we can enjoy the property of Fenchel conjugate, which is
\begin{equation}
\begin{gathered}
    \boldsymbol{p}^{(t+1)} = \nabla R^{(t+1)}_* \left(-\sum_{\tau=1}^{t} \breve{\boldsymbol{g}}^{(\tau)}\right)
\end{gathered}
\end{equation}
Since we can derive the log-sum-exp form by solving (\ref{eq:fenchel_conjugate}) as follows,
\begin{equation}
\begin{gathered}
    R^{(t+1)}_*(\boldsymbol{u}) = \log{\left( \sum_{i=1}^K \exp\left( u_i \right) \right)},
\end{gathered}
\end{equation}
we have the closed-form solution for the new decision update.
\begin{equation}
\begin{gathered}
    {p}^{(t+1)}_i = \frac
    {\exp \left( -\sum_{\tau=1}^{t} \breve{g}^{(\tau)}_i / \eta^{(t+1)} \right)}
    {\sum_{j=1}^K \exp \left( -\sum_{\tau=1}^{t} \breve{g}^{(\tau)}_j / \eta^{(t+1)} \right)}.
\end{gathered}
\end{equation}
Note that $\eta^{(t+1)}$ is already determined in Theorem~\ref{thm:crossdevice_full} and Corollary~\ref{cor:crossdevice_partial} as $\frac{\breve{L}_\infty \sqrt{t + 1}}{\sqrt{\log{K}}}$,
with the reflection of modified Lipschitz constant from $L_\infty$ to $\breve{L}_\infty$ (see Remark~\ref{remark:inflated_lip_const}).
See also \cite{eg} and Chapter 6.6 of \cite{oco3}.

\newpage
\section{Detailed Designs of \texttt{AAggFF}}
\label{app:design}

\subsection{Cumulative Distribution Function for Response Transformation}
\label{app:cdfs}
\paragraph{Choice of Distributions}
We used the CDF to transform unbounded responses from clients (i.e., local losses of clients) into bounded values.
Among diverse options, we used one of the following 6 CDFs in this work.
(Note that  $\operatorname{erf}(x)=\frac{2}{\sqrt{\pi}}\int_{0}^x e^{-y^2} \mathrm{d}y$ is the Gauss error function)

\begin{enumerate}
    \item Weibull \cite{weibull}: $\texttt{CDF}(x)=1 - e^{-(x/\alpha)^\beta}$; we set $\alpha=1$ (scale) and $\beta=2$ (shape).
    \item Frechet \cite{frechet}: $\texttt{CDF}(x)=e^{-(\frac{x}{\alpha})^{-\beta}}$; we set $\alpha=1$ (scale) and $\beta=1$ (shape).
    \item Gumbel \cite{gumbel}: $\texttt{CDF}(x)=e^{-e^{-(x - \alpha) / \beta}}$; we set $\alpha=1$ (scale) and $\beta=1$ (shape).
    \item Exponential: $\texttt{CDF}(x)=1 - e^{-\alpha x}$; we set $\alpha=1$ (scale).
    \item Logistic: $\texttt{CDF}(x)=\frac{1}{1+e^{(-(x-\alpha)/\beta)}}$; we set $\alpha=1$ (scale) and $\beta=1$ (shape).
    \item Normal \cite{gaussian}: $\texttt{CDF}(x)=\frac{1}{2}\left[1+\operatorname{erf}\left(\frac{x - \alpha}{\beta\sqrt{2}}\right)\right]$; we set $\alpha=1$ (scale) and $\beta=1$ (shape).
\end{enumerate}

Commonly, the scale parameter of all distributions is set to 1, since in (\ref{eq:resp_vec}) we centered inputs to 1 in expectation.
Although we fixed the parameters of each CDF, they can be statistically estimated in practice, such as using maximum spacing estimation \cite{mse}. 

For imposing larger mixing coefficients for larger losses, the transformation should (i) preserve the relative difference between responses, as well as (ii) not too sensitive for outliers.
While other heuristics (e.g., clipping values, subtracting from arbitrary large constant \cite{propfair}) for the transformation are also viable options for (i),
additional efforts are still required to address (ii).

On the contrary, CDFs can address both conditions with ease.
As CDFs are increasing functions, (i) can be easily satisfied.
For (ii), it can be intrinsically addressed by the nature of CDF itself.
Let us start with a simple example.

Suppose we have $K=3$ local losses: $F_1(\boldsymbol{\theta}) = 0.01, F_2(\boldsymbol{\theta}) = 0.10, F_3(\boldsymbol{\theta}) = 0.02$.
Since the average is $\bar{\mathrm{F}}=\frac{0.01+0.10+0.02}{3}\approx0.043$, we have inputs of CDF as follows: $F_1(\boldsymbol{\theta})/\bar{\mathrm{F}} = 0.23, F_2(\boldsymbol{\theta})/\bar{\mathrm{F}} = 2.31, F_3(\boldsymbol{\theta})/\bar{\mathrm{F}} = 0.46$.
These centered inputs are finally transformed into bounded values as in Table~\ref{tab:cdf_example}.

\begin{table}[h]
\footnotesize
\centering
\caption{Example: Effects of CDF Transformations}
\label{tab:cdf_example}
\begin{tabular}{@{}lc@{}}

\toprule                                                                                           & Transformed Responses \\ \midrule
\begin{tabular}[c]{@{}l@{}}Weibull CDF \\ $\texttt{CDF}(x)=1-e^{(-x^2)}$\end{tabular}              & 0.05 / \underline{1.00} / 0.19 \\
\begin{tabular}[c]{@{}l@{}}Frechet CDF \\ $\texttt{CDF}(x)=e^{(-1/x)}$\end{tabular}                & 0.01 / \underline{0.65} / 0.11 \\
\begin{tabular}[c]{@{}l@{}}Gumbel CDF \\ $\texttt{CDF}(x)=e^{\left(-e^{(-(x - 1))}\right)}$\end{tabular}      & 0.12 / \underline{0.76} / 0.18 \\
\begin{tabular}[c]{@{}l@{}}Exponential CDF \\ $\texttt{CDF}(x)=1 - e^{(-x)}$\end{tabular}          & 0.21 / \underline{0.90} / 0.37 \\
\begin{tabular}[c]{@{}l@{}}Logistic CDF \\ $\texttt{CDF}(x)=\frac{1}{1+e^{(-(x-1))}}$\end{tabular} & 0.32 / \underline{0.79} / 0.37 \\
\begin{tabular}[c]{@{}l@{}}Normal CDF \\ $\texttt{CDF}(x)=\frac{1}{2}\left[1+\operatorname{erf}\left(\frac{x - 1}{\sqrt{2}}\right)\right]$\end{tabular} &
  0.22 / \underline{0.90} / 0.29 \\ \bottomrule
\end{tabular}
\end{table}

While all losses become bounded values in $[0,1]$, the maximum local loss (i.e., $F_2(\boldsymbol{\theta}) = 0.10$) is transformed into different values by each CDF (see underlined figures in the `Transformed Responses' column of Table~\ref{tab:cdf_example}).
When using the Weibull CDF, the maximum local loss is translated into $1.00$, which means that there may be no value greater than 0.10 (i.e., $0.10$ is the largest one in $100\%$ probability) given current local losses.
Meanwhile, when using the Frechet CDF, the maximum local loss is translated into $0.65$, which means that there still is a 35\% chance that some other local losses are greater than $0.10$ when provided with other losses similar to 0.01 and 0.02.
This implies that each CDF \textit{treats a maximum value differently}.
When a transformation is easily inclined to the maximum value, thereby returning 1 (i.e., maximum of CDF), it may yield a degenerate decision, e.g., $\boldsymbol{p}\approx[0, 1, 0]^\top$. 

Fortunately, most of the listed CDFs are designed for \textit{modeling maximum values}.
For example, the three distributions, Gumbel, Frechet, and Weibull, are grouped as the Extreme Value Distribution (EVD) \cite{evd}.
As its name suggests, it models the behavior of extreme events, and it is well known that any density modeling a minimum or a maximum of independent and identically distributed (IID) samples follows the shape of one of these three distributions (by the Extreme Value Theorem \cite{evt}).
In other words, EVDs can reasonably measure \textit{how a certain value is close to a maximum}.
Thus, they can estimate whether a certain value is relatively large or small. 
Otherwise, the Exponential distribution is a special case of Weibull distribution, and the logistic distribution is also related to the Gumbel distribution.
Last but not least, although it is not a family of EVD, the Normal distribution is also considered due to the central limit theorem, since the local loss is the sum of errors from IID local samples.
We expect the CDF transformation can appropriately measure a relative magnitude of local losses,
and it should be helpful for decision making.

\paragraph{Effects of Response Transformation}
\label{subsec:exp_resp_transformation}
We also illustrated that the response should be bounded (i.e., Lipschitz continuous) in section \ref{sec:resp_trans}, to have non-vacuous regret upper bound.
To acquire bounded response, we compare the cumulative values of a global objective in (\ref{eq:fl_obj}), 
i.e., $\sum_{t=1}^T \sum_{i=1}^K p^{(t)}_i F_i (\boldsymbol{\theta}^{(t)})$ for the cross-silo setting, 
and $\sum_{t=1}^T \sum_{i\in S^{(t)}} p^{(t)}_i F_i (\boldsymbol{\theta}^{(t)})$ for the cross-device setting.

\begin{figure}[h]
    \centering
    \includegraphics[scale=0.6]{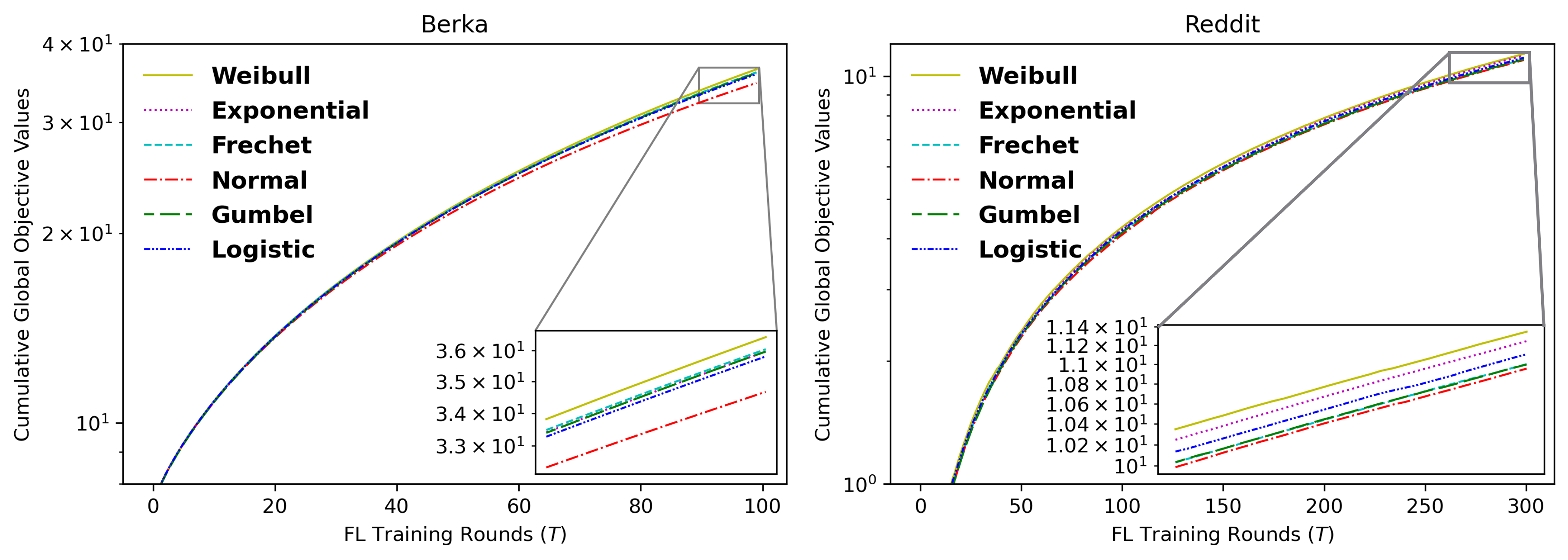}
    \caption{Cumulative values of a global objective according to different CDFs (smaller is better). 
    (Left) Berka dataset (cross-silo setting; $K=7, T=100$). 
    (Right) Reddit dataset (cross-device setting; $K=817, T=300, C=0.00612$)}
    \label{fig:enter-label}
\end{figure}

In the cross-silo experiment with the Berka dataset, the Normal CDF shows the smallest cumulative values, 
while in the cross-device experiment with the Reddit dataset, the Weibull CDF yields the smallest value.
For the Berka dataset, the Normal CDF yields an average performance (AUROC) of 79.37 with the worst performance of 43.75, 
but the Weibull CDF shows an average performance of 73.02 with the worst performance of 25.00.
The same tendency is also observed in the Reddit dataset. 
The Normal CDF presents an average performance (Acc. 1) of 14.05 and the worst performance of 4.26, 
while the Weibull CDF shows an average of 12.62 with the worst of 3.35.
From these observations, we can conclude that an appropriate choice of CDF is necessary for better sequential decision making, and suitable transformation helps minimize a global objective of FL. 
Note also that these behaviors are also directly related to the global convergence of the algorithm w.r.t. $\boldsymbol{\theta}$.

\subsection{Choice of a Response Range}
\label{app:range}

In regard to determining the range of a response vector, i.e., $[C_1, C_2]$, 
we can refer to the Lipschitz continuity in Lemma~\ref{lemma:lipschitz} and Lemma~\ref{lemma:lipschitz_lin_grad}.
For the cross-silo setting, we can set arbitrary constants so that $L_\infty=\mathcal{O}(1)$ according to Lemma~\ref{lemma:lipschitz}.
Thus, for all experiments of \texttt{AAggFF-S}, we set $C_1=0, C_2=\frac{1}{K}$.

For the cross-device setting, the Lipschitz constant is changed into $\breve{L}_\infty$, since it is influenced by the client sampling probability ($C$).
In detail, $C$ is located in the denominator of the Lipschitz constant, $\breve{L}_\infty$, which inflates the resulting gradient value as $C$ is a constant close to 0. 
(e.g., when 10 among 100,000 clients are participating in each round, ${1}/{C}=10^{4}$)
This is problematic and even causes an overflow problem empirically in updating a new decision.
Thus, we propose a simple remedy --- setting $C_1$ and $C_2$ to be \textit{a multiple of $C$}, 
so that the $C$ in the denominator is to be canceled out, according to Lemma~\ref{lemma:lipschitz_lin_grad}.
For instance, when $C_1=0, C_2=C$, resulting Lipschitz constant simply becomes $\Vert \breve{\boldsymbol{g}}^{(t)} \Vert_\infty \leq \breve{L}_\infty = \frac{C}{1+0} + \frac{2(C-0)}{C(1+0)}=C+2\approx2$,
which is a constant far smaller than $T$ and $K$, typically assumed in the practical cross-device FL setting.
Therefore, for all experiments of \texttt{AAggFF-D}, we set $C_1=0, C_2=C$.

\newpage
\section{Experimental Details}
\label{app:exp_details}
\begin{table}[ht]
\centering
\caption{Statistics of Federated Benchmarks}
\label{tab:dataset_stat}
\begin{tabular}{@{}ccccc@{}}
\toprule
\textbf{Dataset} & \textbf{Clients} & \textbf{Samples} & \textbf{Avg.} & \textbf{Std.} \\ \midrule
Berka          & 7     & 621     & 88.71   & 24.78   \\
MQP            & 11    & 3,048   & 277.09  & 63.25   \\
ISIC       & 6     & 21,310  & 3551.67 & 3976.16 \\
CelebA         & 9,343 & 200,288 & 21.44   & 7.63    \\
Reddit         & 817   & 97,961  & 119.90  & 229.85  \\
SpeechCommands & 2,005 & 84,700  & 42.24   & 36.69   \\ \bottomrule
\end{tabular}
\end{table}

\subsection{Datasets}
For the main experiments in Table~\ref{tab:result_silo} and Table~\ref{tab:result_device},
we used 3 datasets for the cross-silo setting (Berka \cite{berka}, MQP \cite{mqp}, and ISIC \cite{isic, flamby}), 
and 3 other datasets for the cross-device setting (CelebA, Reddit \cite{leaf}, and SpeechCommands \cite{speechcommands}).
In this section, we describe details of each dataset about its task, metrics, and client partitioning.
The statistics of all federated benchmarks are summarized in Table~\ref{tab:dataset_stat}.
Note that \textbf{Avg.} and \textbf{Std.} in the table refer to the average and a standard deviation of a sample size of each client in the federated system.

First, we present details of the federated benchmark for the cross-silo setting.
\begin{enumerate}[label=$\bullet$]
    \item \textbf{Berka} is a tabular dataset containing bank transaction records collected from Czech bank \cite{berka}.
    Berka accompanies the loan default prediction task (i.e., binary classification) of each bank's customers.
    It is fully anonymized and is originally composed of 8 relational tables: accounts, clients, disposition, loans, permanent orders, transactions, demographics, and credit cards.
    We merged all 8 tables into one dataset by joining the primary keys of each table, and finally have 15 input features.
    From the demographics table, we obtain information on the region: Prague, Central Bohemia, South Bohemia, West Bohemia, North Bohemia, East Bohemia, South Moravia, and North Moravia.
    We split each client according to the region and excluded all samples of North Bohemia since it has only one record of loan default, thus we finally have 7 clients (i.e., banks).
    Finally, we used the area under the receiver operating characteristic (ROC) curve for the evaluation metric.
    
    \item \textbf{MQP} is a clinical question pair dataset crawled from medical question answering dataset \cite{healthtap}, 
    and labeled by 11 doctors \cite{mqp}.
    All paired sentences are labeled as either similar or dissimilar, thereby it is suitable for the binary classification task.
    As a pre-processing, we merge two paired sentences into one sentence by adding special tokens: \texttt{[SEP]}, \texttt{[PAD]}, and \texttt{[UNK]}.
    We set the maximum token length to 200, thus merged sentences less than 200 are filled with \texttt{[PAD]} tokens, and otherwise are truncated. Then, merged sentences are tokenized using pre-trained DistilBERT tokenizer \cite{distilbert}.
    We regard each doctor as a separate client and thus have 11 clients.
    Finally, we used the area under the ROC curve for the evaluation metric.
    
    \item \textbf{ISIC} is a dermoscopic image dataset for a skin cancer classification, collected from 4 hospitals. \cite{isic, flamby}
    The task contains 8 distinct melanoma classes, thus designed for the multi-class classification task.
    Following \cite{flamby}, as one hospital has three different imaging apparatus, its samples are further divided into 3 clients, thus we finally have 6 clients in total.   
    Finally, we used top-5 accuracy for the evaluation metric.
\end{enumerate}

Next, we illustrate details of the federated benchmark for the cross-device setting.
\begin{enumerate}[label=$\bullet$]
    \item \textbf{CelebA} is a vision dataset containing the facial images of celebrities \cite{celeba}.
    It is curated for federated setting in LEAF benchmark \cite{leaf}, and is targeted for the binary classification task (presence of smile).
    We follow the processing of \cite{leaf}, thereby each client corresponds to each celebrity, having 9,343 total clients in the federated system.
    Finally, we used top-1 accuracy for the evaluation metric following \cite{leaf}.
    
    \item \textbf{Reddit} is a text dataset containing the comments of community users of Reddit in December 2017, and a part of LEAF benchmark \cite{leaf}. 
    Following \cite{leaf}, we build a dictionary of vocabularies of size 10,000 from tokenized sentences and set the maximum sequence length to 10.
    The main task is tailored for language modeling, i.e., next token prediction, given word embeddings in each sentence of clients.
    Each client corresponds to one of the community users, thus 817 clients are presented in total.
    Finally, we used the top-1 accuracy for the evaluation metric following \cite{leaf}.
    
    \item \textbf{SpeechCommands} is designed for a short-length speech recognition task that includes one second 35 short-length words,
    such as \enquote{Up}, \enquote{Down}, \enquote{Left}, and \enquote{Right} \cite{speechcommands}. 
    It is accordingly a multi-class classification task for 35 different classes.
    While it is collected from 2,618 speakers, we rule out all samples of speakers having too few samples when splitting the dataset into training and test sets.
    (e.g., exclude speakers whose total sample counts are less than 3)
    As a result, we have 2,005 clients, and each client has 16,000-length time-domain waveform samples.
    Finally, we used top-5 accuracy for the evaluation metric.
\end{enumerate}

\subsection{Models}
For each dataset, we used task-specific model architectures which are already used in previous works, or widely used in reality, to simulate the practical FL scenario as much as possible.
For the experiment of the cross-silo setting, we used the following models.
\begin{enumerate}[label=$\bullet$]
    \item \textit{Logistic Regression} is used for the Berka dataset. 
    We used a simple logistic regression model with a bias term, and the output (i.e., logit vector) is transformed into predicted class probabilities by the softmax function.

    \item \textit{DistilBERT \cite{distilbert}} is used for the MQP dataset.
    We used a pre-trained DistilBERT model, from BookCorpus and English Wikipedia \cite{distilbert}. 
    We also used the corresponding DistilBERT tokenizer for the pre-processing of raw clinical sentences.
    For a fine-tuning of the pre-trained DistilBERT model, we attach a classifier head next to the last layer of the DistilBERT's encoder, which outputs an embedding of $768$ dimension.
    The classifier is in detail processing the embedding as follows: ($768$-ReLU-Dropout-$2$),
    where each figure is an output dimension of a fully connected layer with a bias term, 
    ReLU is a rectified linear unit activation layer, 
    and Dropout \cite{dropout} is a dropout layer having probability of $0.1$.
    In the experiment, we trained all layers including pre-trained weights.

    \item \textit{EfficientNet \cite{efficientnet}} is used for the ISIC dataset.
    We also used the pre-trained EfficientNet-B0 model from ImageNet benchmark dataset \cite{imagenet}.
    For fine-tuning, we attach a classifier head after the convolution layers of EfficientNet.
    The classifier is composed of the following components: (AdaptiveAvgPool($7, 7$)-Dropout-$8$),
    where AdaptiveAvgPool($cdot, cdot$) is a 2D adaptive average pooling layer outputs a feature map of size $7 \times 7$ (which are flattened thereafter), 
    Dropout is a dropout layer with a probability of 0.1, 
    and the last linear layer outputs an 8-dimensional vector,
    which is the total number of classes.
\end{enumerate}

Next, for the cross-device setting, we used the following models.
\begin{enumerate}[label=$\bullet$]
    \item \textit{ConvNet} model used in LEAF benchmark \cite{leaf} is used for the CelebA dataset.
    It is composed of four convolution layers, of which components are:
    2D convolution layer without bias term with 32 filters of size $3\times 3$ (stride=1, padding=1),
    group normalization layer (the number of groups is decreased from 32 by a factor of 2: 32, 16, 8, 4),
    2D max pooling layer with $2 \times 2$ filters,
    and a ReLU nonlinear activation layer.
    Plus, a classifier comes after the consecutive convolution layers, which are composed of:
    (AdaptiveAvgPool($5, 5$)-$1$),
    which are a 2D adaptive average pooling layer that outputs a feature map of size $5 \times 5$ (which are flattened thereafter), 
    and a linear layer with a bias term outputs a scalar value since it is a binary classification task.

    \item \textit{StackedLSTM} model used in LEAF benchmark \cite{leaf} is used for the Reddit dataset.
    It is composed of an embedding layer of which the number of embeddings is 200, and outputs an embedding vector of 256 dimensions.
    It is processed by consecutive 2 LSTM \cite{lstm} layers with the hidden size of 256, and enters the last linear layer with a bias term, which outputs a logit vector of 10,000 dimensions, which corresponds to the vocabulary size.

    \item \textit{M5 \cite{m5}} model is used for the SpeechCommands dataset.
    It is composed of four 1D convolution layers followed by a 1D batch normalization layer, ReLU nonlinear activation, and a 1D max pooling layer with a filter of size 4.
    All convolution layers EXCEPT the input layer have a filter of size $3$, and the numbers of filters are 64, 128, and 256 (all with stride=1 and padding=1).
    The input convolution layer has 64 filters with a filter of size $80$, and stride of $4$.
    Lastly, one linear layer outputs a logit vector of 35 dimensions. 
\end{enumerate}

\subsection{Hyperparameters}
Before the main experiment, we first tuned the hyperparameter of all baseline fair FL algorithms from a separate random seed.
The hyperparameter of each fair algorithm is listed as follows.
\begin{enumerate}[label=$\bullet$]
    \item \texttt{AFL} \cite{afl} --- a step size of a mixing coefficient $\in$\{0.01, 0.1, 1.0\}
    \item \texttt{q-FedAvg} \cite{qffl} --- a magnitude of fairness, $\in$\{0.1, 1.0, 5.0\}
    \item \texttt{TERM} \cite{term} --- a tilting constant, $\lambda$, $\in$\{0.1, 1.0, 10.0\}
    \item \texttt{FedMGDA} \cite{fedmgda} --- a deviation from static mixing coefficient $\in$\{0.1, 0.5, 1.0\}
    \item \texttt{PropFair} \cite{propfair} --- a baseline constant $\in$\{2, 3, 5\}
\end{enumerate}
Each candidate value is selected according to the original paper, and we fix the number of local epochs, $E=1$  (following the set up in \cite{qffl}), along with the number of local batch size $B=20$ in all experiments.
For each dataset, a weight decay (L2 penalty) factor ($\psi$), a local learning rate ($\zeta$), and variables related to a learning rate scheduling (i.e., learning rate decay factor ($\phi$), and a decay step ($s$)) are tuned first with \texttt{FedAvg} \cite{fedavg} as follows.
\begin{enumerate}[label=$\bullet$]
    \item \textbf{Berka}: $\psi=10^{-3}, \zeta=10^0, \phi=0.99, s=10$
    \item \textbf{MQP}: $\psi=10^{-2}, \zeta=10^{-\frac{5}{2}}, \phi=0.99, s=15$
    \item \textbf{ISIC}: $\psi=10^{-2}, \zeta=10^{-4}, \phi=0.95, s=5$
    \item \textbf{CelebA}: $\psi=10^{-4}, \zeta=10^{-1}, \phi=0.96, s=300$
    \item \textbf{Reddit}: $\psi=10^{-6}, \zeta=10^{\frac{7}{8}}, \phi=0.95, s=20$
    \item \textbf{SpeechCommands}: $\psi=0, \zeta=10^{-1}, \phi=0.999, s=10$
\end{enumerate}
This is intended under the assumption that all fair FL algorithms should at least be effective in the same setting of the FL algorithm with the static aggregation scheme (i.e., \texttt{FedAvg}).
Note that client-side optimization in all experiments is done by the Stochastic Gradient Descent (SGD) optimizer.

\subsection{Implementation Details}
All code is implemented in PyTorch \cite{pytorch}, simulating a central parameter server that orchestrates a whole FL procedure and operates \texttt{AAggFF}. 
We further simulate $K$ participating clients having their own local samples, and a communication scheme with the central server. 
All experiments are conducted on a server with 2 Intel\textsuperscript \textregistered 
Xeon\textsuperscript \textregistered 
Gold 6226R CPUs (@ 2.90GHz) and 2 NVIDIA\textsuperscript \textregistered  Tesla\textsuperscript \textregistered V100-PCIE-32GB GPUs.

\newpage
\section{Pseudocode for \texttt{AAggFF}}
\subsection{Pseudocode for \texttt{ClientUpdate}}
\begin{algorithm}[h]
   \caption{\texttt{ClientUpdate}}
   \label{alg:clientupate}
\begin{algorithmic}
   \STATE {\bfseries Input:} number of local epochs $E$, local batch size $B$, local learning rate $\zeta$, global model $\boldsymbol{\theta}$
   \STATE {\bfseries Procedure:}
   \STATE Evaluate the received global model on training set according to eq.~\eqref{eq:local_loss} to yield $F_i\left(\boldsymbol{\theta}\right)$.
   \STATE Set local model $\boldsymbol{\theta}^{(0)}\leftarrow\boldsymbol{\theta}$
   \FOR{$e=0$ {\bfseries to} $E-1$}
    \STATE $\mathcal{B}_e$ $\leftarrow$ Split the client training dataset into batches of size $B$.
    \FOR{mini-batch $\Xi$ in $\mathcal{B}_e$}
     \STATE Update the model $\boldsymbol{\theta}^{(e)}\leftarrow\boldsymbol{\theta}^{(e)} - \frac{\zeta}{B} \sum\limits_{j=1}^B \nabla_{\boldsymbol{\theta}} \mathcal{L}\left(\Xi;\boldsymbol{\theta}^{(e)}\right)$.
    \ENDFOR
    \STATE Set $\boldsymbol{\theta}^{(e+1)}\leftarrow\boldsymbol{\theta}^{(e)}$
   \ENDFOR
   \STATE{\bfseries Return:} $F_i\left(\boldsymbol{\theta}\right)$, $\boldsymbol{\theta} - \boldsymbol{\theta}^{(E)}$.
\end{algorithmic}
\end{algorithm}
For generating a response vector, each client is requested to evaluate the received global model on its \textit{training samples}, 
$\{\xi_k\}_{k=1}^{n_i}$, before the local update.
As a result of the evaluation, the local loss of client $i$ at round $t$, i.e., $F_i\left(\boldsymbol{\theta}^{(t)}\right)$, is calculated as follows.
\begin{equation}
\begin{aligned}
\label{eq:local_loss}
    F_i\left(\boldsymbol{\theta}^{(t)}\right) = \frac{1}{n_i} \sum_{k=1}^{n_i} \mathcal{L}\left(\xi_k;\boldsymbol{\theta}^{(t)}\right),
\end{aligned}
\end{equation}
where $n_i$ is the total sample size of client $i$, $l$ is a loss function specific to the task, and $h_{\boldsymbol{\theta}}$ is a hypothesis realized by the parameter $\boldsymbol{\theta}$.

\subsection{Pseudocode for \texttt{AAggFF-S}}
\begin{algorithm}[h]
   \caption{\texttt{AAggFF-S}}
   \label{alg:aaggff-s}
\begin{algorithmic}
   \STATE {\bfseries Input:} number of clients $K$, total rounds $T$, transformation $\rho$, minimum and maximum of a response $[C_1, C_2]$.
   \STATE {\bfseries Initialize:} mixing coefficients $\boldsymbol{p}^{(1)}=\frac{1}{K}\boldsymbol{1}_K$, global model $\boldsymbol{\theta}^{(1)}\in\mathbb{R}^d$
   \STATE {\bfseries Procedure:} \FOR{$t=0$ {\bfseries to} $T-1$}
    \FOR{each client $i=1,...,K$ {\bfseries in parallel}} 
        \STATE $F_i\left(\boldsymbol{\theta}^{(t)}\right), \boldsymbol{\theta}^{(t)}-\boldsymbol{\theta}^{(t+1)}_i \leftarrow \texttt{ClientUpdate}\left(\boldsymbol{\theta}^{(t)}\right)$
    \ENDFOR
    \STATE Return $\boldsymbol{r}^{(t)}$ according to eq.~\eqref{eq:resp_vec} and $C_1, C_2$.
    \STATE Suffer decision loss $\ell^{(t)}(\boldsymbol{p}^{(t)})$ according to eq.~\eqref{eq:decision_loss}.
    \STATE Return a gradient $\boldsymbol{g}^{(t)}=\nabla\ell^{(t)}\left(\boldsymbol{p}^{(t)}\right)$.
    \STATE Return a mixing coefficient $\boldsymbol{p}^{(t+1)}$ according to eq.~\eqref{eq:ons}.
    \STATE Update a global model $\boldsymbol{\theta}^{(t+1)} = \boldsymbol{\theta}^{(t)}-\sum\limits_{i=1}^K p^{(t+1)}_i \left(\boldsymbol{\theta}^{(t)}-\boldsymbol{\theta}^{(t+1)}_i\right)$.
   \ENDFOR
   \STATE{\bfseries Return:} $\boldsymbol{\theta}^{(T)}$
\end{algorithmic}
\end{algorithm}

\subsection{Pseudocode for \texttt{AAggFF-D}}
\begin{algorithm}[ht]
   \caption{\texttt{AAggFF-D}}
   \label{alg:aaggff-d}
\begin{algorithmic}
   \STATE {\bfseries Input:} number of clients $K$, client sampling ratio $C\in(0,1)$, total rounds $T$, transformation $\rho$, range of a response $[C_1, C_2]$.
   \STATE {\bfseries Initialize:} mixing coefficients $\boldsymbol{p}^{(0)}=\frac{1}{K}\boldsymbol{1}_K$, global model $\boldsymbol{\theta}^{(0)}\in\mathbb{R}^d$
   \STATE {\bfseries Procedure:} \FOR{$t=0$ {\bfseries to} $T-1$}
    \STATE $S^{(t)} \leftarrow \text{Wait until } \min(1, \lfloor C \cdot K \rfloor) \text{ clients are active in a network}$.
    \FOR{each client $i \in S^{(t)}$ {\bfseries in parallel}} 
        \STATE $F_i\left(\boldsymbol{\theta}^{(t)}\right), \boldsymbol{\theta}^{(t)}-\boldsymbol{\theta}^{(t+1)}_i \leftarrow \texttt{ClientUpdate}\left(\boldsymbol{\theta}^{(t)}\right)$
    \ENDFOR
    \STATE Return $\breve{\boldsymbol{r}}^{(t)}$ according to eq.~\eqref{eq:resp_vec}, eq.~\eqref{eq:dr_response}, and  $C_1, C_2$.
    \STATE Suffer decision loss $\ell^{(t)}(\boldsymbol{p}^{(t)})$ according to eq.~\eqref{eq:decision_loss}.
    \STATE Return a gradient estimate $\breve{\boldsymbol{g}}^{(t)}$ according to eq.~\eqref{eq:linearized_gradient}.
    \STATE Return mixing coefficients $\boldsymbol{p}^{(t+1)}$ according to eq.~\eqref{eq:closed_form}).
    \STATE Acquire selected coefficients $\tilde{\boldsymbol{p}}^{(t+1)}$ according to eq.~\eqref{eq:normalized_mix_coefs}. 
    \STATE Update a global model $\boldsymbol{\theta}^{(t+1)} = \boldsymbol{\theta}^{(t)}-\sum\limits_{i \in S^{(t)}} \tilde{p}^{(t+1)}_i \left(\boldsymbol{\theta}^{(t)}-\boldsymbol{\theta}^{(t+1)}_i\right)$.
   \ENDFOR
   \STATE{\bfseries Return:} $\boldsymbol{\theta}^{(T)}$
\end{algorithmic}
\end{algorithm}

After updating whole entries of a decision variable, the server only exploits mixing coefficients of which indices correspond to selected clients.
Denoting as $\tilde{\boldsymbol{p}}^{(t+1)}\in\Delta_{\vert S^{(t)} \vert - 1}$, 
each selected entry is normalized as follows.
\begin{equation}
\begin{aligned}
\label{eq:normalized_mix_coefs}
    \tilde{p}_i^{(t+1)} = \frac{p_i^{(t+1)}}{ \sum_{j \in S^{(t)}} p_j^{(t+1)} }, i \in S^{(t)}
\end{aligned}
\end{equation}
This ensures that selected coefficients also satisfy the condition for being a probability vector (i.e., sum up to 1).

\end{document}